\documentclass[letterpaper]{article}
\usepackage{aaai2027}  

\usepackage[hyphens]{url}
\usepackage{graphicx}
\usepackage{natbib}
\usepackage{caption}
\usepackage{amsmath}
\usepackage{amssymb}
\usepackage{amsthm}
\usepackage{mathtools}
\usepackage{bm}

\usepackage{booktabs}
\usepackage{multirow}
\usepackage{xspace}
\usepackage{pifont}
\usepackage{enumitem}

\usepackage{algorithm}
\usepackage{algorithmic}

\usepackage{newfloat}
\usepackage{listings}
\DeclareCaptionStyle{ruled}{labelfont=normalfont,labelsep=colon,strut=off}
\floatstyle{ruled}
\newfloat{listing}{tb}{lst}{}
\floatname{listing}{Listing}

\nocopyright

\newtheorem{theorem}{Theorem}
\newtheorem{proposition}[theorem]{Proposition}
\newtheorem{lemma}[theorem]{Lemma}
\newtheorem{definition}[theorem]{Definition}
\newtheorem{assumption}[theorem]{Assumption}

\DeclareMathOperator{\KL}{KL}
\DeclareMathOperator{\softplus}{softplus}

\title{Crossing the Margin Cliff:\\
Toward Relearn-Robust LLM Unlearning via Margin Calibration}

\author{
    Xiangyu Yin\textsuperscript{1},
    Jiaxu Liu\textsuperscript{2},
    Zhen Chen\textsuperscript{3},
    Chih-Hong Cheng\textsuperscript{1,4}
}

\affiliations{
    \textsuperscript{1}Chalmers University of Technology, Gothenburg, Sweden\\
    \textsuperscript{2}Imperial College London, London, United Kingdom\\
    \textsuperscript{3}University of Liverpool, Liverpool, United Kingdom\\
    \textsuperscript{4}Carl von Ossietzky University of Oldenburg, Oldenburg, Germany
}

\begin{document}

\maketitle

\begin{abstract}
Large language model unlearning is consistently fragile under relearn attacks. On TOFU, fine-tuning on twenty forget examples substantially recovers held-out forget-set ROUGE for every method we evaluate, and we trace this fragility to optimization geometry. The per-token answer margin of fourteen post-hoc methods spanning gradient, preference, and distillation families converges into a narrow band above the retain reference in 41 of 42 method--size cells, a regularity we call the margin cliff. We prove that this cliff follows whenever the retain coupling holds the diagnostic log-odds of forget content above a floor, a condition that token-saturating losses induce at stationarity and that we verify directly on 34 of 42 cells. Margin Calibration (\textsc{MC}) is a plug-in polish adding a non-saturating margin hinge anchored at the reference's per-token margin plus a KL probe on a disjoint instruction corpus, restoring forget-side pressure where the native loss saturates. Under a stated gradient-dominance condition, whose on-trajectory gradient signature we measure by instrumenting the polish, its stationary set lies on the cliff-crossing side, yielding an attack-budget upper bound on the relearn margin lift. Across TOFU (three Llama-3 sizes, three forget tiers), MUSE-News on Llama-2-7B-hf, and a Phi-3.5 panel, a single frozen configuration wins all 14 head-to-head forget aggregates and all populated relearn cells (panel-mean post-attack ROUGE-L $0.41$ to $0.18$) and lowers raw membership AUC on 13/14, with reduced retain-side utility as the main cost. A deployment variant matches these gains without a retain-trained reference.
\end{abstract}

\section{Introduction} \label{sec:intro} Machine unlearning aims to remove the influence of a forget set $\mathcal{D}_f$ from a trained LLM without full retraining \citep{eldan2023whoshp,maini2024tofu}, motivated by legal, privacy, and deletion requirements. Under a realistic relearn-attack threat, an adversary fine-tunes the released model on a small auxiliary subset of $\mathcal{D}_f$ to recover held-out forgotten content. Across the fourteen published unlearning methods we evaluate, spanning gradient-based \citep{maini2024tofu,li2024wmdp}, preference-based \citep{zhang2024npo,fan2024simnpo}, and distillation-based losses \citep{dong2024undial,singh2025jensun}, a twenty-sample LoRA attacker (hereafter K20-LoRA) substantially restores this held-out behavior on TOFU. Recent defenses harden the canonical \textsc{NPO} objective with perturbation, curvature, noise, or invariance terms \citep{sheshadri2024lat,tamirisa2024tamper,fan2025sam,dang2025rna,wang2025ilu}, yet on our panel none eliminates the K20-LoRA vulnerability, suggesting a structural limitation of the underlying forget objective rather than an implementation issue. We characterize this failure through a per-token answer margin diagnostic, the log-probability gap between the gold and strongest alternative token at each answer's maximum-entropy position, compared against a retain-only reference $\theta_{\text{ref}}$. Across 14 methods and three Llama-3 sizes, the converged models lie in a narrow margin band above $\theta_{\text{ref}}$ in 41 of 42 cells (Figure~\ref{fig:cliff}), a positive \emph{cliff gap} $\Delta(\hat\theta)$ we call the \emph{margin cliff}, and where a method stops on this axis predicts how easily it relearns. Geometrically, once a token-saturating objective has suppressed a forget token its gradient there vanishes, so the retain regularizer fixes the stationary margin above $\theta_{\text{ref}}$'s, and the defenses above add terms without removing this saturation. To cross the cliff, the forget-side gradient structure must change. Margin Calibration (\textsc{MC}) adds a non-saturating forget term, a softplus hinge anchored at $\theta_{\text{ref}}$'s per-token margin whose gradient scales with the margin gap and so survives suppression, plus a KL probe on a disjoint instruction corpus that limits drift on non-forget behavior. \textsc{MC} runs as a short LoRA polish on any already-unlearned model, keeping its own loss running (a strict plug-in), and a deployment-phase variant instead anchors at the pre-unlearning model $\theta_0$, needing no retain-trained reference. \paragraph{Contributions.} \begin{enumerate}[leftmargin=1.5em,topsep=2pt,itemsep=2pt] \item We expose the \emph{margin cliff}, a consistent failure mode across fourteen published unlearning methods from all three loss families, and explain it as a stationarity property of retain-regularized unlearning driven by a diagnostic log-odds floor, with token saturation one mechanism inducing that floor. \item We introduce Margin Calibration (\textsc{MC}), a non-saturating margin-anchored objective with cliff-crossing conditions and an attack-budget upper bound on the post-attack forget margin that empirically predicts relearn recovery. \item Under a single hyperparameter configuration with no per-method tuning, \textsc{MC} improves post-attack robustness on every populated cell of a 97-cell cross-axis stress matrix covering 14 methods, seeds, forget tiers, three model sizes, the MUSE-News benchmark, and a Phi-3.5 panel, with a deployment-phase variant achieving comparable gains without any retain-trained reference at polishing time. \end{enumerate}

\section{Related Work} \paragraph{LLM unlearning methods.} Post-hoc LLM unlearning broadly falls into three families. Gradient-based methods directly suppress forget-set likelihood~\citep{eldan2023whoshp,maini2024tofu,li2024wmdp,entesari2025pdu}. Preference-based methods adapt RLHF-style objectives via \textsc{NPO}~\citep{zhang2024npo} and reference-free \textsc{SimNPO}~\citep{fan2024simnpo}. Distillation-based methods steer outputs toward a fixed anchor (\textsc{UNDIAL}~\citep{dong2024undial}, \textsc{JensUn}~\citep{singh2025jensun}), while representation-guided approaches have also been explored~\citep{hu2025falcon}. Our 14-method panel covers the three loss families above, which \S\ref{sec:m-kkt} places under a single cliff bound. \paragraph{Documented fragility and robustness defenses.} Post-hoc unlearning is substantially reversible under small-data relearning \citep{lynch2024eight,hu2024jogging,ren2025openunlearning}. Similar recovery-driven fragility has been observed in neighboring privacy defenses~\citep{chen2026fragile}. Recent defenses augment the unlearning loss with adversarial perturbations~\citep{sheshadri2024lat,tamirisa2024tamper}, curvature regularization~\citep{fan2025sam}, hidden-state noise~\citep{dang2025rna}, or invariance regularization~\citep{wang2025ilu}, with a separate line at pretraining time~\citep{hu2024goldfish}. These loss-augmenting defenses preserve the original saturating component, so they remain subject to cliff-side stationarity in our analysis and experiments, while \textsc{MC}'s non-saturating hinge changes the stationary set rather than re-weighting it. \paragraph{Theory and margin-based objectives.} Prior LLM unlearning theory is largely method-specific, including \textsc{NPO}'s DPO connection~\citep{zhang2024npo,rafailov2024dpo} and \textsc{RMU}'s representation-distance analysis~\citep{li2024wmdp}. We are not aware of prior work that characterizes relearning vulnerability through the stationarity (Karush--Kuhn--Tucker, KKT) structure of retain-regularized unlearning or connects post-unlearning margin geometry to an attack-budget bound on the post-attack margin. Our per-token answer margin relates to the logit margin in adversarial robustness~\citep{carlini2017towards} and decision margin in calibration~\citep{pereyra2017regularizing}. Broader robustness work has also studied logit-oriented regularization and representation-space sensitivity~\citep{yin2024loat,yin2024rerogcrl}. \textsc{MC}'s reference anchor operates on this per-token margin rather than a full output distribution, exposing the cliff and enabling a tractable KKT analysis.
\section{Method}
\label{sec:method}

We define the margin diagnostic and cliff gap (\S\ref{sec:m-setup}), document the cliff (\S\ref{sec:m-cliff}), explain it through KKT stationarity (\S\ref{sec:m-kkt}), introduce \textsc{MC} as a non-saturating correction (\S\ref{sec:m-v3}), and bound the attacker's margin lift (\S\ref{sec:m-thm2}).

\subsection{Problem Setup and Notation}
\label{sec:m-setup}

\paragraph{Unlearning task.}
Let $\theta_0$ be the target model, fine-tuned from an instruction-tuned LLM on $\mathcal{D}_f \cup \mathcal{D}_r$ (disjoint forget / retain corpora of $(x, y)$ pairs, $y = (y_1, \dots, y_T)$). Following TOFU~\citep{maini2024tofu}, $\theta_0$ bounds retain-side utility and the retain reference $\theta_{\text{ref}}$ (trained on $\mathcal{D}_r$ alone) is the forget-side oracle. An unlearning algorithm maps $\theta_0$ to $\hat\theta$ in time $\ll$ retraining, and we evaluate \emph{forget strength} (closeness on $\mathcal{D}_f$ to $\theta_{\text{ref}}$), \emph{utility} (closeness on $\mathcal{D}_r$ to $\theta_0$), and \emph{robustness} (survival of both under a relearn attacker), our focus being the third, the one axis that routinely fails.

\begin{figure*}[t]
  \centering
  \includegraphics[width=\textwidth]{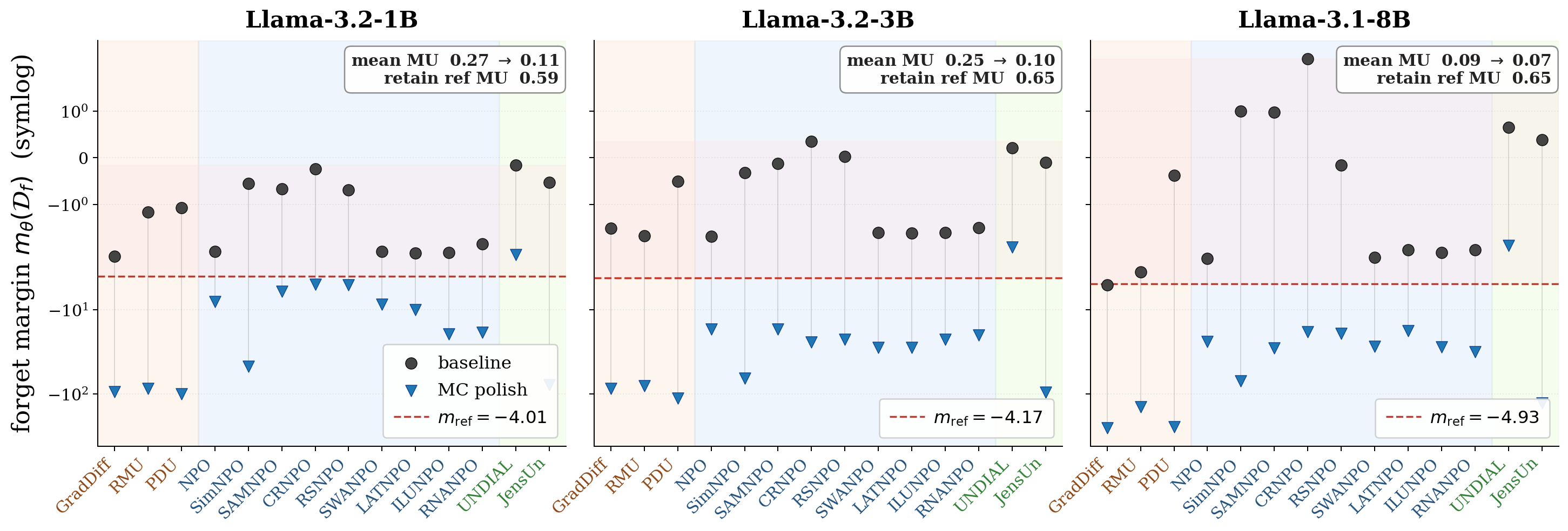}
  \caption{Forget-set margin diagnostic $m_\theta(\mathcal{D}_f)$ (Eq.~\eqref{eq:margin-agg}) for 14 baselines (gray) and \textsc{MC} (blue) at three Llama-3 sizes, ordered by loss family. Red dashed line is $m_{\mathrm{ref}}$, shaded band is the cliff gap. Mean TOFU MU annotated.}
  \label{fig:cliff}
\end{figure*}

\paragraph{Per-token answer margin.}
For a model $\theta$ on prompt $x$ and answer prefix $y_{<t}$, the per-token \emph{answer margin} of the gold token $y_t$ is
\begin{equation}
\label{eq:margin-def}
\small
m_\theta(t; x, y)  :=  \log p_\theta(y_t \mid x, y_{<t}) - \log \max_{v \neq y_t} p_\theta(v \mid x, y_{<t}),
\end{equation}
the log-probability gap between gold and its strongest competitor, positive iff gold is the argmax and unbounded below as a wrong token gains mass. A uniform average over answer positions is dominated by easy continuation tokens on which even $\theta_{\text{ref}}$ is confident (App.~\ref{app:logodds-verify}), so we aggregate at each sample's maximum-entropy answer position, where competitor mass is largest and the margin is least saturated. This is a proxy for the answer's most fragile position, not a claim about where content resides, validated operationally by its predictive power for post-attack recovery (App.~\ref{app:bridge}). Let $\hat t_\theta(x, y) := \arg\max_t \mathcal{H}\!\left(p_\theta(\cdot \mid x, y_{<t})\right)$ be the maximum-entropy answer position ($\mathcal{H}$ the Shannon entropy). The dataset diagnostic is
\begin{equation}
\label{eq:margin-agg}
m_\theta(\mathcal{D})  :=  \mathbb{E}_{(x,y)\sim\mathcal{D}} \left[ m_\theta(\hat t_\theta; x, y)\right],
\end{equation}
and we write $m_{\text{ref}} := m_{\theta_{\text{ref}}}$. (The rule $\hat t_\theta$ enters only this diagnostic, never the training loss or the operational metrics.) We define the \emph{cliff gap} of an unlearned model $\hat\theta$ as
\begin{equation}
\label{eq:cliff-gap}
\Delta(\hat\theta)  :=  m_{\hat\theta}(\mathcal{D}_f) - m_{\text{ref}}(\mathcal{D}_f).
\end{equation}
$\Delta \le 0$ means $\hat\theta$ has pressed the forget margin down to the retain reference's level and crossed the cliff.

Unlike NLL, the margin records whether gold remains the argmax, distinguishing diffuse uncertainty from confident wrong predictions, and is therefore our comparison axis and the anchor for a non-saturating forget loss (\S\ref{sec:m-v3}). We write $\tilde\theta = \mathcal{R}(\hat\theta, K)$ for the post-attack model with forget-sample budget $K$ (and $\tilde\theta_N$ when parametrized by attacker steps $N$), and use $\mathcal{D}_a$ for an auxiliary instruction corpus (Alpaca) disjoint from $\mathcal{D}_f \cup \mathcal{D}_r$.

\subsection{The Margin Cliff Observation}
\label{sec:m-cliff}

We measure $m_{\hat\theta}(\mathcal{D}_f)$ on TOFU forget10 (Llama-3.2-1B-Instruct) for fourteen methods spanning the gradient, preference, and distillation families (full list in \S\ref{sec:exp-setup}). Gradient ascent is treated separately (App.~\ref{app:gradascent}). After convergence, the methods stop in a narrow band while $\theta_{\text{ref}}$ sits well below it, with $\Delta(\hat\theta_{\text{base}}) > 0$ in 41 of 42 (method, size) cells (Figure~\ref{fig:cliff}, per-tier confirmation in App.~\ref{app:cliff-figure}). The single sub-reference cell, GradDiff at 8B ($\Delta \approx -0.5$), crosses only degenerately with collapsed general utility (App.~\ref{app:gradascent}), by destruction rather than calibrated forgetting. Where each method stops on this axis predicts relearnability (Fig.~\ref{fig:kcurve}, Fig.~\ref{fig:attack_bars}). The structural question is why every non-degenerate cell stops on the same side.

\subsection{A KKT Account of the Cliff}
\label{sec:m-kkt}

The preference family and JensUn share a structural property, in that their per-token forget gradient vanishes as that token's probability is suppressed (Def.~\ref{def:forget-saturating}). The gradient family (GradDiff, RMU, PDU) and the flattened-teacher distillation loss UNDIAL have bounded but non-vanishing forget gradients. For them the saturation mechanism does not apply, and the same cliff bound follows by assuming the diagnostic floor of Asm.~\ref{asm:overlap} directly (App.~\ref{app:loss-saturation}). All cases reduce to first-order stationarity of the joint unlearning problem with forget loss $\mathcal{L}_f$, retain NLL $\mathcal{L}_r$, and tradeoff weight $\lambda > 0$,
\begin{equation}
\label{eq:standard-unlearn}
\min_\theta    \mathcal{L}_f(\theta;\mathcal{D}_f) + \lambda \mathcal{L}_r(\theta;\mathcal{D}_r).
\end{equation}
Problem~\eqref{eq:standard-unlearn} is the Lagrangian of constrained forgetting, so its stationary points are KKT points of that problem, and we use the two views interchangeably.

\begin{definition}[Token-saturating forget loss]
\label{def:forget-saturating}
Write $\mathcal{L}_f(\theta;\mathcal{D}_f) = \mathbb{E}_{(x,y)}\left[\ell_f(\theta; x, y)\right]$ with
$\ell_f$ the per-sample loss. $\mathcal{L}_f$ is \emph{token-saturating} if there is a nondecreasing
$\rho:[0,1]\to\mathbb{R}_{\ge 0}$ with $\rho(p)\to 0$ as $p\to 0^+$ such that, for every answer token
of every sample,
\[
\left|\frac{\partial \ell_f}{\partial z_{y_t}}\right|
 \le
\rho\left(p_\theta(y_t \mid x, y_{<t})\right),
\]
where $z_{y_t}$ is the gold-token logit at position $t$.
The loss's marginal incentive to suppress a gold token further thus vanishes as that token's
probability is driven toward zero, \emph{token by token}, regardless of how confident the model
remains on other (easy) answer tokens. (No per-token decomposition of $\ell_f$ is required.)
\end{definition}

\paragraph{Examples.}
NPO satisfies Def.~\ref{def:forget-saturating} on the unlearning regime observed at every checkpoint we probe, with $\rho(p) \propto p^\beta$, and JensUn satisfies it unconditionally. UNDIAL does \emph{not}, its cross-entropy toward a flattened teacher keeping an $O(1)$ pull (per-loss derivations in App.~\ref{app:loss-saturation}).

Assumption~\ref{asm:overlap} (formal in App.~\ref{app:formal}) quantifies prompt-level coupling between $\mathcal{D}_r$ and $\mathcal{D}_f$ by a scalar $\epsilon \in [0, 1]$, where $\epsilon=0$ means disjoint prompt n-grams and $\epsilon=1$ means complete overlap, and states that at stationarity of \eqref{eq:standard-unlearn} the retain regularizer holds the \emph{diagnostic log-odds level} $\ell(\theta) := \mathbb{E}_{\mathcal{D}_f}[\log\frac{p_{\hat t}}{1 - p_{\hat t}}]$ (average gold log-odds at the diagnostic positions of Eq.~\eqref{eq:margin-agg}) above a floor $\ell_\star$ that is non-decreasing in $\epsilon$ and in $\lambda$. On TOFU forget10 we measure $\epsilon \approx 0.44$ (word-level bigram coverage of forget Q\&A by retain Q\&A), and $\ell(\theta)$ is measured directly per checkpoint (App.~\ref{app:logodds-verify}).

\begin{theorem}[Margin cliff]
\label{thm:kkt-cliff}
Suppose that, with $\epsilon > 0$ and $\lambda > 0$, the retain regularizer enforces the diagnostic
log-odds floor $\ell_\star$ of Asm.~\ref{asm:overlap} at a strict local minimum $\theta^\star$ of
\eqref{eq:standard-unlearn}. Then
\[
\Delta(\theta^\star)  \ge  \ell(\theta^\star) - m_{\text{ref}}(\mathcal{D}_f)  \ge  \ell_\star - m_{\text{ref}}(\mathcal{D}_f)  =:  C ,
\]
and $C > 0$ whenever $\ell_\star > m_{\text{ref}}(\mathcal{D}_f)$. The bound uses only the floor and an
unconditional competitor-mass inequality, with no property of $\mathcal{L}_f$ beyond stationarity
entering its proof (App.~\ref{app:proofs}).
\end{theorem}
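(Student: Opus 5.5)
The proof splits into a pointwise inequality followed by an averaging step. The key observation is an unconditional competitor-mass inequality at any single position. Fix a sample $(x,y)$ and position $t$, write $p := p_\theta(y_t\mid x,y_{<t})$, and let $q := \max_{v\neq y_t} p_\theta(v\mid x,y_{<t})$ be the strongest competitor's probability. The competitors together carry mass $1-p$, so $q \le 1-p$. Therefore
\[
m_\theta(t;x,y) = \log p - \log q \;\ge\; \log p - \log(1-p) = \log\frac{p}{1-p}.
\]
This holds for every $\theta$, every sample, and every position, with no assumption on $\mathcal{L}_f$.

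Next I apply the pointwise bound at the diagnostic position $\hat t_{\theta^\star}(x,y)$ and take $\mathbb{E}_{(x,y)\sim\mathcal{D}_f}$. By Eq.~\eqref{eq:margin-agg} this gives $m_{\theta^\star}(\mathcal{D}_f) \ge \mathbb{E}_{\mathcal{D}_f}[\log\frac{p_{\hat t}}{1-p_{\hat t}}] = \ell(\theta^\star)$. Subtracting $m_{\text{ref}}(\mathcal{D}_f)$ from both sides and using the definition \eqref{eq:cliff-gap} yields the first inequality, $\Delta(\theta^\star) \ge \ell(\theta^\star) - m_{\text{ref}}(\mathcal{D}_f)$.

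For the second inequality I invoke Asm.~\ref{asm:overlap}. Its hypotheses are exactly those stated in the theorem: $\epsilon>0$, $\lambda>0$, and $\theta^\star$ a strict local minimum, hence a stationary (KKT) point of \eqref{eq:standard-unlearn}. Under these hypotheses the assumption gives $\ell(\theta^\star)\ge \ell_\star$, so $\Delta(\theta^\star)\ge \ell_\star - m_{\text{ref}}(\mathcal{D}_f) = C$. The positivity claim is immediate: $C>0$ exactly when $\ell_\star > m_{\text{ref}}(\mathcal{D}_f)$. This step uses stationarity only to license the floor, which is why no property of $\mathcal{L}_f$ enters, as the statement says.

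The main obstacle is bookkeeping rather than depth. The first point is that $\ell$ must be evaluated at the same diagnostic position $\hat t_{\theta^\star}$ that defines $m_{\theta^\star}$. If $\ell$ were defined with $\theta_{\text{ref}}$'s positions instead, the averaging step would not go through, so I will fix the convention $\hat t = \hat t_{\theta^\star}$ explicitly. The second point is that the log-odds and margin must be finite. I will note that softmax outputs satisfy $p\in(0,1)$, which makes both finite and the expectations well defined; TOFU is a finite dataset, so integrability is not an issue.
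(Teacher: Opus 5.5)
Your proposal is correct and follows the paper's proof: the unconditional competitor-mass bound $\max_{v\neq y_t} p_\theta(v) \le 1-p_t$ at each position (the paper's Step~3), averaged at the diagnostic positions $\hat t_{\theta^\star}$ and then combined with the floor of Asm.~\ref{asm:overlap} (Step~4). The paper's Steps~1--2 on token saturation only motivate the floor and are not used in the inequality, so omitting them is fine.
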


\begin{proposition}[When the floor holds]
\label{prop:floor-mechanism}
The floor of Asm.~\ref{asm:overlap} is a property of the loss at stationarity, not a blanket assumption.
For a token-saturating loss (Def.~\ref{def:forget-saturating}) with $\epsilon > 0$ and $\lambda > 0$, a
first-order stationarity argument (App.~\ref{app:proofs}, Steps~1--2) shows the retain regularizer
holds $\ell(\theta^\star)$ up while the opposing forget push decays like $\rho(p_t)$, motivating the
floor mechanistically for that class. For the bounded-gradient family (GradDiff, RMU, PDU) and the
anchor-saturating UNDIAL, no such derivation is available and the floor is assumed directly. Token
saturation is thus a sufficient mechanism for the floor, not a hypothesis of
Theorem~\ref{thm:kkt-cliff}.
\end{proposition}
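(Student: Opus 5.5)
The statement to prove is Proposition~\ref{prop:floor-mechanism}: for a token-saturating loss, first-order stationarity of \eqref{eq:standard-unlearn} forces the retain term to keep $\ell(\theta^\star)$ from collapsing. The plan is to work in logit coordinates at the diagnostic positions. Write the stationarity condition $\nabla_\theta \mathcal{L}_f(\theta^\star) + \lambda \nabla_\theta \mathcal{L}_r(\theta^\star) = 0$ and use the chain rule through the gold logits: $\nabla_\theta \mathcal{L}_f = \mathbb{E}\bigl[\sum_t \tfrac{\partial \ell_f}{\partial z_{y_t}} \nabla_\theta z_{y_t} + (\text{non-gold terms})\bigr]$. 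Project this identity onto a direction $u$ that raises the gold logit at the diagnostic position $\hat t$ of each forget sample. Under $\epsilon>0$, such a direction also raises retain likelihood on overlapping prompt n-grams, so $-\langle \nabla \mathcal{L}_r, u\rangle \ge \epsilon\, c_r > 0$ for some constant $c_r$. This constant is the formal content of the coupling in Asm.~\ref{asm:overlap}, which I would state as a lower bound on the directional retain gain, proportional to $\epsilon$.

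\textbf{Step 1: balance inequality.} Taking the inner product of stationarity with $u$, the retain side contributes at least $\lambda\epsilon c_r$. The forget side is bounded by Def.~\ref{def:forget-saturating}: it is at most $G\,\mathbb{E}[\rho(p_{\hat t})]$, where $G$ bounds $\|\nabla_\theta z\|$ along $u$ locally around $\theta^\star$. The bound on $G$ follows from strict local minimality, since it gives a compact neighborhood on which the gradients are bounded. Stationarity therefore forces
\[
\mathbb{E}\left[\rho(p_{\hat t})\right] \ge \lambda \epsilon c_r / G .
\]

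\textbf{Step 2: invert $\rho$.} Since $\rho$ is nondecreasing with $\rho(p)\to 0$ as $p\to 0^+$, the previous inequality prevents $p_{\hat t}$ from being uniformly near zero. Define $p_\star := \inf\{p : \rho(p) \ge \lambda\epsilon c_r/G\}$. Then, at least in the Jensen-compatible averaged sense, $p_{\hat t} \gtrsim p_\star$. Taking $\ell_\star := \log\frac{p_\star}{1-p_\star}$ gives the floor. Monotonicity of $\rho$ makes $p_\star$, and hence $\ell_\star$, non-decreasing in $\lambda$ and in $\epsilon$, as Asm.~\ref{asm:overlap} requires.

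\textbf{Remaining cases.} For GradDiff, RMU, PDU and UNDIAL, $|\partial \ell_f/\partial z_{y_t}|$ stays bounded away from zero. The balance inequality from Step 1 is then satisfiable at arbitrarily small $p$, so no floor can be derived this way. I would say this explicitly and only remark that the floor is assumed for these losses.

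\textbf{Main obstacle.} The hard step is passing from an expectation of $\rho(p_{\hat t})$ to a bound on the expectation of log-odds. Log-odds is unbounded below, so a few samples with $p_{\hat t}\to 0$ could drag $\ell$ down while others keep $\mathbb{E}\rho$ large. I would first try a per-sample version of Step 1, choosing $u$ localized to each sample's diagnostic token, which is legitimate under a per-sample coupling form of Asm.~\ref{asm:overlap}. That yields $\rho(p_{\hat t}) \ge \lambda\epsilon c_r/G$ pointwise, and averaging the resulting pointwise log-odds bound then gives the floor directly. A second subtlety is that $\hat t$ depends on $\theta$, but the bound holds at every position, so fixing $\hat t=\hat t_{\theta^\star}$ suffices. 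Given these approximations, I would present the result as a mechanistic motivation, matching the proposition's wording, rather than a sharp constant.
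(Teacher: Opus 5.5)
Your mechanism is the paper's. Steps~1--2 of the proof of Theorem~\ref{thm:kkt-cliff} balance a retain pull at stationarity against a forget push that decays like $\rho(p_t)$, and App.~\ref{app:loss-saturation} sets aside the bounded-gradient losses exactly as you do. The difference is that the paper keeps those steps qualitative (they ``are not used quantitatively''). Your attempt to turn them into $\mathbb{E}[\rho(p_{\hat t})]\ge\lambda\epsilon c_r/G$ breaks at a point you do not flag.

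Definition~\ref{def:forget-saturating} bounds only the gold-logit partial at each position, and only by $\rho$ of that position's own probability. The projection $\langle\nabla_\theta\mathcal{L}_f,u\rangle$ picks up every logit that the parameter direction $u$ moves. These include gold logits at easy positions, where the definition only gives $\rho(p_t)$ with $p_t$ near one. They also include non-gold logits, which the definition does not control at all: JensUn's Jensen--Shannon pull on non-gold logits need not vanish as $p_{y_t}\to 0$, and vanishes only as $p_\theta$ approaches its target. Nothing in the definition prevents these terms from balancing $\lambda\nabla_\theta\mathcal{L}_r$ while $p_{\hat t}\to 0$, so the balance inequality does not follow.

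Nor can you confine the projection to the diagnostic logits. Treated as free coordinates, they carry no retain derivative at all, because the coupling exists only through shared parameters, and that sharing is what drags the other logits in. The per-sample localized $u$ you propose is harder still: it needs a direction that moves one logit of one sample and no other forget logit while keeping a guaranteed retain component. For the NPO family, on the regime $r_s\le 0$, the shared envelope $2\sigma(\beta\sum_s r_s)\le 2\sigma(\beta r_{\hat t})$ multiplies every logit partial of a sample, so a bound of your form can be rescued there; Definition~\ref{def:forget-saturating} alone does not give it. Finally, $c_r$ does not follow from the bigram statistic $\epsilon$. A lower bound on the retain gradient at $\theta^\star$ is precisely what the paper labels an assumption in Step~2, so your derivation relocates the assumption rather than removing it.

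Even if repaired, the per-sample route yields the worst-case per-token floor that the paper explicitly sets aside as empirically vacuous. To certify $\ell_\star>m_{\text{ref}}=-4.01$, it would need $p_{\hat t}>0.018$ at every diagnostic position. Yet diagnostic-position gold probabilities average as low as $0.8\%$ on some 1B bases, and minimum gold probabilities reach $10^{-13}$--$10^{-5}$. That is why Assumption~\ref{asm:overlap} is stated at the aggregate level, and why the quantitative load falls on the measured $\ell(\hat\theta)$ ($34/42$ cells). Your closing choice to present the result as motivation is right, but then state the balance as motivation, not as an inequality forced by stationarity.

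You also omit the one checkable part of the proposition. Token saturation is not a hypothesis of Theorem~\ref{thm:kkt-cliff}, because its proof (Steps~3--4) uses only the unconditional competitor-mass inequality $m_\theta(t)\ge\log\frac{p_t}{1-p_t}$ and the floor. Saturation therefore enters solely as one route to the floor.
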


The proof is short (App.~\ref{app:proofs}). Every per-token margin is bounded below by its gold log-odds, so averaging at the diagnostic positions gives
$m_{\hat\theta}(\mathcal{D}_f) \ge \ell(\theta^\star)$, which the floor lifts above the (more negative)
retain reference. Because the bound passes entirely through the measurable
$\ell(\theta^\star)$, the measured value certifies the conclusion directly, with $\ell(\theta^\star) > m_{\text{ref}}$ and hence $\Delta(\theta^\star) > 0$ holding with no unverified assumption
on $34$ of $42$ (method, size) cells. This certification route also needs no idealized convergence,
since it evaluates the measured $\ell$ at the released checkpoint whether or not training reached an
exact stationary point. In the remaining cells the log-odds relaxation is too loose to
certify, and the measured $\Delta$ itself stays positive in all of them except the degenerate
GradDiff-8B cell of \S\ref{sec:m-cliff} (App.~\ref{app:logodds-verify}).

\paragraph{Longer training is not a substitute.}
The floor is a property of stationarity, not of budget. Further epochs of the same objective leave the
stationary set, and hence the terminal margin, unchanged, which is why the $14$ baselines land in the
same narrow band despite heterogeneous objectives. Descending past the floor requires changing the
forget-side gradient structure, which is what \textsc{MC} does.

\begin{table*}[t]
  \centering
  \small
  \setlength{\tabcolsep}{4pt}
  \caption{\textbf{14-method head-to-head on TOFU \texttt{forget10}} (Llama-3.2-1B). Each cell shows \emph{baseline} $\to$ \textbf{\textsc{MC}}, \textbf{bold} marks \textsc{MC} improvement. F.agg averages four forget metrics; MIA.agg averages five raw AUCs (per-detector distance-from-chance advantage improves $6/14$ due to overshoot past chance, App.~\ref{app:mia-breakdown}). KS-$p$ is the \textsc{MC} checkpoint's forget-quality KS $p$-value (single column; bold when $\geq 0.05$). Baseline columns: official evaluator; \textsc{MC} columns: pipeline evaluator, so the tabulated MU drop overstates the cost (panel mean $0.27 \to 0.11$ under a single evaluator, \S\ref{sec:exp-setup}, App.~\ref{app:cliff-numbers}).}
  \label{tab:t1-main}
  \resizebox{\textwidth}{!}{%
  \begin{tabular}{ll cccc cc}
    \toprule
    Base & Family & F.agg ($\downarrow$) & MIA.agg ($\downarrow$) & MU ($\uparrow$) & KS-$p$ & K20-LoRA ($\downarrow$) & K20-FPFT ($\downarrow$) \\
    \midrule
    \texttt{GradDiff} & GradDiff & 0.357$\!\to\!$\textbf{0.018} & 0.791$\!\to\!$\textbf{0.228} & 0.470$\!\to\!$0.109 & 0.0000 & 0.429$\!\to\!$\textbf{0.158} & 0.408$\!\to\!$\textbf{0.088} \\
    \texttt{RMU} & GradDiff & 0.279$\!\to\!$\textbf{0.002} & 0.394$\!\to\!$0.466 & 0.571$\!\to\!$0.000 & 0.0021 & 0.471$\!\to\!$\textbf{0.029} & 0.398$\!\to\!$\textbf{0.031} \\
    \texttt{PDU} & GradDiff & 0.024$\!\to\!$\textbf{0.009} & 0.472$\!\to\!$\textbf{0.226} & 0.000$\!\to\!$\textbf{0.127} & 0.0000 & 0.428$\!\to\!$\textbf{0.152} & 0.437$\!\to\!$\textbf{0.027} \\
    \midrule
    \texttt{SimNPO} & SimNPO & 0.533$\!\to\!$\textbf{0.010} & 0.889$\!\to\!$\textbf{0.129} & 0.594$\!\to\!$0.179 & 0.0000 & 0.458$\!\to\!$\textbf{0.025} & 0.450$\!\to\!$\textbf{0.026} \\
    \midrule
    \texttt{NPO} & NPO & 0.296$\!\to\!$\textbf{0.041} & 0.716$\!\to\!$\textbf{0.165} & 0.300$\!\to\!$0.211 & 0.0000 & 0.348$\!\to\!$\textbf{0.246} & 0.342$\!\to\!$\textbf{0.246} \\
    \texttt{SAMNPO} & NPO & 0.762$\!\to\!$\textbf{0.104} & 0.980$\!\to\!$\textbf{0.241} & 0.599$\!\to\!$0.031 & 0.0085 & 0.414$\!\to\!$\textbf{0.201} & 0.401$\!\to\!$\textbf{0.146} \\
    \texttt{ILUNPO} & NPO & 0.300$\!\to\!$\textbf{0.027} & 0.723$\!\to\!$\textbf{0.106} & 0.337$\!\to\!$0.054 & 0.0001 & 0.343$\!\to\!$\textbf{0.205} & 0.335$\!\to\!$\textbf{0.122} \\
    \texttt{LATNPO} & NPO & 0.298$\!\to\!$\textbf{0.043} & 0.723$\!\to\!$\textbf{0.160} & 0.336$\!\to\!$0.162 & 0.0010 & 0.342$\!\to\!$\textbf{0.247} & 0.337$\!\to\!$\textbf{0.219} \\
    \texttt{RNANPO} & NPO & 0.277$\!\to\!$\textbf{0.005} & 0.735$\!\to\!$\textbf{0.114} & 0.323$\!\to\!$0.150 & 0.0000 & 0.366$\!\to\!$\textbf{0.165} & 0.357$\!\to\!$\textbf{0.041} \\
    \texttt{RSNPO} & NPO & 0.528$\!\to\!$\textbf{0.114} & 0.948$\!\to\!$\textbf{0.254} & 0.500$\!\to\!$0.070 & \textbf{0.2705} & 0.442$\!\to\!$\textbf{0.195} & 0.427$\!\to\!$\textbf{0.154} \\
    \texttt{SWANPO} & NPO & 0.300$\!\to\!$\textbf{0.052} & 0.724$\!\to\!$\textbf{0.268} & 0.338$\!\to\!$0.110 & 0.0006 & 0.349$\!\to\!$\textbf{0.253} & 0.339$\!\to\!$\textbf{0.145} \\
    \texttt{CRNPO} & NPO & 0.890$\!\to\!$\textbf{0.026} & 0.998$\!\to\!$\textbf{0.228} & 0.592$\!\to\!$0.040 & \textbf{0.7934} & 0.579$\!\to\!$\textbf{0.206} & 0.572$\!\to\!$\textbf{0.181} \\
    \midrule
    \texttt{UNDIAL} & Distil & 0.505$\!\to\!$\textbf{0.283} & 0.938$\!\to\!$\textbf{0.651} & 0.574$\!\to\!$0.207 & 0.0085 & 0.381$\!\to\!$\textbf{0.282} & 0.390$\!\to\!$\textbf{0.284} \\
    \texttt{JensUn} & Distil & 0.708$\!\to\!$\textbf{0.039} & 0.995$\!\to\!$\textbf{0.210} & 0.582$\!\to\!$0.090 & 0.0043 & 0.447$\!\to\!$\textbf{0.147} & 0.423$\!\to\!$\textbf{0.079} \\
    \midrule
    \textbf{\textsc{MC} wins (of 14)} & & \textbf{14} & \textbf{13}\,(raw) & 1 & \textbf{2}\,(KS$\geq 0.05$) & \textbf{14} & \textbf{14} \\
    \bottomrule
  \end{tabular}%
  }
\end{table*}

\subsection{Margin Calibration}
\label{sec:m-v3}

Theorem~\ref{thm:kkt-cliff} implies a cliff-crossing loss must keep forget-side pressure after probability suppression. Gradient ascent does, but diverges, with margins running to $-\infty$ and utility collapsing within tens of steps (\S\ref{sec:exp-ablations}). \textsc{MC}, a LoRA polish with a single fixed configuration reused across all sweeps (\S\ref{sec:exp-setup}), anchors this pressure at a per-token margin target via a one-sided softplus hinge.

\paragraph{Anchors.}
$\theta_{\text{mar}}$ is the \emph{margin anchor} defining the per-token target $m_{\text{mar}}(t) := m_{\theta_{\text{mar}}}(t; x, y)$ on $\mathcal{D}_f$, and $\theta_{\text{anc}}$ is the \emph{behavior anchor} used by the KL probe. The reference-anchored variant (main results) sets $\theta_{\text{mar}} = \theta_{\text{anc}} = \theta_{\text{ref}}$. The deployment-phase variant (\S\ref{sec:exp-deploy}) applies when no retain-trained reference exists, taking the pre-unlearning target model as margin anchor, $\theta_{\text{mar}} = \theta_0$. Under this anchor the forget hinge is largely inactive, so the variant replaces the KL probe by a retain-side hinge at $\theta_0$'s retain margins ($\lambda_r = 1.0$, App.~\ref{app:deploy}), with all other settings unchanged. The reported cliff gap $\Delta$ is always evaluated against $m_{\text{ref}}$ regardless of training anchor.

\paragraph{Forget hinge.}
With $m_{\text{mar}}(t)$ computed under \texttt{no\_grad} through the frozen $\theta_{\text{mar}}$, the forget hinge is
\begin{align}
&\mathcal{L}_{\text{forget}}(\theta) = \label{eq:v3-forget}
\\ 
&\mathbb{E}_{\mathcal{D}_f} \left[
\tfrac{1}{T} \sum_{t=1}^T
\tfrac{1}{\kappa}\softplus\left(\kappa(m_\theta(t) - m_{\text{mar}}(t))\right)
\right]. \nonumber
\end{align}
The derivative $\sigma(\kappa(m_\theta - m_{\text{mar}}))$ scales with the margin gap rather than $p_\theta$, so it does not vanish under cliff suppression, and one-sidedness lets the stationary set descend past $m_{\text{mar}}$ rather than lock at it. The gap weighting also couples the hinge to the diagnostic of Eq.~\eqref{eq:margin-agg}, concentrating its activation near the same maximum-entropy positions (measured at Spearman $+0.4$ to $+0.5$ for saturating bases, App.~\ref{app:proofs}, with explicit reweighting and a reference-confidence gate ablated in App.~\ref{app:hinge-ew} and App.~\ref{app:refgate}).

\paragraph{KL probe.}
Utility is anchored on an instruction corpus $\mathcal{D}_a$ (Alpaca) disjoint from $\mathcal{D}_f \cup \mathcal{D}_r$, over the answer positions of each probe example,
\begin{equation}
\small
\label{eq:v3-kl}
\mathcal{L}_{\text{KL}}(\theta)
 =
\mathbb{E}_{x \sim \mathcal{D}_a} \left[
\KL\left(p_{\theta_{\text{anc}}}(\cdot \mid x) \big\| p_\theta(\cdot \mid x)\right)
\right],
\end{equation}
avoiding the gradient cancellation a retain hinge over $\mathcal{D}_r$ would induce on entity tokens shared with $\mathcal{D}_f$. The forward direction is mass-covering, penalizing removal of probability mass the anchor assigns (the failure mode of over-aggressive polishing), and keeps the trajectory close to the anchor of Lemma~\ref{lem:kl-bounded} (App.~\ref{app:formal}).

\paragraph{Full objective.}
\textsc{MC} is a \emph{strict plug-in}. The polish keeps the base method's own loss $\mathcal{L}_{\text{native}}$ running (its forget and retain terms, though training-time perturbation schedules of complex variants are not replicated, App.~\ref{app:loss-saturation}) and adds the \textsc{MC} terms on top,
\begin{equation}
\label{eq:v3-objective}
\boxed{
\mathcal{L}_{\text{polish}}(\theta)
 =
\mathcal{L}_{\text{native}}(\theta) + \mathcal{L}_{\text{forget}}(\theta)  +  \lambda_{\text{KL}} \mathcal{L}_{\text{KL}}(\theta).
 }
\end{equation}
Retain data thus enters only through the native retain term the base method already used, and the \textsc{MC} terms themselves touch only $\mathcal{D}_f$ and $\mathcal{D}_a$. Because the polish initializes at $\hat\theta_{\text{base}}$, the native gradient is small at the start and enters Theorem~\ref{thm:v3-cliff-cross} only through the constant $G_{\text{nat}}$.

\subsection{\textsc{MC} Crosses the Cliff with Bounded Attacker Lift}
\label{sec:m-thm2}

Theorem~\ref{thm:v3-cliff-cross} rules out cliff-side stationarity for reference-anchored \textsc{MC} ($\theta_{\text{mar}} = \theta_{\text{anc}} = \theta_{\text{ref}}$). The deployment variant lies outside this mechanism, its cliff crossing established empirically (Table~\ref{tab:t4-deploy}, App.~\ref{app:deploy}) rather than by stationarity. Theorem~\ref{thm:relearn-bound} then bounds the margin lift any fine-tuning attacker can produce, so a sufficiently negative cliff gap of $\hat\theta_{\text{MC}}$ survives the attack for every attacker in the class.

\begin{theorem}[\textsc{MC} removes positive cliff-stationarity]
\label{thm:v3-cliff-cross}
Assume the KL-probe gradient is uniformly bounded by $\|\nabla_\theta \mathcal{L}_{\text{KL}}(\theta)\| \le G_{\text{KL}}^{\max}$ for $\theta$ in a bounded neighborhood of $\theta_{\text{anc}}$ on Alpaca inputs disjoint from $\mathcal{D}_f \cup \mathcal{D}_r$ (Lemma~\ref{lem:kl-bounded}), and that the native-loss gradient is bounded by $\|\nabla_\theta \mathcal{L}_{\text{native}}(\theta)\| \le G_{\text{nat}}$ on the same region (App.~\ref{app:loss-saturation}). Assume further that the forget hinge satisfies a \emph{directional margin coercivity} condition, requiring that whenever $\theta \in \Theta_0$ (the compact polish region of Lemma~\ref{lem:kl-bounded}) and $\Delta(\theta) \ge \delta$, there is a unit vector $u_\theta$ with $u_\theta^\top \xi \ge G_f(\delta) > 0$ for every $\xi \in \partial \mathcal{L}_{\text{forget}}(\theta)$, the Clarke subdifferential (App.~\ref{app:proofs}). For any $\delta$ with $\lambda_{\text{KL}} G_{\text{KL}}^{\max} + G_{\text{nat}} < G_f(\delta)$, every strict local minimum of $\mathcal{L}_{\text{polish}}$ (in the unconstrained sense) that lies in $\Theta_0$ satisfies $\Delta(\theta^\star) < \delta$.
\end{theorem}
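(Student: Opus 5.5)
The plan is a proof by contradiction through first-order stationarity. Suppose $\theta^\star \in \Theta_0$ is a strict local minimum of $\mathcal{L}_{\text{polish}}$ with $\Delta(\theta^\star) \ge \delta$. Since $\theta^\star$ is an unconstrained local minimum of a locally Lipschitz function, Clarke's necessary condition gives $0 \in \partial \mathcal{L}_{\text{polish}}(\theta^\star)$. I will treat $\mathcal{L}_{\text{native}}$ and $\mathcal{L}_{\text{KL}}$ as continuously differentiable on $\Theta_0$, which is implicit in the gradient bounds assumed. The Clarke sum rule, which holds with equality when all but one summand is strictly differentiable, then gives
\[
\partial \mathcal{L}_{\text{polish}}(\theta^\star) \subseteq \nabla \mathcal{L}_{\text{native}}(\theta^\star) + \partial \mathcal{L}_{\text{forget}}(\theta^\star) + \lambda_{\text{KL}} \nabla \mathcal{L}_{\text{KL}}(\theta^\star).
\]
Hence there is $\xi^\star \in \partial \mathcal{L}_{\text{forget}}(\theta^\star)$ with $\xi^\star = -\nabla \mathcal{L}_{\text{native}}(\theta^\star) - \lambda_{\text{KL}} \nabla \mathcal{L}_{\text{KL}}(\theta^\star)$.

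Next I project onto the coercivity direction. Because $\theta^\star \in \Theta_0$ and $\Delta(\theta^\star) \ge \delta$, the directional margin coercivity hypothesis supplies a unit vector $u := u_{\theta^\star}$ with $u^\top \xi^\star \ge G_f(\delta)$. On the other side, Cauchy--Schwarz and the two gradient bounds give
\[
u^\top \xi^\star \le \|\nabla \mathcal{L}_{\text{native}}(\theta^\star)\| + \lambda_{\text{KL}} \|\nabla \mathcal{L}_{\text{KL}}(\theta^\star)\| \le G_{\text{nat}} + \lambda_{\text{KL}} G_{\text{KL}}^{\max}.
\]
The KL bound is available here because $\Theta_0$ lies in the bounded neighborhood of $\theta_{\text{anc}}$ from Lemma~\ref{lem:kl-bounded}, and the native bound holds on the same region. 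Combining the two inequalities gives $G_f(\delta) \le G_{\text{nat}} + \lambda_{\text{KL}} G_{\text{KL}}^{\max}$, which contradicts the hypothesis $\lambda_{\text{KL}} G_{\text{KL}}^{\max} + G_{\text{nat}} < G_f(\delta)$. Therefore $\Delta(\theta^\star) < \delta$. Strictness of the minimum is not needed for this argument; it only guarantees that the stationarity condition is meaningful.

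The main obstacle is the nonsmooth bookkeeping. $\mathcal{L}_{\text{forget}}$ composes the softplus with $m_\theta(t)$, and $m_\theta(t)$ contains a $\max$ over competitors, so it is only locally Lipschitz. $\Delta$ also depends on the argmax-entropy position $\hat t_\theta$, which is discontinuous in $\theta$. I would check three points.
\begin{itemize}
\item The Clarke chain rule makes $\partial \mathcal{L}_{\text{forget}}$ well defined on $\Theta_0$. This holds because the max of smooth functions is regular.
\item The sum-rule inclusion above holds. It only needs strict differentiability of the smooth summands.
\item The coercivity hypothesis is used pointwise at $\theta^\star$ only. The discontinuity of $\hat t_\theta$ therefore does not matter, since no limit in $\theta$ is taken.
\end{itemize}
All the substantive content is thus carried by the coercivity assumption. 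The proof itself is a one-line stationarity contradiction, and I would present it in that form.
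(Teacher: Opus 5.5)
Your proposal is correct and follows the paper's proof: assume a local minimizer in $\Theta_0$ with $\Delta(\theta^\star)\ge\delta$, project onto the coercivity direction $u$, and bound the native and KL contributions by Cauchy--Schwarz to contradict $0\in\partial\mathcal{L}_{\text{polish}}(\theta^\star)$ via the Clarke sum-rule inclusion (you work in Clarke form from the start, whereas the paper writes the differentiable case and handles ties in its nonsmoothness remark). One minor correction: $\partial\mathcal{L}_{\text{forget}}$ is well defined simply because the hinge is locally Lipschitz, not because of regularity (the hinge applies an increasing softplus to a \emph{negated} max, which is generally not Clarke regular), but your argument uses only the inclusion, so nothing depends on this.
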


\begin{figure}[t]
  \centering
  \includegraphics[width=\columnwidth]{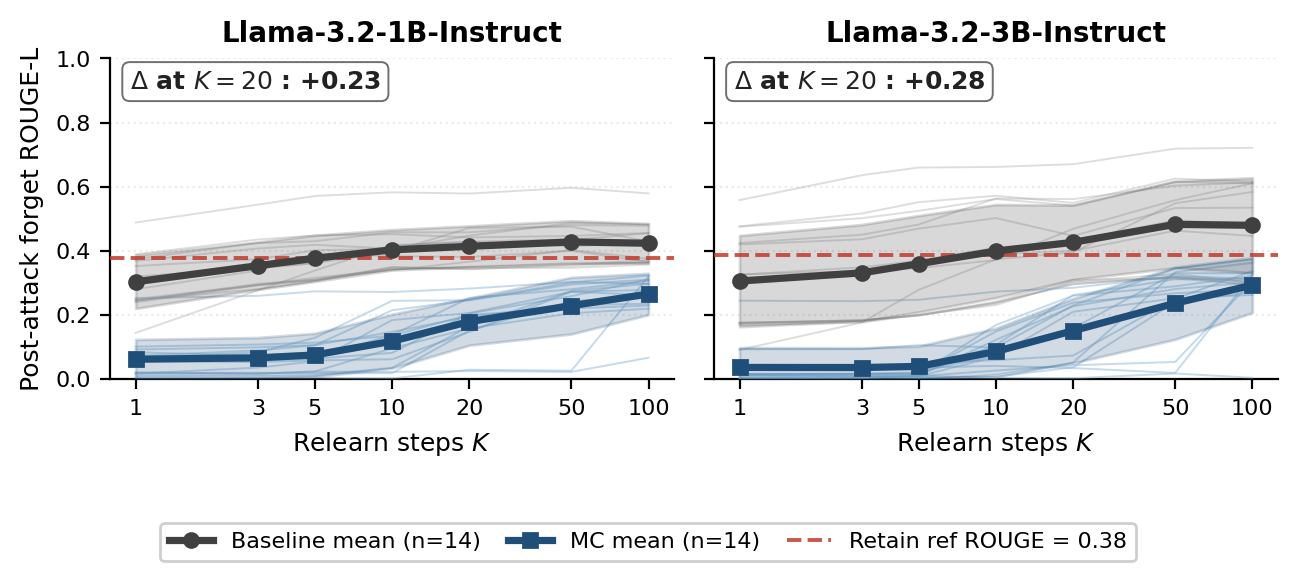}
  \caption{Held-out post-attack ROUGE-L under LoRA-r8 over $K$ for 14 baselines and \textsc{MC} at 1B/3B. Red dashed line is the $\theta_0$ recovery ceiling. Per-method numbers in App.~\ref{app:attacker-scaling}.}
  \label{fig:kcurve}
\end{figure}

The intuition is gradient dominance. Above the threshold $\delta$, the forget hinge gradient strictly exceeds the bounded KL and native contributions combined, so no strict local minimum can sit there (full proof in App.~\ref{app:proofs}). Together with Theorem~\ref{thm:kkt-cliff}, token-saturating objectives sit on the positive-cliff side while \textsc{MC} admits no strict local minimum above any admissible $\delta$. Empirically, \textsc{MC}-polished checkpoints satisfy $\Delta(\hat\theta_{\text{MC}}) < 0$ in $66$ of $70$ (method, panel) cells across the five completed TOFU panels (mean $\Delta = -30.8$ at 1B \texttt{forget10}, App.~\ref{app:cliff-numbers}). All four exceptions are UNDIAL ($\Delta \in [+1.6, +3.0]$ at table precision, Table~\ref{tab:a-t-mc-delta}), whose distillation target flattens the answer distribution, precisely the case where margin gradients, and hence the coercivity constant $G_f(\delta)$, are smallest, so the fixed-budget polish can terminate before crossing. As a statement over the whole region $\Theta_0$, the coercivity hypothesis is a proof device rather than an observable, but it is not vacuous. Its on-trajectory signature, a hinge gradient norm bounded away from zero above the cliff, is measured during instrumented polish runs (App.~\ref{app:proofs}, Fig.~\ref{fig:traj}), and its failure mode is visible as non-crossing, with UNDIAL the observed instance.

\begin{theorem}[Attack-budget margin lift bound]
\label{thm:relearn-bound}
Fix a convex compact attack region $\Theta_A \ni \hat\theta$ and a step budget $H > 0$, and let
$\mathcal{A}(\eta, N, H, \Theta_A)$ be the class of iterative attackers
$\theta^{(0)} = \hat\theta$, $\theta^{(n+1)} = \theta^{(n)} + \delta^{(n)}$, $\tilde\theta_N :=
\theta^{(N)}$, whose increments obey $\|\delta^{(n)}\| \le \eta H$ ($\eta$ a per-step scale and $H$ a step budget,
and only their product enters the bound) and whose iterates remain in
$\Theta_A$, covering LoRA, full-parameter, and adaptive fine-tuning with bounded steps. Let
$m_\beta$ be the $\beta$-smoothed diagnostic (competitor max replaced by an inverse-temperature-$\beta$
log-sum-exp, positions fixed at the defender's diagnostic positions $\hat t_{\hat\theta}$,
App.~\ref{app:proofs}), which brackets the hard margin within
$c_\beta := \log(V{-}1)/\beta$ ($V$ the vocabulary size), and set $G_m := \sup_{\Theta_A} \|\nabla_\theta m_\beta\|$ and $L_m$
the smoothness constant of $m_\beta$ on $\Theta_A$ (both finite by compactness). Then, uniformly over
$\mathcal{A}(\eta, N, H, \Theta_A)$,
\begin{equation}
\label{eq:relearn-bound}
\Delta(\tilde\theta_N)  \le  \Delta(\hat\theta) + \eta N G_m H + \tfrac{1}{2} L_m \eta^2 N H^2 + c_\beta ,
\end{equation}
where $\Delta(\tilde\theta_N)$ is evaluated at the defender's diagnostic positions. At Llama-3's
vocabulary, $\beta = 120$ gives $c_\beta < 0.1$. The derivation is in App.~\ref{app:proofs}, and the FPFT
instantiation is in App.~\ref{app:fpft-bound}.
\end{theorem}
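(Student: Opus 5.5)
The bound follows from a per-step second-order Taylor (descent-lemma) estimate on the smoothed diagnostic $m_\beta$. We telescope it over the $N$ attacker steps and then pass back to the hard margin using the $c_\beta$ bracket.

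Throughout, freeze the positions at the defender's diagnostic positions $\hat t_{\hat\theta}$. Then $m_\beta(\theta)=\mathbb{E}_{\mathcal{D}_f}[\log p_\theta(y_{\hat t}\mid\cdot)-\tfrac1\beta\log\sum_{v\neq y_{\hat t}}\exp(\beta\log p_\theta(v\mid\cdot))]$ is a fixed smooth function of $\theta$, with no argmax over positions. Along the attack trajectory, $\Delta$ is understood as this fixed-position margin minus $m_{\text{ref}}(\mathcal{D}_f)$. The reference term is a constant and cancels from both sides.

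\textbf{Step 1 (bracket).} For any vector $a\in\mathbb{R}^{V-1}$,
\[
\max_v a_v\le \tfrac1\beta\log\sum_v e^{\beta a_v}\le \max_v a_v+\log(V-1)/\beta .
\]
Taking $a_v=\log p_\theta(v\mid\cdot)$ over competitors and averaging gives
\[
m(\theta)-c_\beta\le m_\beta(\theta)\le m(\theta)
\]
for the hard fixed-position margin $m$. The arithmetic $\log(128256)/120<0.1$ is routine.

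\textbf{Step 2 (one step).} The region $\Theta_A$ is convex and contains $\theta^{(n)}$ and $\theta^{(n+1)}$, so the segment between them lies in $\Theta_A$. There $\nabla m_\beta$ is $L_m$-Lipschitz and bounded by $G_m$; both are finite because $m_\beta$ is $C^2$ (log-softmax composed with log-sum-exp composed with the network) and $\Theta_A$ is compact. The standard smoothness inequality then gives
\[
m_\beta(\theta^{(n+1)})\le m_\beta(\theta^{(n)})+\nabla m_\beta(\theta^{(n)})^\top\delta^{(n)}+\tfrac{L_m}{2}\|\delta^{(n)}\|^2 .
\]
Applying Cauchy--Schwarz and $\|\delta^{(n)}\|\le\eta H$ bounds the increment by $G_m\eta H+\tfrac12L_m\eta^2H^2$. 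This uses no property of the update rule beyond its norm, which is why the bound covers LoRA, FPFT, and adaptive optimizers. For LoRA one views the increment in full parameter space; its norm bound is inherited by assumption.

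\textbf{Step 3 (telescope and convert).} Summing Step 2 over $n=0,\dots,N-1$ gives
\[
m_\beta(\tilde\theta_N)\le m_\beta(\hat\theta)+\eta NG_mH+\tfrac12L_m\eta^2NH^2 .
\]
Apply Step 1 at both ends: $m(\tilde\theta_N)\le m_\beta(\tilde\theta_N)+c_\beta$ and $m_\beta(\hat\theta)\le m(\hat\theta)$. Subtracting $m_{\text{ref}}(\mathcal{D}_f)$ yields \eqref{eq:relearn-bound}. Only one $c_\beta$ appears, because the smoothing error enters with a favorable sign at the start point. Uniformity over $\mathcal{A}(\eta,N,H,\Theta_A)$ holds because $G_m$ and $L_m$ are suprema over $\Theta_A$ and do not depend on the attacker.

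\textbf{Main obstacle.} The delicate point is the meaning of $\Delta(\hat\theta)$ on the right side. It must be read at the same fixed positions $\hat t_{\hat\theta}$, which coincide with the defender's own max-entropy positions, so it equals the true cliff gap of $\hat\theta$. $\Delta(\tilde\theta_N)$, by contrast, is not the re-maximized diagnostic; the statement's clause "evaluated at the defender's diagnostic positions" is exactly what licenses this.

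A second subtlety is that the hard max is nondifferentiable, which is why smoothing is used. I would also check that $L_m$ is finite: this requires $\beta$-dependent bounds, since $\nabla^2$ of the log-sum-exp scales like $\beta$. That is harmless, because $L_m$ is defined for the fixed $\beta$ and compactness gives finiteness. It does mean the bound trades $c_\beta$ against $L_m$, which I would note but not optimize.
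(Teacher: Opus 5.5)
Your proposal is correct and follows the paper's proof essentially step for step. Both use the log-sum-exp bracket $m_\beta \le m \le m_\beta + c_\beta$ at frozen positions, the per-segment quadratic upper bound on the convex region combined with Cauchy--Schwarz, telescoping over the $N$ steps, and the one-sided bracket transfer that leaves a single $c_\beta$; your remarks on reading $\Delta(\hat\theta)$ at the defender's own diagnostic positions and on $L_m$ growing with $\beta$ also match the paper's scope notes.
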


Hence whenever $-\Delta(\hat\theta_{\text{MC}})$ exceeds the lift budget on the right-hand side,
\emph{every} attacker in the class leaves $\Delta(\tilde\theta_N) < 0$, a certificate uniform over
the class rather than per realized trajectory. A baseline with $\Delta(\hat\theta) \ge 0$ starts on the
attacker's side, where the attack only needs sign preservation rather than reversal. Two scope notes
follow. The evaluation diagnostic re-selects positions per model and coincides with the theorem's
frozen-position diagnostic unless the attack relocates the entropy peak, and comparisons \emph{between}
attacker classes (rank, FPFT) are empirical, since the bound orders budgets within a class, not realized
lifts across classes (App.~\ref{app:strong-attackers}).

\begin{table*}[t]
  \centering
  \small
  \setlength{\tabcolsep}{4pt}
  \caption{\textbf{Cross-axis robustness matrix.} Cells are \emph{(\textsc{MC} wins) / (populated cells)} with 95\% Clopper--Pearson CIs, paired per base. MIA.agg counts raw-AUC wins (advantage summary in App.~\ref{app:mia-breakdown}, per-method backing in App.~\ref{app:t2-per-cell}). $^{\dagger}$RSNPO 8B is reported in Fig.~\ref{fig:cliff} only and is not part of the cross-axis attack sweep, $^{\ddagger}$MUSE-News on Llama-2-7B-hf omits CRNPO, was not run under FPFT, and shows a reversed advantage shift $0.028 \to 0.228$ (App.~\ref{app:muse}), and $^{\S}$the Phi-3.5-mini panel was not run under FPFT and its gradient-family and JensUn \textsc{MC} cells collapse utility (App.~\ref{app:phi}).}
  \label{tab:t2-robust}
  \resizebox{\textwidth}{!}{%
  \begin{tabular}{l c ccccc}
    \toprule
    Stress axis & $n$ & F.agg ($\downarrow$) & MU ($\uparrow$) & MIA.agg ($\downarrow$) & K20-LoRA ($\downarrow$) & K20-FPFT ($\downarrow$) \\
    \midrule
    Multi-seed (5 base $\times$ 3 seed, 1B f10) & 15 & \textbf{15/15}\,{\scriptsize [0.78,\,1.00]} & \textbf{0/15}\,{\scriptsize [0.00,\,0.22]} & --- & \textbf{15/15}\,{\scriptsize [0.78,\,1.00]} & \textbf{5/5}\,{\scriptsize [0.48,\,1.00]} \\
    1B forget01 (14 base) & 14 & \textbf{14/14}\,{\scriptsize [0.77,\,1.00]} & \textbf{0/14}\,{\scriptsize [0.00,\,0.23]} & \textbf{14/14}\,{\scriptsize [0.77,\,1.00]} & \textbf{14/14}\,{\scriptsize [0.77,\,1.00]} & \textbf{14/14}\,{\scriptsize [0.77,\,1.00]} \\
    1B forget05 (14 base) & 14 & \textbf{14/14}\,{\scriptsize [0.77,\,1.00]} & \textbf{0/14}\,{\scriptsize [0.00,\,0.23]} & \textbf{14/14}\,{\scriptsize [0.77,\,1.00]} & \textbf{14/14}\,{\scriptsize [0.77,\,1.00]} & \textbf{14/14}\,{\scriptsize [0.77,\,1.00]} \\
    3B forget10 (14 base) & 14 & \textbf{14/14}\,{\scriptsize [0.77,\,1.00]} & \textbf{1/14}\,{\scriptsize [0.00,\,0.34]} & \textbf{14/14}\,{\scriptsize [0.77,\,1.00]} & \textbf{14/14}\,{\scriptsize [0.77,\,1.00]} & ---/14 \\
    8B forget10 (13 base$^{\dagger}$) & 13 & \textbf{13/13}\,{\scriptsize [0.75,\,1.00]} & 4/13\,{\scriptsize [0.10,\,0.61]} & \textbf{13/13}\,{\scriptsize [0.75,\,1.00]} & \textbf{13/13}\,{\scriptsize [0.75,\,1.00]} & \textbf{13/13}\,{\scriptsize [0.75,\,1.00]} \\
    MUSE-News (13 base$^{\ddagger}$) & 13 & \textbf{13/13}\,{\scriptsize [0.75,\,1.00]} & ---/13 & \textbf{13/13}\,{\scriptsize [0.75,\,1.00]} & \textbf{13/13}\,{\scriptsize [0.75,\,1.00]} & ---/13 \\
    Phi-3.5 forget10 (14 base$^{\S}$) & 14 & \textbf{14/14}\,{\scriptsize [0.77,\,1.00]} & 1/14\,{\scriptsize [0.00,\,0.34]} & \textbf{14/14}\,{\scriptsize [0.77,\,1.00]} & \textbf{14/14}\,{\scriptsize [0.77,\,1.00]} & ---/14 \\
    \midrule
    \textbf{Total} & \textbf{97} & \textbf{97/97} (100.0\%) & 6/84 (7.1\%) & \textbf{82/82} (100.0\%) & \textbf{97/97} (100.0\%) & \textbf{46/46} (100.0\%) \\
    \bottomrule
  \end{tabular}%
  }
\end{table*}

\paragraph{Empirical bridge.}
Eq.~\eqref{eq:relearn-bound} bounds the post-attack margin while \S\ref{sec:exp} reports post-attack ROUGE-L. The two co-move, since recovering forget content under greedy decoding requires many relevant token margins to move toward or into the positive region. Across the 14-method panel, the cliff gap correlates with ROUGE-L recovery at Pearson $r = 0.69 / 0.65 / 0.45$ (1B/3B/8B, with 95\% bootstrap CI at 1B $[0.54, 0.82]$) and Spearman $\rho = 0.91 / 0.89 / 0.78$ (Fig.~\ref{fig:bridge}, App.~\ref{app:bridge}), with the four UNDIAL non-crossing cells among the least robust \textsc{MC} cells, as the link predicts.

\section{Experiments}
\label{sec:exp}

We stress-test \textsc{MC} across one of the broadest axis matrices in robust unlearning, spanning fourteen post-hoc methods from three loss families, three Llama-3 sizes, three forget tiers, three seeds, a second benchmark (MUSE-News on Llama-2-7B-hf), a fourth model family (Phi-3.5-mini, App.~\ref{app:phi}), and an attack spectrum from LoRA and soft prompts to full-parameter fine-tuning over budgets $K{=}1$ to $K{=}100$, for $97$ populated cross-axis cells in all, with every theoretical ingredient measured alongside (\textbf{Verification summary}, \S\ref{sec:exp-ablations}). Twenty-two appendix tables and five figures across twenty-four sections back these axes at per-cell depth. The central result is Fig.~\ref{fig:kcurve}, where baseline held-out post-attack ROUGE rises with the relearn budget $K$ toward the recovery ceiling while \textsc{MC} stays well below it at every budget.

\subsection{Setup}
\label{sec:exp-setup}

\textbf{Benchmarks, models, and methods.}
\textsc{TOFU}~\citep{maini2024tofu} provides \texttt{forget01/05/10} splits ($|\mathcal{D}_f|=40/200/400$) with matched retain references, and headline results use \texttt{forget10}.
We evaluate \texttt{Llama-3.2-Instruct} (1B/3B) and \texttt{Llama-3.1-8B-Instruct}, using TOFU bases and references from \texttt{open-unlearning}. \textsc{MUSE-News}~\citep{shi2024muse} \texttt{knowmem} ($n=100$) tests cross-benchmark and cross-family transfer on \texttt{Llama-2-7B-hf}, with each baseline trained 5 epochs from \texttt{MUSE-news\_target} (App.~\ref{app:muse}). The 14-method panel spans three gradient, nine preference, and two distillation methods (full list with families in Table~\ref{tab:t1-main}).

\textbf{Metrics and attacks.}
Forget metrics ($\downarrow$) are \textit{Q\_A\_Prob}, ROUGE-L, EM, and ES, plus \textit{forget\_quality}, the KS $p$-value of forget Truth-Ratio against $\theta_{\text{ref}}$ ($p>0.05$ meaning distributional indistinguishability). Utility is TOFU MU, the harmonic mean of 9 retain-side measurements. MIA uses five AUCs (loss, min-$k$~\citep{shi2024detecting}, min-$k$++~\citep{zhang2024minkpp}, zlib, and gradnorm capped at $10^6$), and because AUC below 0.5 is reversed separability rather than no signal, we also report the per-detector membership advantage $\frac{1}{5}\sum_i |\mathrm{AUC}_i-0.5|$ (App.~\ref{app:mia-breakdown}). Relearn attacks train or prompt $\hat\theta$ on $\mathcal{D}_{\text{rel}}\subset\mathcal{D}_f$ and evaluate on $\mathcal{D}_f\setminus\mathcal{D}_{\text{rel}}$. We cap $K\le100$ on \texttt{forget05/10}, $K\le20$ on \texttt{forget01}, and $K\le50$ on MUSE. Attacks include LoRA-r8 over $K\in\{1,3,5,10,20,50,100\}$, LoRA-r32 at $K=20$, soft-prompt tuning with $n_{\text{soft}}\in\{20,100\}$~\citep{lester2021power}, full-parameter fine-tuning (FPFT) at $K\in\{20,50\}$, and a diagnostic linear probe on last-layer hidden states.

\textbf{Metric provenance.}
Baseline columns in Tables~\ref{tab:t1-main} and~\ref{tab:a-t1-ext} come from the official
\texttt{open-unlearning} evaluator, directly comparable to published leaderboard numbers, while
\textsc{MC} columns come from our pipeline evaluator (matched prompts, $n_{\text{eval}}{=}200$)
since the official harness was not run on merged LoRA checkpoints. The two agree closely on forget
metrics but differ systematically on MU (panel mean $0.44$ official vs.\ $0.27$ pipeline \emph{on
the same baseline checkpoints}), so the headline MU drop overstates the cost, and under a single
evaluator the comparison is $0.27 \to 0.11$ with identical win counts (App.~\ref{app:cliff-numbers}).
Relearn columns are pipeline-vs-pipeline throughout.

\textbf{\textsc{MC} and statistics.}
We fix $(\kappa,\lambda_{\text{KL}},r,N_{\text{pol}})=(5.0,0.05,32,200)$ (hinge sharpness, probe weight, LoRA rank, polish forget-sample pool) with an 80-step single-sample optimizer schedule, from an HP sensitivity study on 5 representative 1B \texttt{forget10} bases, and reuse it unchanged across all reported settings. The KL probe uses Alpaca~\citep{taori2023alpaca} and $m_{\text{ref}}(t)$ is cached per model and tier. All experiments run on one NVIDIA DGX Spark (App.~\ref{app:compute}, \ref{app:hp-grid}). Multi-seed evaluation uses GradDiff, NPO, SimNPO, UNDIAL, and CRNPO with seeds $\{0,1,2\}$, with 95\% Clopper--Pearson CIs for win-rates, 95\% bootstrap CIs for correlations, and paired comparisons throughout.

\subsection{Main Results at 1B \textsc{forget10}}
\label{sec:exp-main}

Table~\ref{tab:t1-main} shows the 14-method head-to-head. \textsc{MC} wins $14/14$ on F.agg and $14/14$ on K20-LoRA, reducing held-out ROUGE-L from $0.41$ to $0.18$, and also wins $14/14$ on K20-FPFT. On membership inference, \textsc{MC} lowers the raw AUC on $13/14$ methods and on all five detectors' panel means, but it frequently \emph{overshoots past chance}, with per-detector membership advantage improving on only $6/14$ methods ($0.33 \to 0.32$ panel mean), because strongly suppressed confidence is itself separable in the reversed direction (App.~\ref{app:mia-breakdown}). The main cost is TOFU MU ($0.44\to0.11$ as tabulated, $0.27\to0.11$ under a single evaluator per \textbf{Metric provenance} above, and $13/14$ losing either way). Forget-quality is the stricter bar, with only RSNPO and CRNPO exceeding the KS threshold $p>0.05$, yet all fourteen baselines fail it too (best baseline $p\approx10^{-3}$ under the same evaluator), so \textsc{MC} strictly improves the panel's hardest metric. The seven NPO-based defense variants do not cross the cliff (K20-LoRA mean $0.41$ vs vanilla NPO's $0.35$), while \textsc{MC} reduces all seven to mean $0.21$ with $\Delta<0$.

\subsection{Cross-Axis Robustness}
\label{sec:exp-robust}

Table~\ref{tab:t2-robust} is the cross-axis robustness matrix, aggregating paired win-rates across seeds, three forget tiers, three model sizes, MUSE-News, the Phi-3.5 panel, and both LoRA and FPFT attackers, with per-method backing in App.~\ref{app:t2-per-cell}. \textsc{MC} wins every populated relearn cell, $97/97$ on K20-LoRA and $46/46$ on K20-FPFT, lowers raw membership AUC on $82/82$, and wins the forget aggregate in all $97$ (qualitative post-attack outputs in App.~\ref{app:depth}). The one axis where the tradeoff surfaces is utility, with MU wins at $6/84$ (MUSE excluded since TOFU MU is undefined on \texttt{knowmem}), a loss that is benchmark-local rather than general capability (MMLU evidence in \S\ref{sec:exp-ablations}).

On MUSE-News, \textsc{MC} reduces K20-LoRA ROUGE-L from $0.053$ to $0.006$ ($13/13$, Table~\ref{tab:t5-muse}), so operational recovery and relearn robustness transfer even where the margin diagnostic is less sharp (MIA caveat in Limitations and App.~\ref{app:muse}).

\subsection{Deployment without a Retain-Trained Reference}
\label{sec:exp-deploy}

Table~\ref{tab:t4-deploy} anchors the polish at $\theta_0$ (margin anchor plus retain hinge, \S\ref{sec:m-v3}) and uses $\theta_{\text{ref}}$ only for evaluation. Deployment beats baseline on $14/14$ methods, lies only $0.023$ above reference-anchored \textsc{MC} on mean K20-LoRA, and on three methods even beats the reference-anchored variant. The cost is extra MU loss ($0.11\to0.07$, $5/14$ methods recovering some MU, cross-size and cross-tier transfer in App.~\ref{app:deploy}).

\subsection{Ablations}
\label{sec:exp-ablations}

Table~\ref{tab:t3-ablation} shows that gradient ascent diverges, and that both the forget-only hinge and the two-sided retain hinge reach robustness comparable to full \textsc{MC} but at markedly lower utility (MU $0.091$ and $0.080$ vs.\ $0.149$). It is the KL probe, not a retain-side hinge, that limits the utility cost of crossing. HP perturbations change K20-LoRA by at most $0.015$ in the shown corners and $0.040$ across the full sensitivity table (App.~\ref{app:hp-grid}), far below the $0.255$ gap to baseline on the same panel. Stronger attackers preserve the lead, with LoRA-r32, soft-prompt $n_{\text{soft}}{=}100$, and FPFT $K{=}50$ reducing ROUGE-L by $0.149/0.188/0.260$ (App.~\ref{app:strong-attackers}, \ref{app:fpft-per-family}), and an \emph{adaptive} attacker that directly ascends the margin diagnostic does no better than generic relearning ($0.154$ vs $0.179$ panel mean, App.~\ref{app:depth}), so the gain is tied to no single attacker, tuning point, or attacker blindness. A further ablation line decomposes the utility cost into a reallocatable residual-pressure leak, recovered by a reference-confidence gate wherever the native forget term is inert, and an irreducible remainder that a controlled $2{\times}2$ attributes to crossing pressure itself (App.~\ref{app:hinge-ew}, \ref{app:refgate}, \ref{app:pareto}).

\paragraph{Verification summary.}
Every theoretical ingredient is measured rather than assumed. The average-log-odds premise certifies the cliff on $34/42$ cells from a single forward pass per checkpoint (App.~\ref{app:logodds-verify}), the coercivity premise shows its on-trajectory gradient signature in instrumented polish runs (App.~\ref{app:proofs}), and the relearn bound holds with one to two orders of slack on instrumented attacks, which is why the operational K-sweep, flat under budgets up to $K{=}100$ with every \textsc{MC} cell at or below $0.33$, carries the empirical load (App.~\ref{app:thm3-constants}, \ref{app:attacker-scaling}). Zero-shot MMLU stays within $0.03$ of each baseline for eleven of fourteen methods at 3B and 8B, with named exceptions (App.~\ref{app:mmlu}).

\section*{Limitations}

\textsc{MC} reduces TOFU MU ($0.44 \to 0.11$ at 1B \texttt{forget10}), and forget-quality KS $p$ clears the $0.05$ bar in only $2/14$ cells (though every baseline fails the same bar). Part of this cost is a recoverable residual-pressure leak and the attributed remainder is the price of crossing pressure itself (\S\ref{sec:exp-ablations}), so the cost is characterized but not eliminated. Its margin pressure often overshoots membership detectors past chance into reversed separability, improving per-detector advantage on only $6/14$ TOFU methods despite raw AUCs falling on $13/14$, and reaching a $0.228$ reversed signal on MUSE-News whose baselines already sit near chance ($0.028$). The same overshoot collapses 1B MMLU toward chance for three of fourteen methods, a damage that recedes at 3B and 8B for all methods except CRNPO and JensUn (App.~\ref{app:mmlu}). An MIA- and capability-aware stopping rule for margin pressure is left to future work, with its margin-targeted half implemented and swept in App.~\ref{app:pareto}. The cliff bound holds at any stationary point meeting the diagnostic floor (Asm.~\ref{asm:overlap}), the relearn bound covers bounded-step weight-space attackers (App.~\ref{app:fpft-bound}), and the floor and overlap constants are observable but not adversarially controlled. Evaluation spans three benchmarks and three model families, leaving others to future work.

\section{Conclusion}
\label{sec:conclusion}

We identify the margin cliff, the convergence of per-token forget-set margins above the retain reference in $41$ of $42$ method--size cells, as a stationarity property of retain-regularized unlearning, and show that crossing it requires a non-saturating forget objective. Margin Calibration does so as a strict plug-in under a single frozen configuration, crossing in $66$ of $70$ evaluated cells and improving forget recovery and relearn robustness on every populated cell, with a deployment variant that needs no retain-trained reference. The main cost is reduced utility, leaving utility-preserving cliff crossing as the key direction for future work.

\section*{Acknowledgments}
Funded by the European Union. Views and opinions expressed are however those of the author(s) only and do not necessarily reflect those of the European Union or the European Health and Digital Executive Agency (HADEA). Neither the European Union nor the granting authority can be held responsible for them. RobustifAI project, ID 101212818.

\bibliography{aaai2027}

\clearpage
\appendix
%

\section{Formal assumptions and lemmas}
\label{app:formal}

\subsection{Assumption~\ref{asm:overlap}, formal version}

We quantify forget--retain coupling by a single \emph{data-side} statistic and package its
optimization consequence as a probability floor.

\begin{assumption}[Forget--retain coupling and retain floor]
\label{asm:overlap}
Let $\epsilon \in [0, 1]$ be the forget--retain prompt overlap, defined as the fraction of
$\mathcal{D}_f$ bigrams also covered by $\mathcal{D}_r$ ($\epsilon = 0$ for disjoint $n$-grams and
$\epsilon = 1$ for identical coverage), and let
$\ell(\theta) := \mathbb{E}_{\mathcal{D}_f}\big[\log\frac{p_{\hat t}}{1 - p_{\hat t}}\big]$ be the
average gold log-odds at the diagnostic positions of Eq.~\eqref{eq:margin-agg}. There exists a finite
floor $\ell_\star = \ell_\star(\epsilon, \lambda)$, \emph{non-decreasing} in $\epsilon$ and in the
retain weight $\lambda$, such that at every strict local minimum $\theta^\star$ of
\eqref{eq:standard-unlearn},
\[
\ell(\theta^\star) \ge \ell_\star .
\]
The assumption is stated for the joint objective as such. Definition~\ref{def:forget-saturating}
supplies the mechanism that makes it plausible for token-saturating forget losses, while for the
bounded-gradient losses (GradDiff family, UNDIAL) it is assumed directly with no mechanistic
derivation (App.~\ref{app:loss-saturation}).
\end{assumption}

\paragraph{Observability and direction.}
$\epsilon$ is a pure data statistic computable in one pass over $\mathcal{D}_f \cup \mathcal{D}_r$. We
measure $\epsilon \approx 0.44$ at 1B forget10 (unique word-level bigram types of forget Q\&A covered
by retain Q\&A, probe script in the code release), and $\ell(\theta)$ is computable from the same forward
pass that produces the margins, so the assumption is directly checkable per checkpoint
(App.~\ref{app:logodds-verify}). The floor exists because forget and retain share sub-token structure.
The retain NLL resists collapsing shared-representation logits, and a token-saturated forget loss
(Def.~\ref{def:forget-saturating}) supplies vanishing suppressing force at exactly the low-probability
content tokens the diagnostic evaluates. The resistance, and hence $\ell_\star$, \emph{grows} with the
coupling $\epsilon$. At $\epsilon = 0$ the retain term exerts no forget-side pull, $\ell_\star \to
-\infty$, and the cliff dissolves. This is the direction the mechanism requires, with the cliff a
consequence of coupling, not of its absence. The assumption is stated at the aggregate level
deliberately, since a worst-case \emph{per-token} floor would imply it but is empirically vacuous (measured
minimum token gold probabilities reach $10^{-13}$--$10^{-5}$ at 1B), whereas the aggregate is dominated by
typical diagnostic tokens.

\subsection{Lemma~\ref{lem:kl-bounded}, formal version}

\begin{lemma}[KL probe gradient bound, precise]
\label{lem:kl-bounded}
Let $\mathcal{D}_a$ be disjoint from $\mathcal{D}_f$ and $\mathcal{D}_r$, and let $\theta_{\text{anc}}$ be
the behavior anchor. For any $\theta$ in a compact set $\Theta_0 \subset \Theta$ that contains
$\theta_0$, $\theta_{\text{ref}}$, and the polish trajectory, there exists
$G_{\text{KL}}^{\max} > 0$ depending only on $\Theta_0$, $\theta_{\text{anc}}$, and $\mathcal{D}_a$ such
that $\left\| \nabla_\theta \mathcal{L}_{\text{KL}}(\theta) \right\| \le G_{\text{KL}}^{\max}$. In
particular, $G_{\text{KL}}^{\max}$ does not depend on $p_\theta(y_f \mid x_f)$ for any
$(x_f, y_f) \in \mathcal{D}_f$.
\end{lemma}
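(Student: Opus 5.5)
The plan is to write the KL probe explicitly as a finite average over the Alpaca answer positions and bound its gradient term by term by a continuous function of $\theta$. Compactness of $\Theta_0$ then turns this into a uniform constant.

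Because $\mathcal{D}_a$ is a fixed finite corpus, we have
\[
\mathcal{L}_{\text{KL}}(\theta)=\frac{1}{|\mathcal{D}_a|}\sum_{x\in\mathcal{D}_a}\frac{1}{T_x}\sum_{s}\sum_{v} q_s(v)\bigl(\log q_s(v)-\log p_\theta(v\mid x,s)\bigr),
\]
where $q_s:=p_{\theta_{\text{anc}}}(\cdot\mid x,s)$ is a constant independent of $\theta$. Differentiating, the $\log q_s$ term drops out. Using the softmax identity, each summand's gradient becomes
\[
\nabla_\theta \mathcal{L}_{\text{KL}}=\mathbb{E}\Bigl[J_z(\theta;x,s)^\top\bigl(p_\theta(\cdot\mid x,s)-q_s\bigr)\Bigr],
\]
where $J_z$ is the Jacobian of the logit vector $z$ with respect to $\theta$ at that position. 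This cancellation is the key step. Since $\|p_\theta-q_s\|_2\le\|p_\theta-q_s\|_1\le 2$, each term is bounded by $2\|J_z(\theta;x,s)\|_{\mathrm{op}}$.

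Next, the network map $\theta\mapsto z(\theta;x,s)$ is a composition of linear maps, smooth activations, layer norms (with positive $\epsilon$ in the denominator) and attention softmaxes. For each fixed input it is therefore $C^1$ in $\theta$ (the same smoothness the paper already uses for $m_\beta$ in Theorem~\ref{thm:relearn-bound}). Hence $\theta\mapsto\|J_z(\theta;x,s)\|_{\mathrm{op}}$ is continuous and attains a finite maximum on the compact $\Theta_0$. There are finitely many pairs $(x,s)$, so taking the maximum over them gives
\[
G_{\text{KL}}^{\max}:=2\max_{x,s}\sup_{\Theta_0}\|J_z\|_{\mathrm{op}}<\infty .
\]
If needed, we add any positive constant to ensure strict positivity. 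This constant depends only on $\Theta_0$, on $\mathcal{D}_a$ (which determines the inputs $(x,s)$), and on $\theta_{\text{anc}}$ (through which positions are answer positions; the factor $2$ is already uniform in $q_s$).

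The final claim, independence from $p_\theta(y_f\mid x_f)$, follows by inspection. The expression involves only forward passes on Alpaca inputs, and disjointness means no $\mathcal{D}_f$ sequence appears. The only dependence on $\theta$ enters through the supremum over $\Theta_0$, which is a set-level quantity, not a function of any forget-sample probability. In particular, no $1/p$ factor appears, since the softmax cancellation removed the $\log p_\theta$ singularity.

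The main obstacle is justifying $C^1$-smoothness and the compactness of $\Theta_0$ as stated. The lemma posits a compact set containing the trajectory, so I would take that set as given rather than prove the polish stays bounded. I would also note that non-smooth activations such as ReLU would require a Clarke-gradient version, but Llama's SiLU/RMSNorm are smooth.
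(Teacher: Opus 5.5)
Your argument is correct and is essentially the paper's: the probe is $C^1$ in $\theta$ on the fixed Alpaca inputs, a continuous norm therefore attains a finite maximum on the compact $\Theta_0$, and disjointness from $\mathcal{D}_f$ gives the independence claim. Your one addition is the logit-space identity $\nabla_z \mathrm{KL}(q\,\|\,p_\theta) = p_\theta - q$, which gives the explicit constant $G_{\text{KL}}^{\max} = 2\max_{x,s}\sup_{\Theta_0}\|J_z\|_{\mathrm{op}}$. The paper skips this step and only notes that softmax positivity keeps the log-ratios finite; your identity is a harmless refinement that also shows the constant is uniform in $\theta_{\text{anc}}$, since answer positions are fixed by $\mathcal{D}_a$, not by the anchor.
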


\begin{proof}
The probe $\mathcal{L}_{\text{KL}}(\theta) = \mathbb{E}_{x \sim \mathcal{D}_a}
[\mathrm{KL}(p_{\theta_{\text{anc}}}(\cdot \mid x) \,\|\, p_\theta(\cdot \mid x))]$ depends on $\theta$
only through the softmax outputs of a fixed, finitely-parameterized network with smooth activations
(SiLU-family in our models) on the fixed inputs
$\mathcal{D}_a$. Softmax outputs are strictly positive, so the log-ratios are finite and the probe is
continuously differentiable in $\theta$ on all of $\Theta$. (Containing the polish trajectory in
$\Theta_0$ presumes bounded iterates, which we assume.) A
continuous function ($\theta \mapsto \|\nabla_\theta \mathcal{L}_{\text{KL}}(\theta)\|$) attains a
finite maximum on the compact set $\Theta_0$, giving
$G_{\text{KL}}^{\max} := \max_{\theta \in \Theta_0} \|\nabla_\theta \mathcal{L}_{\text{KL}}(\theta)\| < \infty$.
Because $\mathcal{D}_a \cap \mathcal{D}_f = \emptyset$, the functional $\mathcal{L}_{\text{KL}}$ makes no
reference to any $p_\theta(y_f \mid x_f)$, so neither does its gradient or the bound.
\end{proof}

\section{Full proofs}
\label{app:proofs}

\subsection{Proof of Theorem~\ref{thm:kkt-cliff}}

\begin{proof}
Let $\theta^\star$ be a strict local minimum of
$\mathcal{L}(\theta) := \mathcal{L}_f(\theta; \mathcal{D}_f) + \lambda \mathcal{L}_r(\theta; \mathcal{D}_r)$.

\paragraph{Step 1 (token-wise saturation caps the suppressing force, motivation).}
By Definition~\ref{def:forget-saturating}, the loss's marginal incentive to suppress any individual
gold token further is capped token by token,
$\left|\partial \ell_f / \partial z_{y_t}\right| \le \rho(p_t)$ with
$\rho(p) \to 0$ as $p \to 0^+$. A suppressed token therefore receives a vanishing forget
push as its probability falls, \emph{independently} of the confident easy tokens elsewhere in the
answer.

\paragraph{Step 2 (the retain term holds the diagnostic level up, assumption).}
Because the retain NLL is minimized on $\mathcal{D}_r$ and forget and retain share sub-token structure
with overlap $\epsilon$, depressing a forget-token logit also raises $\mathcal{L}_r$. The retain
curvature exerts a restoring pull, while by Step~1 the opposing forget push decays like $\rho(p_t)$.
Assumption~\ref{asm:overlap} packages this balance at stationarity as the floor
$\ell(\theta^\star) \ge \ell_\star$. Steps~1--2 motivate the assumption and are not used
quantitatively below, since the floor carries the quantitative load and is checked per checkpoint
(App.~\ref{app:logodds-verify}).

\paragraph{Step 3 (competitor-mass lemma, unconditional).}
Fix any answer position $t$ with gold probability $p_t := p_{\theta^\star}(y_t \mid x, y_{<t})$. Its
strongest competitor carries probability $\max_{v \neq y_t} p_{\theta^\star}(v) \le 1 - p_t$, so
\begin{align*}
m_{\theta^\star}(t)
&= \log p_t - \log \max_{v \neq y_t} p_{\theta^\star}(v) \\
&\ge \log p_t - \log(1 - p_t)
= \log \tfrac{p_t}{1 - p_t},
\end{align*}
with no assumption at all, in particular at the max-entropy position $\hat t_{\theta^\star}$ that
the diagnostic~\eqref{eq:margin-agg} evaluates.

\paragraph{Step 4 (cliff lower bound).}
Averaging Step~3 over samples at their diagnostic positions and applying
Assumption~\ref{asm:overlap},
\begin{align*}
m_{\theta^\star}(\mathcal{D}_f) &\ge
\mathbb{E}_{\mathcal{D}_f}\left[\log\tfrac{p_{\hat t}}{1 - p_{\hat t}}\right]
= \ell(\theta^\star) \ge \ell_\star ,
\\ \text{hence}\quad
\Delta(\theta^\star) &\ge \ell_\star - m_{\text{ref}}(\mathcal{D}_f) =: C .
\end{align*}
The retain-only reference treats forget content as out-of-distribution at exactly these
high-uncertainty positions, so $m_{\text{ref}}(\mathcal{D}_f) < 0$, and it is directly measured ($-4.01$
at 1B forget10). $C > 0$ iff $\ell_\star > m_{\text{ref}}$. Because Steps 3--4 are unconditional, the measured
$\ell(\theta^\star)$ certifies the conclusion directly. The certificate $\ell(\theta^\star) > m_{\text{ref}}$, and
hence $\Delta(\theta^\star) \ge \ell(\theta^\star) - m_{\text{ref}} > 0$ with no appeal to
Asm.~\ref{asm:overlap}, holds on $34/42$ (method, size) cells. In the remainder the competitor-mass
relaxation is too loose to certify, though the measured $\Delta$ stays positive in all of them
except the degenerate GradDiff-8B cell (App.~\ref{app:logodds-verify}).
\end{proof}

\begin{table*}[tp]
\centering
\small
\setlength{\tabcolsep}{4.0pt}
\begin{tabular}{lrrcrrcrrc}
\toprule
Method & \multicolumn{3}{c}{1B} & \multicolumn{3}{c}{3B} & \multicolumn{3}{c}{8B}\\
\cmidrule(lr){2-4} \cmidrule(lr){5-7} \cmidrule(lr){8-10}
  & $\ell(\hat\theta)$ & $m_{\hat\theta}$ & c. & $\ell(\hat\theta)$ & $m_{\hat\theta}$ & c. & $\ell(\hat\theta)$ & $m_{\hat\theta}$ & c.\\
\midrule
GradDiff & $-3.77$ & $-2.33$ & \ding{51} & $-2.84$ & $-1.55$ & \ding{51} & $-6.31$ & $-5.42$ & --\\
RMU & $-2.92$ & $-1.22$ & \ding{51} & $-3.05$ & $-1.66$ & \ding{51} & $-5.25$ & $-3.62$ & --\\
PDU & $-4.82$ & $-1.12$ & -- & $-6.68$ & $-0.56$ & -- & $-4.24$ & $-0.10$ & \ding{51}\\
NPO & $-4.30$ & $-2.03$ & -- & $-3.83$ & $-1.64$ & \ding{51} & $-3.61$ & $-2.36$ & \ding{51}\\
SimNPO & $-2.09$ & $-0.59$ & \ding{51} & $-1.34$ & $-0.33$ & \ding{51} & $+0.19$ & $+0.50$ & \ding{51}\\
SAMNPO & $-2.37$ & $-0.65$ & \ding{51} & $-1.60$ & $-0.12$ & \ding{51} & $+0.10$ & $+0.71$ & \ding{51}\\
CRNPO & $-1.74$ & $-0.26$ & \ding{51} & $-0.67$ & $+0.36$ & \ding{51} & $+2.18$ & $+2.49$ & \ding{51}\\
RSNPO & $-2.23$ & $-0.66$ & \ding{51} & $-1.09$ & $+0.02$ & \ding{51} & $-1.20$ & $-0.15$ & \ding{51}\\
SWANPO & $-4.32$ & $-2.02$ & -- & $-3.77$ & $-1.60$ & \ding{51} & $-3.68$ & $-2.42$ & \ding{51}\\
LATNPO & $-4.38$ & $-2.13$ & -- & $-3.80$ & $-1.61$ & \ding{51} & $-3.52$ & $-2.19$ & \ding{51}\\
ILUNPO & $-4.32$ & $-2.06$ & -- & $-3.74$ & $-1.58$ & \ding{51} & $-3.56$ & $-2.29$ & \ding{51}\\
RNANPO & $-3.81$ & $-1.80$ & \ding{51} & $-3.74$ & $-1.50$ & \ding{51} & $-3.54$ & $-2.25$ & \ding{51}\\
UNDIAL & $-2.24$ & $-0.14$ & \ding{51} & $-1.63$ & $+0.21$ & \ding{51} & $-1.01$ & $+0.47$ & \ding{51}\\
JensUn & $-2.11$ & $-0.56$ & \ding{51} & $-1.24$ & $-0.10$ & \ding{51} & $-0.08$ & $+0.28$ & \ding{51}\\
\midrule
$\theta_{\text{ref}}$ (retain90) & $-5.73$ & $-4.01$ & --- & $-5.74$ & $-4.17$ & --- & $-6.17$ & $-4.93$ & ---\\
\bottomrule
\end{tabular}
\caption{Empirical verification of the average-log-odds premise of
Theorem~\ref{thm:kkt-cliff} on TOFU \texttt{forget10} (200 samples, diagnostic
positions of Eq.~\eqref{eq:margin-agg}, one forward pass per checkpoint).
Per size: $\ell(\hat\theta)$ is the mean diagnostic-position gold log-odds,
$m_{\hat\theta}$ the measured margin diagnostic, and \emph{c.} whether the
average-form bound certifies the cliff ($\ell > m_{\text{ref}}$, with
$m_{\text{ref}} = -4.01 / -4.17 / -4.93$).
The unconditional inequality $m_{\hat\theta} \ge \ell(\hat\theta)$ holds in
every populated cell; the bound certifies $9/14$ at 1B, $13/14$ at 3B, $12/14$ at 8B.
Uncertified cells are loose rather than violated: diagnostic positions are
maximum-entropy positions carrying diffuse competitor mass (measured
top-competitor probability far below the worst-case $1 - p \approx 1$), and
measured $\Delta$ stays positive in every cell except GradDiff at 8B, the utility-collapsed boundary case discussed in \S\ref{sec:m-cliff}.}
\label{tab:a-t-logodds}
\end{table*}

\paragraph{Remark (per-token floor as an informal sufficient condition).}
If every suppressed forget token additionally obeyed a worst-case floor
$p_t \ge p_\star$, Step~3 would give $\ell_\star \ge \log\frac{p_\star}{1-p_\star}$, but such a floor
is empirically vacuous at the converged checkpoints (measured minimum token gold probabilities reach
$10^{-13}$--$10^{-5}$ at 1B, App.~\ref{app:logodds-verify}), which is precisely why
Assumption~\ref{asm:overlap} is stated at the aggregate level, where $\ell$ is dominated by typical
diagnostic tokens and is robust to a small mass of extreme outliers.

\paragraph{Remark (independence from the position-selection rule).}
The max-entropy rule enters the proof only through the choice of evaluation positions. The
competitor-mass bound of Step~3 holds at \emph{every} answer position, so Steps~3--4 yield the same
cliff bound for any measurable selection rule $\hat t(x, y)$, with the floor $\ell$ and the reference
level $m_{\text{ref}}$ re-evaluated at the positions that rule selects. The max-entropy instantiation
is the one Eq.~\eqref{eq:margin-agg} reports, and nothing in the argument privileges it.

\subsection{Proof of Theorem~\ref{thm:v3-cliff-cross}}

\begin{proof}
Suppose, for contradiction, that $\theta^\star \in \Theta_0$ is a strict local minimum of
$\mathcal{L}_{\text{polish}} = \mathcal{L}_{\text{native}} + \mathcal{L}_{\text{forget}} + \lambda_{\text{KL}} \mathcal{L}_{\text{KL}}$
with $\Delta(\theta^\star) \ge \delta$.

\paragraph{Step 1 (forget gradient dominates above the threshold).}
By the directional coercivity hypothesis there is a unit vector $u$ with
$u^\top \xi \ge G_f(\delta) > 0$ for every $\xi \in \partial \mathcal{L}_{\text{forget}}(\theta^\star)$.
At points of differentiability this reads
$u^\top \nabla_\theta \mathcal{L}_{\text{forget}}(\theta^\star) \ge G_f(\delta) > 0$, which is the case
we write out (competitor ties are handled by the same hypothesis through the remark below).
This is the sole substantive hypothesis, and it is a genuine assumption about the hinge landscape.
$\Delta(\theta) \ge \delta$ does not by itself force per-token training gaps (the diagnostic
compares each model at its \emph{own} max-entropy positions, while the hinge acts on common-position
gaps $m_\theta(t) - m_{\text{mar}}(t)$), so nonemptiness of the super-threshold set
$S_\delta := \{(x,t) : m_\theta(t) - m_{\text{mar}}(t) \ge \delta\}$ does not follow from $\Delta$
alone. A sufficient primitive condition, on any set of parameters where $\mu(S_\delta(\theta))$ is
bounded below, with $\mu$ the empirical probability measure over (sample, token) pairs in the
hinge average so that complements have mass at most $1$, is a \emph{common ascent direction}. If there is a unit
vector $u$ with $u^\top \nabla_\theta m_\theta(t) \ge \gamma > 0$ for every token whose gap exceeds
$-\delta'$ (and $\|\nabla_\theta m_\theta(t)\| \le G_{\text{tok}}$ throughout $\Theta_0$), then, since the hinge weight
$\sigma(\kappa(m_\theta(t) - m_{\text{mar}}(t)))$ is at least $\sigma(\kappa\delta)$ on $S_\delta$,
nonnegative everywhere, and at most $e^{-\kappa\delta'}$ on tokens with gap below $-\delta'$,
projecting on $u$ gives
\[
u^\top \nabla_\theta \mathcal{L}_{\text{forget}}(\theta)
 \ge  \sigma(\kappa\delta)\, \gamma\, \mu(S_\delta)  -  e^{-\kappa\delta'} G_{\text{tok}} ,
\]
yielding a uniform $G_f(\delta) > 0$ precisely when $\gamma$ and $\mu(S_\delta)$ admit uniform lower
bounds over $\{\theta \in \Theta_0 : \Delta(\theta) \ge \delta\}$. We also probe the coercivity
empirically. Instrumenting the polish (hinge-only gradient on a
fixed probe batch, logger in the code release) gives $\|\nabla_\theta \mathcal{L}_{\text{forget}}\| \ge 5.4$ (starting at $12.2$)
at every logged point until the diagnostic has crossed the reference, decaying to zero only after the
trajectory has descended far below the anchor (Fig.~\ref{fig:traj}, panel~d), the one-sided
stationary set of Theorem~\ref{thm:v3-cliff-cross}, reached on the crossing side only. A single
trajectory samples the region rather than covering it, so this is the premise's observable footprint
and failure check, not a certification of the region-wide bound, which functions as a proof device.

\begin{figure}[t]
  \centering
  \includegraphics[width=\columnwidth]{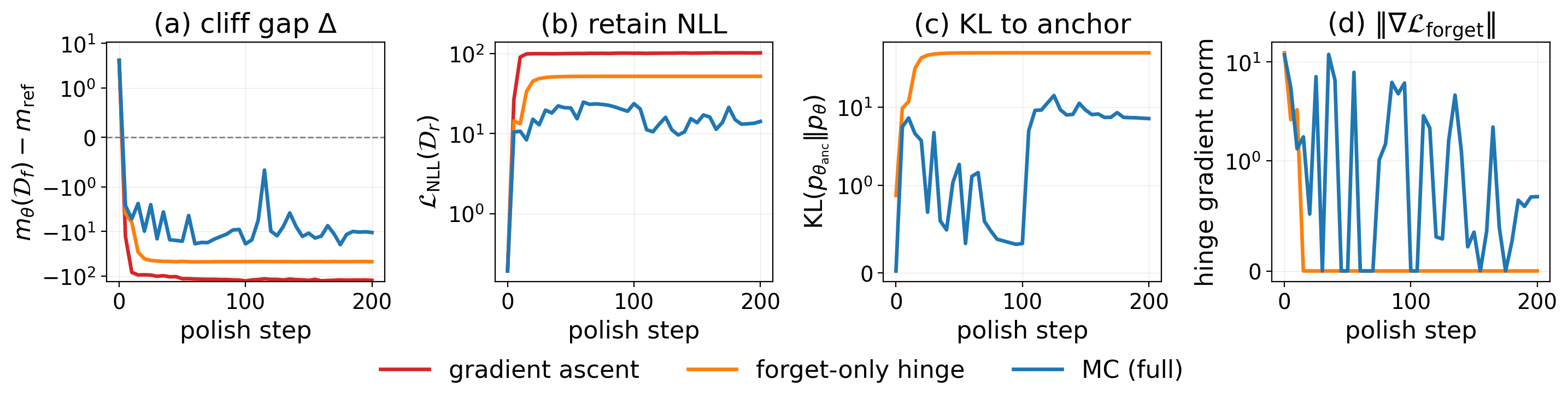}
  \caption{Cliff trajectory over 200 polish steps on CRNPO at 1B \texttt{forget10}, for three variants (gradient ascent in red, forget-only hinge in orange, full \textsc{MC} in blue). Panel (a) plots the cliff gap $\Delta = m_\theta(\mathcal{D}_f) - m_{\mathrm{ref}}$ under the evaluation diagnostic of Eq.~\eqref{eq:margin-agg} (max-entropy position) on a symmetric log scale, panel (b) plots the retain-set NLL as a utility proxy logged during polish, and panel (c) plots the KL probe to the reference anchor on Alpaca (gradient ascent has no anchor in its objective and therefore no curve in this panel by construction). Panel (d) plots the hinge gradient norm $\|\nabla_\theta\mathcal{L}_{\mathrm{forget}}\|$ on a fixed probe batch during instrumented reruns of the two hinge variants (gradient ascent has no hinge and no curve, as in panel c). The norm stays bounded away from zero on the positive-cliff side and decays toward zero only after the trajectory has descended far below the anchor, the on-trajectory signature of the coercivity premise of Theorem~\ref{thm:v3-cliff-cross} (necessary along the observed path, though a single trajectory cannot certify the region-wide statement). Gradient ascent drives retain NLL into the hundreds within roughly ten steps, the forget-only hinge crosses the cliff but lets retain NLL and anchor KL drift, while full \textsc{MC} crosses the cliff with bounded retain NLL and KL. The logger replicates the polish objective with single-sample steps at a fixed learning rate over a 200-step horizon for illustration, longer than the 80-step production budget (\S\ref{sec:exp-setup}), and for the CRNPO base shown here the production polish terminates far shallower ($\Delta = -0.9$, App.~\ref{app:cliff-numbers}).}
  \label{fig:traj}
\end{figure}

\paragraph{Step 2 (the competing gradients are bounded above).}
By Lemma~\ref{lem:kl-bounded}, $\|\nabla_\theta \mathcal{L}_{\text{KL}}(\theta^\star)\| \le G_{\text{KL}}^{\max}$,
uniformly and independently of any forget probability. By hypothesis
$\|\nabla_\theta \mathcal{L}_{\text{native}}(\theta^\star)\| \le G_{\text{nat}}$. The polish initializes
at $\hat\theta_{\text{base}}$, at or near a stationary point of $\mathcal{L}_{\text{native}}$ (near for
variants whose training-time tricks are stripped, App.~\ref{app:loss-saturation}), so the native
gradient is small near the start and, like the probe gradient, bounded by continuity on the compact
polish region.

\paragraph{Step 3 (stationarity contradiction).}
Projecting the polish gradient on $u$ and using $u^\top v \ge -\|v\|$ together with the hypothesis
$\lambda_{\text{KL}} G_{\text{KL}}^{\max} + G_{\text{nat}} < G_f(\delta)$,
\begin{align*}
&\left\| \nabla_\theta \mathcal{L}_{\text{polish}}(\theta^\star) \right\|
 \ge  u^\top \nabla_\theta \mathcal{L}_{\text{polish}}(\theta^\star)
 \ge  u^\top \nabla_\theta \mathcal{L}_{\text{forget}}(\theta^\star) \\
&\quad - \lambda_{\text{KL}} \left\| \nabla_\theta \mathcal{L}_{\text{KL}}(\theta^\star) \right\|
   - \left\| \nabla_\theta \mathcal{L}_{\text{native}}(\theta^\star) \right\| \\
&\ge  G_f(\delta) - \lambda_{\text{KL}} G_{\text{KL}}^{\max} - G_{\text{nat}}  >  0,
\end{align*}
contradicting the stationarity condition $\nabla_\theta \mathcal{L}_{\text{polish}}(\theta^\star) = 0$
(at nondifferentiable $\theta^\star$, the Clarke extension in the remark below applies).
Hence every strict local minimum (in the unconstrained sense) lying in $\Theta_0$ satisfies $\Delta(\theta^\star) < \delta$.

\paragraph{From $\delta$ to cliff crossing.}
The bound holds for every $\delta > 0$ at which coercivity is available with
$\lambda_{\text{KL}} G_{\text{KL}}^{\max} + G_{\text{nat}} < G_f(\delta)$. If coercivity persists as
$\delta \downarrow 0$, i.e.\
$\liminf_{\delta \to 0^+} G_f(\delta) > \lambda_{\text{KL}} G_{\text{KL}}^{\max} + G_{\text{nat}}$, then
$\Delta(\theta^\star) \le 0$, and reference-anchored \textsc{MC} admits no positive-cliff strict local
minimum. For the deployment variant ($m_{\text{mar}} = m_{\theta_0}$, retain hinge in place of the KL
probe, \S\ref{sec:m-v3}), the identical dominance argument applies with a re-anchored coercivity
hypothesis (stated with the deployment gap $m_\theta - m_{\theta_0}$ in place of $\Delta$) and with
the retain-hinge gradient in place of the KL bound (the hinge is locally Lipschitz, so a
Clarke-subgradient bound on the compact region replaces the $C^1$ argument of
Lemma~\ref{lem:kl-bounded}), but yields only the deployment-gap conclusion, no strict local minimum above $\theta_0$'s margin level,
which is weak, since the polished model starts \emph{below} that level already. The deployment
variant's crossing of $m_{\text{ref}}$ is an empirical finding (Table~\ref{tab:t4-deploy}), not a
consequence of this theorem.
\end{proof}

\subsection{Proof of Theorem~\ref{thm:relearn-bound}}

\begin{proof}
\textbf{Setup.} Fix the defender's diagnostic positions $\hat t := \hat t_{\hat\theta}(x, y)$ once, at
the pre-attack checkpoint. The $\beta$-smoothed diagnostic is
\[
m_\beta(\theta) := \mathbb{E}_{\mathcal{D}_f}\Big[ \log p_\theta(y_{\hat t} \mid \cdot)
 - \mathrm{LSE}_\beta\big(\{\log p_\theta(v \mid \cdot)\}_{v \neq y_{\hat t}}\big) \Big],
\]
with $\mathrm{LSE}_\beta(x) := \tfrac{1}{\beta}\log\sum_v e^{\beta x_v}$. Since
$\max \le \mathrm{LSE}_\beta \le \max + \log(V{-}1)/\beta$, the frozen-position hard margin $m$
satisfies $m_\beta \le m \le m_\beta + c_\beta$ with $c_\beta = \log(V{-}1)/\beta$. As a composition of
smooth maps on the compact convex $\Theta_A$, $m_\beta$ is $C^1$ with $L_m$-Lipschitz gradient and
$G_m := \sup_{\Theta_A}\|\nabla m_\beta\| < \infty$. Positions are fixed, so no selection switching
occurs along any trajectory.

\paragraph{Step 1 (bounded increments).}
By the class definition, every attacker step satisfies
$\|\theta^{(n+1)} - \theta^{(n)}\| = \|\delta^{(n)}\| \le \eta H$, and every segment
$[\theta^{(n)}, \theta^{(n+1)}]$ lies in $\Theta_A$ by convexity.

\paragraph{Step 2 (per-step lift via smoothness).}
The quadratic upper bound for the $L_m$-smooth $m_\beta$ on each segment gives
{\small
\begin{align*}
m_\beta(\theta^{(n+1)})
&\le{} m_\beta(\theta^{(n)})
  + \nabla m_\beta(\theta^{(n)})^\top \delta^{(n)} \\
  &+ \tfrac{L_m}{2} \|\delta^{(n)}\|^2
\;\le\; m_\beta(\theta^{(n)}) + \eta G_m H + \tfrac{L_m}{2} \eta^2 H^2 ,
\end{align*}
}
using Cauchy--Schwarz for the inner product (the worst case is a step perfectly aligned to raise the
margin).

\paragraph{Step 3 (telescoping and bracket transfer).}
Summing over $N$ steps, $m_\beta(\tilde\theta_N) \le m_\beta(\hat\theta) + \eta N G_m H + \tfrac12 L_m
\eta^2 N H^2$. Transferring through the bracket ($m \le m_\beta + c_\beta$ at $\tilde\theta_N$ and
$m_\beta \le m$ at $\hat\theta$) and subtracting the constant $m_{\text{ref}}(\mathcal{D}_f)$,
\[
\Delta(\tilde\theta_N)
\le \Delta(\hat\theta) + \eta N G_m H + \tfrac{1}{2} L_m \eta^2 N H^2 + c_\beta ,
\]
which is exactly~\eqref{eq:relearn-bound}. All constants are defined on $\Theta_A$ before any attack
runs, so the bound is uniform over the class $\mathcal{A}(\eta, N, H, \Theta_A)$.

\begin{figure*}[t]
  \centering
  \includegraphics[width=\textwidth]{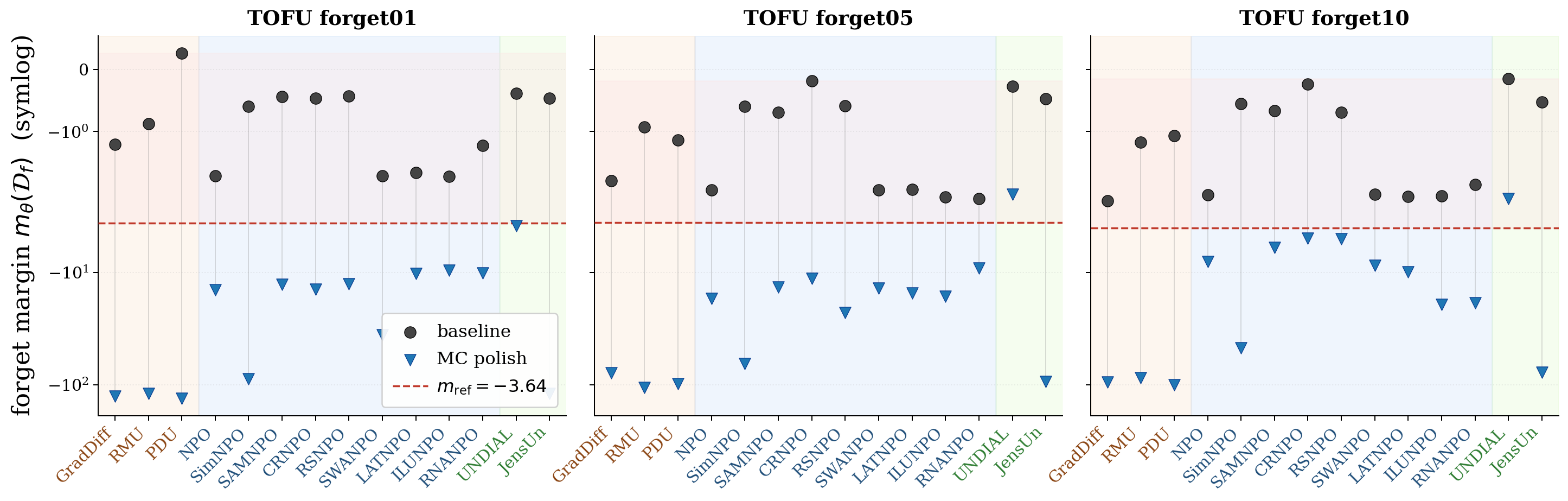}
  \caption{Tier stability of the margin cliff at Llama-3.2-1B. Forget-set margin diagnostic (Eq.~\eqref{eq:margin-agg}) for fourteen baseline methods (gray circles) and their \textsc{MC}-polished counterparts (blue triangles) at TOFU forget01, forget05, and forget10. The retain reference $m_{\mathrm{ref}}$ (red dashed) is the same retain90 checkpoint as in Figure~\ref{fig:cliff} evaluated on each forget split, giving $m_{\mathrm{ref}}=-3.64 / -3.59 / -4.01$ at forget01 / forget05 / forget10 respectively. Retain90 serves as the common evaluation yardstick across tiers, while the tier \emph{calibrations} use the matched retain-99/95 references as training anchors (\S\ref{sec:m-v3}, Table~\ref{tab:a-t2-forget01}). Baseline margins cluster well above $m_{\mathrm{ref}}$ at every tier, and \textsc{MC} crosses the cliff at every tier for $13$ of $14$ methods (the exception is UNDIAL at forget05/10, \S\ref{sec:m-thm2}).}
  \label{fig:cliff-tiers}
\end{figure*}

\paragraph{Cliff-crossing consequence.}
Write the \emph{lift budget} $\Lambda(N) := \eta N G_m H + \tfrac12 L_m \eta^2 N H^2 + c_\beta$, an
a-priori quantity of the attack class with no dependence on the starting gap. Then
$\Delta(\tilde\theta_N) < 0$ for \emph{every} attacker in the class whenever
$-\Delta(\hat\theta) > \Lambda(N)$. A checkpoint that begins sufficiently far below the cliff cannot be
pulled back across it within the class budget. A baseline with $\Delta(\hat\theta) \ge 0$ already sits
on the positive-cliff side, where the attacker needs no lift at all. Comparisons \emph{across} attack
classes (larger rank, FPFT) are not ordered by the theorem (each class has its own realized $H$)
and are settled empirically (App.~\ref{app:strong-attackers}). We also instrument the bound on the
paper's own attack trajectories (App.~\ref{app:thm3-constants}). The one-sided inequality holds with
room on every instrumented trajectory, and its sup-based constants make $\Lambda(N)$ conservative by
one to two orders of magnitude, so the certificate is a sufficient condition that defines the
attack-budget scaling while the operational K-sweep carries the empirical robustness evidence
(App.~\ref{app:attacker-scaling}).
\end{proof}

\paragraph{Remark (implicit selection by the anchored hinge).}
Although Eq.~\eqref{eq:v3-forget} averages over all answer positions while the
diagnostic of Eq.~\eqref{eq:margin-agg} selects a single maximum-entropy
position, the two are coupled through the reference. The hinge's per-token
activation $\sigma(\kappa(m_\theta(t) - m_{\text{ref}}(t)))$ weights each
position by its gap, and the gap is largest where the reference is most
uncertain, which is the property the entropy rule selects for. Measured at the
polish starting point on $100$ forget10 samples at 1B, the activation
rank-correlates with the model's per-token entropy at Spearman $+0.40$ for NPO
and $+0.51$ for UNDIAL, and the diagnostic token sits at the $0.76$ and $0.79$
hinge-weight quantile respectively, so for saturating bases the uniform hinge
already concentrates its pressure near the positions the diagnostic selects.
For GradDiff, whose content margins start near or below the reference, the
correlation is weak ($-0.13$) and the residual pressure falls on easier
shared-structure tokens instead, the base-dependent leak that the weighted
variants of the next remark target explicitly.

\paragraph{Remark (token-weighted hinge variants).}
Eq.~\eqref{eq:v3-forget} is stated with uniform token weights. Replacing the
uniform mean by a nonnegative weighting $w_t$ summing to one leaves
Theorem~\ref{thm:v3-cliff-cross} and its proof unchanged provided the
weighting does not break local Lipschitz continuity of the objective, which
holds in two natural cases, weights \emph{frozen} in $\theta$ (computed once
from the margin anchor, e.g.\ its high-entropy positions) and weights
\emph{continuous} in $\theta$ under stop-gradient (e.g.\ a softmax over the
current model's entropies). A hard top-$K$ selection re-evaluated on the
current model is excluded, as its weight vector jumps at entropy-rank ties
and the objective becomes discontinuous, the same selection-switching issue
that Theorem~\ref{thm:relearn-bound} avoids by freezing positions. In the
admitted cases the theorem is unchanged, since
$\partial \mathcal{L}_{\text{forget}}$ remains a nonnegative combination of
per-token hinge subgradients and the directional coercivity hypothesis is
stated over exactly that structure. Two refinements follow when the weighting
concentrates on high-entropy positions. First, the sufficient condition of
Step~1 scales with the \emph{weighted} mass of the super-threshold set, and
whenever the weighting concentrates on $S_\delta$ this mass exceeds the
uniform fraction $\mu(S_\delta)$, enlarging $G_f(\delta)$ and loosening the
dominance requirement. Second, the caveat of Step~1 (that $\Delta \ge
\delta$ alone does not populate $S_\delta$, because the diagnostic and the
hinge act at different positions) narrows, since the weighted hinge
concentrates its mass at the same maximum-entropy positions the diagnostic
selects. The uniform instantiation is the one all reported results use, and
App.~\ref{app:hinge-ew} evaluates both admitted variants empirically.

\paragraph{Remark (nonsmoothness of the margin).}
The per-token margin~\eqref{eq:margin-def} contains a max over competitors and is differentiable only
where each token's strongest competitor is unique, and ties form a measure-zero set in parameter space. All
statements extend across ties, with one care point. In Theorem~\ref{thm:v3-cliff-cross}, the extension
runs through the \emph{directional} coercivity form. A bound $u^\top g \ge G_f(\delta)$ with a
\emph{common} unit vector $u$ is linear in
$g$ and therefore survives the convex combinations and limits that define the Clarke subdifferential
(a norm lower bound alone would not, because convex combinations of large-norm gradients can vanish, as for
$|x|$ at $0$, and per-element $u$'s would not either, since combinations could escape each). The
theorem's hypothesis is stated in exactly this form (one $u_\theta$ bounding every
$g \in \partial \mathcal{L}_{\text{forget}}(\theta^\star)$), so every
$\xi \in \partial \mathcal{L}_{\text{polish}}(\theta^\star) \subseteq \nabla \mathcal{L}_{\text{native}}
+ \lambda_{\text{KL}} \nabla \mathcal{L}_{\text{KL}} + \partial \mathcal{L}_{\text{forget}}$
satisfies $u^\top \xi > 0$, so $0 \notin \partial \mathcal{L}_{\text{polish}}(\theta^\star)$ and the
contradiction is unchanged. Theorem~\ref{thm:relearn-bound} avoids both nonsmoothness sources by construction rather than by
limit-taking. It is stated for the $\beta$-smoothed, position-frozen functional $m_\beta$ at a
\emph{fixed} temperature, which is $C^\infty$, and the hard diagnostic enters only through the explicit
bracket $c_\beta = \log(V{-}1)/\beta$ that appears in the bound. Sharpening $\beta \to \infty$ (or a
softmax position selection toward the hard $\arg\max_t$) is deliberately \emph{not} taken, since the
smoothness constants would diverge in that limit, which is why the slack is carried additively
instead.

\begin{table*}[tp]
  \centering
  \scriptsize
  \setlength{\tabcolsep}{3pt}
  \caption{\textbf{Extended per-method panel} at 1B forget10. Each cell shows \emph{baseline} $\to$ \textsc{MC}. F.agg / ES / EM / MIA.agg / K20-LoRA / K20-FPFT lower is better; MU / KS-$p$ higher is better (KS-$p$ is the \textsc{MC} checkpoint's value, single column). Bold marks \textsc{MC} improvement. Extended companion to Table~\ref{tab:t1-main}.}
  \label{tab:a-t1-ext}
  \resizebox{\textwidth}{!}{%
  \begin{tabular}{ll ccccc cc cc}
    \toprule
    Base & Family & F.agg & ES & EM & MIA.agg & MU & KS-$p$ & priv.\,leak & K20-LoRA & K20-FPFT \\
    \midrule
    \texttt{GradDiff} & GradDiff & 0.357$\!\to\!$\textbf{0.018} & 0.122$\!\to\!$\textbf{0.002} & 0.737$\!\to\!$\textbf{0.050} & 0.791$\!\to\!$\textbf{0.228} & 0.470$\!\to\!$0.109 & 0.0000 & 33.220 & 0.429$\!\to\!$\textbf{0.158} & 0.408$\!\to\!$\textbf{0.088} \\
    \texttt{RMU} & GradDiff & 0.279$\!\to\!$\textbf{0.002} & 0.062$\!\to\!$\textbf{0.000} & 0.556$\!\to\!$\textbf{0.005} & 0.394$\!\to\!$0.466 & 0.571$\!\to\!$0.000 & 0.0021 & -2.278 & 0.471$\!\to\!$\textbf{0.029} & 0.398$\!\to\!$\textbf{0.031} \\
    \texttt{PDU} & GradDiff & 0.024$\!\to\!$\textbf{0.009} & 0.033$\!\to\!$\textbf{0.000} & 0.057$\!\to\!$\textbf{0.023} & 0.472$\!\to\!$\textbf{0.226} & 0.000$\!\to\!$\textbf{0.127} & 0.0000 & 36.470 & 0.428$\!\to\!$\textbf{0.152} & 0.437$\!\to\!$\textbf{0.027} \\
    \midrule
    \texttt{SimNPO} & SimNPO & 0.533$\!\to\!$\textbf{0.010} & 0.232$\!\to\!$\textbf{0.000} & 0.864$\!\to\!$\textbf{0.021} & 0.889$\!\to\!$\textbf{0.129} & 0.594$\!\to\!$0.179 & 0.0000 & 39.574 & 0.458$\!\to\!$\textbf{0.025} & 0.450$\!\to\!$\textbf{0.026} \\
    \midrule
    \texttt{NPO} & NPO & 0.296$\!\to\!$\textbf{0.041} & 0.095$\!\to\!$\textbf{0.002} & 0.637$\!\to\!$\textbf{0.117} & 0.716$\!\to\!$\textbf{0.165} & 0.300$\!\to\!$0.211 & 0.0000 & 29.486 & 0.348$\!\to\!$\textbf{0.246} & 0.342$\!\to\!$\textbf{0.246} \\
    \texttt{SAMNPO} & NPO & 0.762$\!\to\!$\textbf{0.104} & 0.558$\!\to\!$\textbf{0.017} & 0.951$\!\to\!$\textbf{0.299} & 0.980$\!\to\!$\textbf{0.241} & 0.599$\!\to\!$0.031 & 0.0085 & 22.465 & 0.414$\!\to\!$\textbf{0.201} & 0.401$\!\to\!$\textbf{0.146} \\
    \texttt{ILUNPO} & NPO & 0.300$\!\to\!$\textbf{0.027} & 0.095$\!\to\!$\textbf{0.004} & 0.646$\!\to\!$\textbf{0.068} & 0.723$\!\to\!$\textbf{0.106} & 0.337$\!\to\!$0.054 & 0.0001 & 29.536 & 0.343$\!\to\!$\textbf{0.205} & 0.335$\!\to\!$\textbf{0.122} \\
    \texttt{LATNPO} & NPO & 0.298$\!\to\!$\textbf{0.043} & 0.095$\!\to\!$\textbf{0.002} & 0.643$\!\to\!$\textbf{0.118} & 0.723$\!\to\!$\textbf{0.160} & 0.336$\!\to\!$0.162 & 0.0010 & 30.904 & 0.342$\!\to\!$\textbf{0.247} & 0.337$\!\to\!$\textbf{0.219} \\
    \texttt{RNANPO} & NPO & 0.277$\!\to\!$\textbf{0.005} & 0.091$\!\to\!$\textbf{0.000} & 0.635$\!\to\!$\textbf{0.015} & 0.735$\!\to\!$\textbf{0.114} & 0.323$\!\to\!$0.150 & 0.0000 & 38.174 & 0.366$\!\to\!$\textbf{0.165} & 0.357$\!\to\!$\textbf{0.041} \\
    \texttt{RSNPO} & NPO & 0.528$\!\to\!$\textbf{0.114} & 0.194$\!\to\!$\textbf{0.011} & 0.835$\!\to\!$\textbf{0.353} & 0.948$\!\to\!$\textbf{0.254} & 0.500$\!\to\!$0.070 & 0.2705 & 9.143 & 0.442$\!\to\!$\textbf{0.195} & 0.427$\!\to\!$\textbf{0.154} \\
    \texttt{SWANPO} & NPO & 0.300$\!\to\!$\textbf{0.052} & 0.094$\!\to\!$\textbf{0.000} & 0.646$\!\to\!$\textbf{0.144} & 0.724$\!\to\!$\textbf{0.268} & 0.338$\!\to\!$0.110 & 0.0006 & 22.267 & 0.349$\!\to\!$\textbf{0.253} & 0.339$\!\to\!$\textbf{0.145} \\
    \texttt{CRNPO} & NPO & 0.890$\!\to\!$\textbf{0.026} & 0.810$\!\to\!$\textbf{0.002} & 0.982$\!\to\!$\textbf{0.096} & 0.998$\!\to\!$\textbf{0.228} & 0.592$\!\to\!$0.040 & 0.7934 & 19.513 & 0.579$\!\to\!$\textbf{0.206} & 0.572$\!\to\!$\textbf{0.181} \\
    \midrule
    \texttt{UNDIAL} & Distil & 0.505$\!\to\!$\textbf{0.283} & 0.220$\!\to\!$\textbf{0.060} & 0.841$\!\to\!$\textbf{0.637} & 0.938$\!\to\!$\textbf{0.651} & 0.574$\!\to\!$0.207 & 0.0085 & -57.751 & 0.381$\!\to\!$\textbf{0.282} & 0.390$\!\to\!$\textbf{0.284} \\
    \texttt{JensUn} & Distil & 0.708$\!\to\!$\textbf{0.039} & 0.680$\!\to\!$\textbf{0.008} & 0.945$\!\to\!$\textbf{0.117} & 0.995$\!\to\!$\textbf{0.210} & 0.582$\!\to\!$0.090 & 0.0043 & 35.438 & 0.447$\!\to\!$\textbf{0.147} & 0.423$\!\to\!$\textbf{0.079} \\
    \bottomrule
  \end{tabular}%
  }
\end{table*}

\section{Loss-saturation verification}
\label{app:loss-saturation}

We verify the mechanism behind cliff termination for representatives of all three loss families in the 14-method panel. The NPO family and JensUn satisfy Definition~\ref{def:forget-saturating}, while the GradDiff family and UNDIAL, whose forget gradients are bounded but non-vanishing, are covered by assuming the diagnostic floor of Asm.~\ref{asm:overlap} directly.

\paragraph{GradDiff family (floor assumed directly).} The forget objective ascends the forget NLL, so the loss being minimized is $\mathcal{L}_f = \log p_\theta(y_f \mid x_f)$. Its gradient in logit space is $(e_{y_f} - p_\theta)$, where $e_{y_f}$ is the unit vector at the gold token. As $p_\theta(y_f) \to 0$, the gold-direction component of this gradient approaches magnitude $1$, so the forget gradient is \emph{bounded} (by $\sqrt{2} \|\partial z/\partial \theta\|$) but \emph{does not vanish}, and GradDiff is not token-saturating. Because the forget push stays $O(1)$ rather than vanishing, stationarity alone does not force a floor, since the balance $\nabla \mathcal{L}_f(\theta^\star) = -\lambda \nabla \mathcal{L}_r(\theta^\star)$ is satisfiable at arbitrarily small forget probability unless the retain pull dominates the $O(1)$ forget push below some level, a retain-coercivity property we do not derive. We therefore cover the GradDiff family by \emph{assuming} the diagnostic floor of Asm.~\ref{asm:overlap} for it directly. Like the rest of the panel, the assumption is checked per checkpoint (App.~\ref{app:logodds-verify}), and its failure mode is visible in the data, where the GradDiff cell at 8B terminates \emph{below} $m_{\text{ref}}$ ($\Delta \approx -0.5$) with collapsed general utility (real-author and world-fact ROUGE $\le 0.09$), exactly the regime where the retain pull fails to hold the level, degenerate in the same sense as GradAscent (App.~\ref{app:gradascent}). RMU and PDU inherit this structure with additional representation-space regularizers.

\paragraph{NPO family.} The per-sample preference loss (as published and as implemented in our
trainer) applies the sigmoid to the \emph{sequence-summed} log-ratio,
$\ell_f = -\tfrac{2}{\beta}\log\sigma\big({-}\beta \sum_s r_s\big)$ (NPO's own hyperparameter
$\beta$, unrelated to the smoothing temperature of Thm.~\ref{thm:relearn-bound}) with
$r_s := \log p_s - \log p_{0,s}$ and $p_{0,s}$ the base model's probability. The gold-logit partial is
$\partial \ell_f / \partial z_{y_t} = 2\,\sigma\big(\beta \sum_s r_s\big)(1 - p_t)$, a single sigmoid
envelope shared across positions. Unlearning only suppresses gold probabilities relative to $\theta_0$
on $\mathcal{D}_f$ ($r_s \le 0$ for all answer tokens, which we observe at every checkpoint), so
$\sum_s r_s \le r_t$ and hence
$|\partial \ell_f / \partial z_{y_t}| \le 2\,\sigma(\beta r_t) \le 2 (p_t / p_{0,t})^{\beta}$. With
$\theta_0$ memorizing the content tokens ($p_{0,t}$ near $1$), this is
Definition~\ref{def:forget-saturating} with $\rho(p) \propto p^\beta$. The shared envelope makes the
saturation \emph{stronger} than a per-token version. Once any content token is deeply suppressed, the
sample's entire forget gradient, easy tokens included, collapses with it. SAMNPO, ILUNPO,
LATNPO, RNANPO, RSNPO, SWANPO, CRNPO add adversarial perturbations, sharpness-aware updates, or
weight-space smoothing that do not change the leading-order saturation. SimNPO replaces the reference
dependence but keeps the bounded sigmoid envelope.

\paragraph{Distillation family.} The two distillation losses behave differently under
Definition~\ref{def:forget-saturating} and must be treated separately.

\emph{UNDIAL (anchor-saturating, floor assumed directly).} As implemented (and as published), the
per-position loss is the cross-entropy of the student against a \emph{fixed} flattened teacher
target $q = \mathrm{softmax}(z_{\text{teacher}} - \beta_{\text{UD}}\, e_{y_t})$, whose gold-logit
partial is the standard softmax-CE difference
\[
\frac{\partial \phi}{\partial z_{y_t}} = p_t - q_t .
\]
As $p_t \to 0$ this tends to $-q_t \neq 0$ ($q_t$ is a positive constant of the frozen teacher), so
\emph{UNDIAL is not token-saturating}, since no envelope $\rho$ with $\rho(p) \to 0$ can bound the
partial. Its gradient instead vanishes as $p_\theta \to q$, a saturation at the \emph{anchor}, not
at zero probability. Like the GradDiff family above, UNDIAL therefore carries a bounded,
non-vanishing forget-side push and is covered by assuming the diagnostic floor of
Asm.~\ref{asm:overlap} directly. The anchor-saturation mechanism makes the floor especially natural
here (the loss actively pins margins at the flattened target's level) and is consistent with
UNDIAL being the flattest-target base and the only method contributing non-crossing cells under
\textsc{MC} polish (\S\ref{sec:exp-main}).

\emph{JensUn (token-saturating).} JensUn minimizes the (bounded, symmetric) Jensen--Shannon
divergence to a target that places vanishing mass on the gold token. Writing
$m = \tfrac12(p_\theta + p_{\text{tgt}})$, direct computation via
$\partial p_v / \partial z_y = p_v(\delta_{vy} - p_y)$ gives a gold-logit partial of the form
$\tfrac12 p_t\big[\log(p_t/m_t) - \mathrm{KL}(p_\theta \| m)\big] = p_t \cdot O(1 + |\log p_t|)$,
which vanishes as $p_t \to 0$, matching Definition~\ref{def:forget-saturating} with
$\rho(p) = p\,(|\log p| + \log V)$ up to constants.

\begin{table}[htbp]
  \centering
  \scriptsize
  \setlength{\tabcolsep}{4pt}
  \caption{\textbf{Per-base \textsc{MC} cliff gap} $\Delta(\hat\theta_{\text{MC}})$ (pipeline evaluator, diagnostic of Eq.~\eqref{eq:margin-agg}; reference margins $-3.64$, $-3.59$, $-4.01$, $-4.17$, $-4.93$ per column). Negative crosses the cliff; the only positive cells are UNDIAL (4 of its 5 panels), giving the $66/70$ count of \S\ref{sec:m-thm2}.}
  \label{tab:a-t-mc-delta}
  \begin{tabular}{lccccc}
    \toprule
    Base & 1B f01 & 1B f05 & 1B f10 & 3B f10 & 8B f10 \\
    \midrule
    \texttt{GradDiff} & $-123$ & $-74$ & $-91$ & $-82$ & $-247$ \\
    \texttt{RMU} & $-116$ & $-103$ & $-82$ & $-76$ & $-138$ \\
    \texttt{PDU} & $-129$ & $-94$ & $-96$ & $-109$ & $-243$ \\
    \texttt{NPO} & $-11$ & $-13$ & $-4.0$ & $-13$ & $-19$ \\
    \texttt{SAMNPO} & $-9.1$ & $-10.0$ & $-2.0$ & $-13$ & $-23$ \\
    \texttt{ILUNPO} & $-5.9$ & $-13$ & $-15$ & $-18$ & $-23$ \\
    \texttt{LATNPO} & $-6.6$ & $-12$ & $-5.9$ & $-24$ & $-13$ \\
    \texttt{RNANPO} & $-6.5$ & $-5.6$ & $-15$ & $-16$ & $-27$ \\
    \texttt{RSNPO} & $-9.0$ & $-19$ & $-1.0$ & $-18$ & $-14$ \\
    \texttt{SWANPO} & $-33$ & $-10$ & $-4.6$ & $-24$ & $-22$ \\
    \texttt{CRNPO} & $-11$ & $-7.7$ & $-0.9$ & $-20$ & $-13$ \\
    \texttt{SimNPO} & $-85$ & $-62$ & $-43$ & $-60$ & $-65$ \\
    \texttt{UNDIAL} & $-0.2$ & $+1.6$ & $+1.8$ & $+2.3$ & $+3.0$ \\
    \texttt{JensUn} & $-117$ & $-91$ & $-73$ & $-92$ & $-124$ \\
    \midrule
    \emph{Mean} & $-47.2$ & $-36.6$ & $-30.8$ & $-40.1$ & $-69.1$ \\
    \bottomrule
  \end{tabular}
\end{table}

\paragraph{Empirical check.} At 1B \texttt{forget10}, all 14 baselines terminate with diagnostic-position margins in $[-2.33, -0.14]$ (at least $1.68$ above $m_{\text{ref}} = -4.01$) while their \emph{all-token} mean gold probability remains high ($0.29$--$0.83$). For the token-saturating members of the panel this is precisely the signature of \emph{token-wise} saturation. Content tokens are suppressed to low probability (diagnostic-position gold probability $0.8\%$--$15\%$), where the per-token forget envelope vanishes, while easy continuation tokens remain confident and are irrelevant to the cliff, and the bounded-gradient members (GradDiff family, UNDIAL) terminate in the same band, consistent with the directly assumed floor. Per-size values, including the boundary GradDiff cell at 8B, are in App.~\ref{app:logodds-verify}.

\section{Empirical verification of the average-log-odds premise}
\label{app:logodds-verify}

Table~\ref{tab:a-t-logodds} reports, for each of the 14 baselines at 1B/3B/8B \texttt{forget10}, the
mean diagnostic-position gold log-odds $\ell(\hat\theta)$, the measured margin diagnostic
$m_{\hat\theta}(\mathcal{D}_f)$, and whether the average-form bound of App.~\ref{app:proofs} certifies
the cliff ($\ell > m_{\text{ref}}$), all from a single forward pass per checkpoint over the same 200
forget samples and encoding used by the evaluation pipeline (the probe reproduces the cached
$m_{\text{ref}} = -4.01$ at 1B exactly).

Four observations. \emph{(i)}~The unconditional inequality
$m_{\hat\theta}(\mathcal{D}_f) \ge \ell(\hat\theta)$ holds in every one of the $42$ populated cells,
as the competitor-mass lemma requires.
\emph{(ii)}~The average-form premise
$\ell(\hat\theta) > m_{\text{ref}}$ certifies the cliff for $9/14$ methods at 1B, $13/14$ at 3B, and
$12/14$ at 8B ($34/42$ overall). Uncertified cells are loose rather than violated. Diagnostic
positions are maximum-entropy positions, where competitor mass is diffuse. The measured
top-competitor share of non-gold mass $e^{\ell - m}$ (approximately the top-competitor probability
at the small diagnostic-position gold probabilities observed here) is $0.02$--$0.11$ at 1B, far
below the worst-case $1 - p \approx 1$ that the log-odds relaxation assumes, and the measured $\Delta$ remains positive
in every cell except the utility-collapsed GradDiff cell at 8B (\S\ref{sec:m-cliff}).
\emph{(iii)}~The worst-case per-token floor is empirically vacuous (minimum token gold probabilities
reach $10^{-32}$--$10^{-5}$ across the 42 cells), confirming that the average form, not the worst-case floor, is the
operative premise, which is why Theorem~\ref{thm:kkt-cliff}'s constant should be read through
$\ell(\hat\theta)$. \emph{(iv)}~The certification pattern is not a small-model artifact, as the certified
fraction \emph{rises} with model size on the NPO family, whose envelopes saturate closer to the
reference at 1B.

\section{Why GradAscent is excluded from the 14-method panel}
\label{app:gradascent}

Vanilla gradient ascent minimizes $\mathcal{L}_f = -\mathcal{L}_{\mathrm{NLL}}^{(f)} = \log p_\theta(y_f \mid x_f)$
with no anchor and no retain balance. This objective is \emph{unbounded below} and has no stationary
point. It is not token-saturating in the sense of Def.~\ref{def:forget-saturating} (its gold-logit
partial $\partial \ell_f / \partial z_{y_t} = 1 - p_t$ keeps magnitude $\Theta(1)$ even as
$p_t \to 0$, so no envelope $\rho$ with $\rho(p) \to 0$ bounds it), and with no counter-term the iterates drive every gold
probability toward $0$ without limit. Empirically the per-token margin runs to $-\infty$ and retain
NLL explodes within tens of steps (Fig.~\ref{fig:traj}, panels a--b), MU collapses to $0.000$ at the evaluated checkpoint (Table~\ref{tab:t3-ablation}, GradAscent row), and the model degenerates to noise output. GradAscent is thus the same bounded, non-vanishing forget force as
GradDiff but \emph{without} the retain regularizer that pins GradDiff at a cliff-side stationary point.
Lacking both that balance and an anchor, it does not converge at all. It is a degenerate boundary case,
not a method, and we use it only as a trajectory counterexample.

\section{Extending Theorem~\ref{thm:relearn-bound} to full-parameter attackers}
\label{app:fpft-bound}

Theorem~\ref{thm:relearn-bound} is stated for an attack \emph{class} defined by a step budget $H$ and
a region $\Theta_A$, so full-parameter fine-tuning requires no separate derivation. An FPFT attacker
with (possibly clipped) step norms $\|\delta^{(n)}\| \le \eta H_{\text{full}}$ and iterates in
$\Theta_A$ belongs to $\mathcal{A}(\eta, N, H_{\text{full}}, \Theta_A)$, and the bound applies verbatim
with $H \mapsto H_{\text{full}}$,
\[
\Delta(\tilde\theta_N)
 \le
\Delta(\hat\theta) + \eta N\, G_m H_{\text{full}} + \tfrac{1}{2} L_m \eta^2 N\, H_{\text{full}}^2
+ c_\beta .
\]
Whether FPFT realizes larger steps than a rank-$r$ adapter is an empirical property of the runs, not a
consequence of the theorem. In our sweeps the FPFT attacker is the strongest
(App.~\ref{app:strong-attackers}), consistent with larger realized increments, and \textsc{MC}'s
survival there is consistent with its starting depth $-\Delta(\hat\theta_{\text{MC}})$ exceeding the
corresponding class budget.

\section{Cliff observation figure (per size and per forget tier)}
\label{app:cliff-figure}

Figure~\ref{fig:cliff} (main text) shows per-method forget-set margins at 1B/3B/8B for the TOFU forget10 tier. Figure~\ref{fig:cliff-tiers} below confirms the same cliff pattern at forget01 and forget05, so the observation is not specific to the largest tier.

\section{Per-base cliff numbers and retain-hinge versus KL-probe ablation}
\label{app:cliff-numbers}
\label{app:v2-vs-v3}

Table below extends the head-to-head panel at 1B forget10 with forget aggregate, ES, EM, MIA aggregate, MU, KS-$p$, privacy leakage, and K20 recovery under both LoRA and FPFT, baseline $\to$ \textsc{MC}. Table~\ref{tab:a-t-mc-delta} reports the per-base \textsc{MC} cliff gap $\Delta(\hat\theta_{\text{MC}})$ across all five completed TOFU panels, backing the $66/70$ crossing count, the panel means quoted in \S\ref{sec:m-thm2} (e.g.\ $-30.8$ at 1B \texttt{forget10}), the four UNDIAL exception values, and the CRNPO production-polish gap $-0.9$ cited in Fig.~\ref{fig:traj}. The retain-hinge versus KL-probe ablation (\S\ref{sec:m-v3}) is the two-sided retain-hinge row of Table~\ref{tab:t3-ablation}, and per-base detail is in App.~\ref{app:t3-full}.
\section{Per-cell details for T2}
\label{app:t2-per-cell}
Each sub-table below lists the per-method base$\to$\textsc{MC} numbers that the corresponding row of Table~\ref{tab:t2-robust} compresses to a single win-rate cell. The MUSE-News row uses the 13-method panel (CRNPO omitted) on Llama-2-7B-hf described in \S\ref{sec:exp-robust}.

\begin{table}[htbp]
  \centering
  \scriptsize
  \setlength{\tabcolsep}{3pt}
  \caption{Per-method backing for T2 row \emph{Multi-seed (5 base $\times$ 3 seed, 1B \texttt{forget10})}. Numbers shown are seed-0 (canonical). The T2 row aggregates the 5 representative bases $\times$ 3 seeds under K20-LoRA (15 cells); K20-FPFT was run at seed 0 only, hence its 5-cell count.}
  \label{tab:a-t2-multi-seed}
  \begin{tabular}{l cccc}
    \toprule
    Base & F.agg ($\downarrow$) & MU ($\uparrow$) & MIA.agg ($\downarrow$) & K20-LoRA ($\downarrow$) \\
    \midrule
    \texttt{GradDiff} & 0.357$\!\to\!$\textbf{0.018} & 0.470$\!\to\!$0.109 & 0.791$\!\to\!$\textbf{0.228} & 0.429$\!\to\!$\textbf{0.158} \\
    \texttt{RMU} & 0.279$\!\to\!$\textbf{0.002} & 0.571$\!\to\!$0.000 & 0.394$\!\to\!$0.466 & 0.471$\!\to\!$\textbf{0.029} \\
    \texttt{PDU} & 0.024$\!\to\!$\textbf{0.009} & 0.000$\!\to\!$\textbf{0.127} & 0.472$\!\to\!$\textbf{0.226} & 0.428$\!\to\!$\textbf{0.152} \\
    \texttt{NPO} & 0.296$\!\to\!$\textbf{0.041} & 0.300$\!\to\!$0.211 & 0.716$\!\to\!$\textbf{0.165} & 0.348$\!\to\!$\textbf{0.246} \\
    \texttt{SimNPO} & 0.533$\!\to\!$\textbf{0.010} & 0.594$\!\to\!$0.179 & 0.889$\!\to\!$\textbf{0.129} & 0.458$\!\to\!$\textbf{0.025} \\
    \texttt{SAMNPO} & 0.762$\!\to\!$\textbf{0.104} & 0.599$\!\to\!$0.031 & 0.980$\!\to\!$\textbf{0.241} & 0.414$\!\to\!$\textbf{0.201} \\
    \texttt{CRNPO} & 0.890$\!\to\!$\textbf{0.026} & 0.592$\!\to\!$0.040 & 0.998$\!\to\!$\textbf{0.228} & 0.579$\!\to\!$\textbf{0.206} \\
    \texttt{RSNPO} & 0.528$\!\to\!$\textbf{0.114} & 0.500$\!\to\!$0.070 & 0.948$\!\to\!$\textbf{0.254} & 0.442$\!\to\!$\textbf{0.195} \\
    \texttt{SWANPO} & 0.300$\!\to\!$\textbf{0.052} & 0.338$\!\to\!$0.110 & 0.724$\!\to\!$\textbf{0.268} & 0.349$\!\to\!$\textbf{0.253} \\
    \texttt{LATNPO} & 0.298$\!\to\!$\textbf{0.043} & 0.336$\!\to\!$0.162 & 0.723$\!\to\!$\textbf{0.160} & 0.342$\!\to\!$\textbf{0.247} \\
    \texttt{ILUNPO} & 0.300$\!\to\!$\textbf{0.027} & 0.337$\!\to\!$0.054 & 0.723$\!\to\!$\textbf{0.106} & 0.343$\!\to\!$\textbf{0.205} \\
    \texttt{RNANPO} & 0.277$\!\to\!$\textbf{0.005} & 0.323$\!\to\!$0.150 & 0.735$\!\to\!$\textbf{0.114} & 0.366$\!\to\!$\textbf{0.165} \\
    \texttt{UNDIAL} & 0.505$\!\to\!$\textbf{0.283} & 0.574$\!\to\!$0.207 & 0.938$\!\to\!$\textbf{0.651} & 0.381$\!\to\!$\textbf{0.282} \\
    \texttt{JensUn} & 0.708$\!\to\!$\textbf{0.039} & 0.582$\!\to\!$0.090 & 0.995$\!\to\!$\textbf{0.210} & 0.447$\!\to\!$\textbf{0.147} \\
    \bottomrule
  \end{tabular}
\end{table}

\begin{table}[htbp]
  \centering
  \scriptsize
  \setlength{\tabcolsep}{3pt}
  \caption{Per-method backing for T2 row \emph{1B \texttt{forget01} (14 base)}. Calibration uses retain-99 as $\theta_{\text{ref}}$.}
  \label{tab:a-t2-forget01}
  \begin{tabular}{l cccc}
    \toprule
    Base & F.agg ($\downarrow$) & MU ($\uparrow$) & MIA.agg ($\downarrow$) & K20-LoRA ($\downarrow$) \\
    \midrule
    \texttt{GradDiff} & 0.480$\!\to\!$\textbf{0.001} & 0.282$\!\to\!$0.169 & 0.907$\!\to\!$\textbf{0.150} & 0.483$\!\to\!$\textbf{0.140} \\
    \texttt{RMU} & 0.471$\!\to\!$\textbf{0.007} & 0.274$\!\to\!$0.105 & 0.888$\!\to\!$\textbf{0.184} & 0.495$\!\to\!$\textbf{0.009} \\
    \texttt{PDU} & 0.544$\!\to\!$\textbf{0.000} & 0.270$\!\to\!$0.000 & 0.923$\!\to\!$\textbf{0.188} & 0.524$\!\to\!$\textbf{0.000} \\
    \texttt{NPO} & 0.433$\!\to\!$\textbf{0.002} & 0.293$\!\to\!$0.122 & 0.894$\!\to\!$\textbf{0.000} & 0.410$\!\to\!$\textbf{0.196} \\
    \texttt{SimNPO} & 0.588$\!\to\!$\textbf{0.003} & 0.274$\!\to\!$0.074 & 0.970$\!\to\!$\textbf{0.046} & 0.612$\!\to\!$\textbf{0.078} \\
    \texttt{SAMNPO} & 0.515$\!\to\!$\textbf{0.008} & 0.263$\!\to\!$0.115 & 0.932$\!\to\!$\textbf{0.000} & 0.485$\!\to\!$\textbf{0.019} \\
    \texttt{CRNPO} & 0.681$\!\to\!$\textbf{0.013} & 0.269$\!\to\!$0.064 & 0.980$\!\to\!$\textbf{0.002} & 0.632$\!\to\!$\textbf{0.024} \\
    \texttt{RSNPO} & 0.604$\!\to\!$\textbf{0.013} & 0.287$\!\to\!$0.088 & 0.978$\!\to\!$\textbf{0.003} & 0.616$\!\to\!$\textbf{0.137} \\
    \texttt{SWANPO} & 0.433$\!\to\!$\textbf{0.000} & 0.293$\!\to\!$0.080 & 0.894$\!\to\!$\textbf{0.000} & 0.414$\!\to\!$\textbf{0.168} \\
    \texttt{LATNPO} & 0.432$\!\to\!$\textbf{0.004} & 0.293$\!\to\!$0.087 & 0.892$\!\to\!$\textbf{0.021} & 0.419$\!\to\!$\textbf{0.168} \\
    \texttt{ILUNPO} & 0.433$\!\to\!$\textbf{0.046} & 0.294$\!\to\!$0.125 & 0.893$\!\to\!$\textbf{0.003} & 0.430$\!\to\!$\textbf{0.258} \\
    \texttt{RNANPO} & 0.437$\!\to\!$\textbf{0.010} & 0.293$\!\to\!$0.088 & 0.883$\!\to\!$\textbf{0.000} & 0.427$\!\to\!$\textbf{0.191} \\
    \texttt{UNDIAL} & 0.459$\!\to\!$\textbf{0.249} & 0.246$\!\to\!$0.144 & 0.938$\!\to\!$\textbf{0.534} & 0.431$\!\to\!$\textbf{0.327} \\
    \texttt{JensUn} & 0.604$\!\to\!$\textbf{0.015} & 0.280$\!\to\!$0.067 & 0.974$\!\to\!$\textbf{0.157} & 0.551$\!\to\!$\textbf{0.034} \\
    \bottomrule
  \end{tabular}
\end{table}

\begin{table}[htbp]
  \centering
  \scriptsize
  \setlength{\tabcolsep}{3pt}
  \caption{Per-method backing for T2 row \emph{1B \texttt{forget05} (14 base)}. Calibration uses retain-95 as $\theta_{\text{ref}}$.}
  \label{tab:a-t2-forget05}
  \begin{tabular}{l cccc}
    \toprule
    Base & F.agg ($\downarrow$) & MU ($\uparrow$) & MIA.agg ($\downarrow$) & K20-LoRA ($\downarrow$) \\
    \midrule
    \texttt{GradDiff} & 0.388$\!\to\!$\textbf{0.001} & 0.291$\!\to\!$0.033 & 0.797$\!\to\!$\textbf{0.400} & 0.371$\!\to\!$\textbf{0.083} \\
    \texttt{RMU} & 0.450$\!\to\!$\textbf{0.000} & 0.245$\!\to\!$0.084 & 0.867$\!\to\!$\textbf{0.210} & 0.644$\!\to\!$\textbf{0.009} \\
    \texttt{PDU} & 0.256$\!\to\!$\textbf{0.000} & 0.238$\!\to\!$0.087 & 0.731$\!\to\!$\textbf{0.234} & 0.596$\!\to\!$\textbf{0.005} \\
    \texttt{NPO} & 0.350$\!\to\!$\textbf{0.013} & 0.340$\!\to\!$0.076 & 0.750$\!\to\!$\textbf{0.191} & 0.267$\!\to\!$\textbf{0.248} \\
    \texttt{SimNPO} & 0.565$\!\to\!$\textbf{0.000} & 0.280$\!\to\!$0.000 & 0.975$\!\to\!$\textbf{0.356} & 0.489$\!\to\!$\textbf{0.014} \\
    \texttt{SAMNPO} & 0.515$\!\to\!$\textbf{0.004} & 0.242$\!\to\!$0.021 & 0.951$\!\to\!$\textbf{0.264} & 0.610$\!\to\!$\textbf{0.215} \\
    \texttt{CRNPO} & 0.659$\!\to\!$\textbf{0.016} & 0.279$\!\to\!$0.028 & 0.985$\!\to\!$\textbf{0.262} & 0.586$\!\to\!$\textbf{0.143} \\
    \texttt{RSNPO} & 0.552$\!\to\!$\textbf{0.005} & 0.281$\!\to\!$0.035 & 0.975$\!\to\!$\textbf{0.260} & 0.733$\!\to\!$\textbf{0.256} \\
    \texttt{SWANPO} & 0.350$\!\to\!$\textbf{0.023} & 0.340$\!\to\!$0.062 & 0.750$\!\to\!$\textbf{0.076} & 0.461$\!\to\!$\textbf{0.357} \\
    \texttt{LATNPO} & 0.352$\!\to\!$\textbf{0.021} & 0.341$\!\to\!$0.139 & 0.752$\!\to\!$\textbf{0.157} & 0.472$\!\to\!$\textbf{0.314} \\
    \texttt{ILUNPO} & 0.350$\!\to\!$\textbf{0.013} & 0.339$\!\to\!$0.049 & 0.751$\!\to\!$\textbf{0.092} & 0.473$\!\to\!$\textbf{0.248} \\
    \texttt{RNANPO} & 0.350$\!\to\!$\textbf{0.033} & 0.339$\!\to\!$0.043 & 0.755$\!\to\!$\textbf{0.085} & 0.456$\!\to\!$\textbf{0.377} \\
    \texttt{UNDIAL} & 0.410$\!\to\!$\textbf{0.280} & 0.201$\!\to\!$0.129 & 0.938$\!\to\!$\textbf{0.625} & 0.385$\!\to\!$\textbf{0.305} \\
    \texttt{JensUn} & 0.572$\!\to\!$\textbf{0.009} & 0.269$\!\to\!$0.000 & 0.980$\!\to\!$\textbf{0.329} & 0.737$\!\to\!$\textbf{0.025} \\
    \bottomrule
  \end{tabular}
\end{table}

\begin{table}[htbp]
  \centering
  \scriptsize
  \setlength{\tabcolsep}{3pt}
  \caption{Per-method backing for T2 row \emph{3B \texttt{forget10} (14 base)}. Calibration uses Llama-3.2-3B-Instruct retain-90.}
  \label{tab:a-t2-3b}
  \begin{tabular}{l cccc}
    \toprule
    Base & F.agg ($\downarrow$) & MU ($\uparrow$) & MIA.agg ($\downarrow$) & K20-LoRA ($\downarrow$) \\
    \midrule
    \texttt{GradDiff} & 0.365$\!\to\!$\textbf{0.041} & 0.315$\!\to\!$0.128 & 0.757$\!\to\!$\textbf{0.388} & 0.431$\!\to\!$\textbf{0.230} \\
    \texttt{RMU} & 0.389$\!\to\!$\textbf{0.012} & 0.354$\!\to\!$0.165 & 0.757$\!\to\!$\textbf{0.214} & 0.469$\!\to\!$\textbf{0.035} \\
    \texttt{PDU} & 0.160$\!\to\!$\textbf{0.001} & 0.116$\!\to\!$0.047 & 0.631$\!\to\!$\textbf{0.428} & 0.396$\!\to\!$\textbf{0.052} \\
    \texttt{NPO} & 0.313$\!\to\!$\textbf{0.008} & 0.127$\!\to\!$0.067 & 0.770$\!\to\!$\textbf{0.035} & 0.302$\!\to\!$\textbf{0.260} \\
    \texttt{SimNPO} & 0.664$\!\to\!$\textbf{0.010} & 0.348$\!\to\!$0.084 & 0.966$\!\to\!$\textbf{0.438} & 0.541$\!\to\!$\textbf{0.003} \\
    \texttt{SAMNPO} & 0.620$\!\to\!$\textbf{0.019} & 0.380$\!\to\!$0.060 & 0.972$\!\to\!$\textbf{0.454} & 0.447$\!\to\!$\textbf{0.074} \\
    \texttt{CRNPO} & 0.767$\!\to\!$\textbf{0.005} & 0.352$\!\to\!$0.056 & 0.982$\!\to\!$\textbf{0.469} & 0.670$\!\to\!$\textbf{0.210} \\
    \texttt{RSNPO} & 0.672$\!\to\!$\textbf{0.009} & 0.377$\!\to\!$0.080 & 0.974$\!\to\!$\textbf{0.606} & 0.549$\!\to\!$\textbf{0.037} \\
    \texttt{SWANPO} & 0.315$\!\to\!$\textbf{0.008} & 0.124$\!\to\!$0.074 & 0.774$\!\to\!$\textbf{0.031} & 0.296$\!\to\!$\textbf{0.239} \\
    \texttt{LATNPO} & 0.313$\!\to\!$\textbf{0.004} & 0.127$\!\to\!$0.071 & 0.772$\!\to\!$\textbf{0.141} & 0.310$\!\to\!$\textbf{0.227} \\
    \texttt{ILUNPO} & 0.315$\!\to\!$\textbf{0.008} & 0.123$\!\to\!$0.095 & 0.774$\!\to\!$\textbf{0.088} & 0.296$\!\to\!$\textbf{0.043} \\
    \texttt{RNANPO} & 0.308$\!\to\!$\textbf{0.009} & 0.130$\!\to\!$0.106 & 0.773$\!\to\!$\textbf{0.025} & 0.304$\!\to\!$\textbf{0.262} \\
    \texttt{UNDIAL} & 0.452$\!\to\!$\textbf{0.259} & 0.280$\!\to\!$\textbf{0.344} & 0.947$\!\to\!$\textbf{0.602} & 0.400$\!\to\!$\textbf{0.285} \\
    \texttt{JensUn} & 0.677$\!\to\!$\textbf{0.006} & 0.341$\!\to\!$0.028 & 0.980$\!\to\!$\textbf{0.396} & 0.561$\!\to\!$\textbf{0.159} \\
    \bottomrule
  \end{tabular}
\end{table}

\begin{table}[htbp]
  \centering
  \scriptsize
  \setlength{\tabcolsep}{3pt}
  \caption{Per-method backing for T2 row \emph{8B \texttt{forget10} (13 base; RSNPO 8B reported in Fig.~\ref{fig:cliff} only)}. Calibration uses Llama-3.1-8B-Instruct retain-90.}
  \label{tab:a-t2-8b}
  \begin{tabular}{l cccc}
    \toprule
    Base & F.agg ($\downarrow$) & MU ($\uparrow$) & MIA.agg ($\downarrow$) & K20-LoRA ($\downarrow$) \\
    \midrule
    \texttt{GradDiff} & 0.317$\!\to\!$\textbf{0.002} & 0.001$\!\to\!$\textbf{0.123} & 0.701$\!\to\!$\textbf{0.362} & 0.278$\!\to\!$\textbf{0.030} \\
    \texttt{RMU} & 0.224$\!\to\!$\textbf{0.001} & 0.108$\!\to\!$0.081 & 0.489$\!\to\!$\textbf{0.311} & 0.447$\!\to\!$\textbf{0.048} \\
    \texttt{PDU} & 0.370$\!\to\!$\textbf{0.004} & 0.148$\!\to\!$0.001 & 0.799$\!\to\!$\textbf{0.473} & 0.516$\!\to\!$\textbf{0.258} \\
    \texttt{NPO} & 0.304$\!\to\!$\textbf{0.011} & 0.074$\!\to\!$\textbf{0.093} & 0.651$\!\to\!$\textbf{0.204} & 0.251$\!\to\!$\textbf{0.053} \\
    \texttt{SimNPO} & 0.719$\!\to\!$\textbf{0.009} & 0.089$\!\to\!$0.035 & 0.968$\!\to\!$\textbf{0.488} & 0.474$\!\to\!$\textbf{0.000} \\
    \texttt{SAMNPO} & 0.794$\!\to\!$\textbf{0.079} & 0.133$\!\to\!$\textbf{0.147} & 0.990$\!\to\!$\textbf{0.267} & 0.488$\!\to\!$\textbf{0.247} \\
    \texttt{CRNPO} & 0.886$\!\to\!$\textbf{0.082} & 0.097$\!\to\!$\textbf{0.118} & 0.998$\!\to\!$\textbf{0.387} & 0.614$\!\to\!$\textbf{0.177} \\
    \texttt{SWANPO} & 0.304$\!\to\!$\textbf{0.013} & 0.075$\!\to\!$0.007 & 0.649$\!\to\!$\textbf{0.286} & 0.248$\!\to\!$\textbf{0.023} \\
    \texttt{LATNPO} & 0.305$\!\to\!$\textbf{0.015} & 0.079$\!\to\!$0.046 & 0.662$\!\to\!$\textbf{0.054} & 0.254$\!\to\!$\textbf{0.214} \\
    \texttt{ILUNPO} & 0.299$\!\to\!$\textbf{0.014} & 0.077$\!\to\!$0.074 & 0.637$\!\to\!$\textbf{0.096} & 0.245$\!\to\!$\textbf{0.029} \\
    \texttt{RNANPO} & 0.314$\!\to\!$\textbf{0.007} & 0.075$\!\to\!$0.054 & 0.679$\!\to\!$\textbf{0.105} & 0.256$\!\to\!$\textbf{0.025} \\
    \texttt{UNDIAL} & 0.451$\!\to\!$\textbf{0.217} & 0.146$\!\to\!$0.080 & 0.977$\!\to\!$\textbf{0.568} & 0.346$\!\to\!$\textbf{0.239} \\
    \texttt{JensUn} & 0.784$\!\to\!$\textbf{0.011} & 0.121$\!\to\!$0.000 & 0.997$\!\to\!$\textbf{0.505} & 0.554$\!\to\!$\textbf{0.000} \\
    \bottomrule
  \end{tabular}
\end{table}

\begin{table}[htbp]
  \centering
  \scriptsize
  \setlength{\tabcolsep}{3pt}
  \caption{Per-method backing for T2 row \emph{MUSE-News (13 base, Llama-2-7B-hf; CRNPO omitted)}. Calibration uses \texttt{MUSE-news\_retrain} as $\theta_{\text{ref}}$, F-ROUGE is the no-attack forget Q\&A ROUGE-L on \texttt{knowmem}, TOFU MU is undefined on this benchmark and printed \texttt{---}, and MIA.agg here averages the two MUSE AUCs (loss and min-$k$).}
  \label{tab:a-t2-muse}
  \begin{tabular}{l cccc}
    \toprule
    Base & F-ROUGE ($\downarrow$) & MU ($\uparrow$) & MIA.agg ($\downarrow$) & K20-LoRA ($\downarrow$) \\
    \midrule
    \texttt{GradDiff} & 0.088$\!\to\!$\textbf{0.000} & --- & 0.511$\!\to\!$\textbf{0.384} & 0.052$\!\to\!$\textbf{0.000} \\
    \texttt{NPO}      & 0.046$\!\to\!$\textbf{0.007} & --- & 0.499$\!\to\!$\textbf{0.278} & 0.045$\!\to\!$\textbf{0.009} \\
    \texttt{UNDIAL}   & 0.085$\!\to\!$\textbf{0.032} & --- & 0.493$\!\to\!$\textbf{0.182} & 0.126$\!\to\!$\textbf{0.015} \\
    \texttt{LATNPO}   & 0.049$\!\to\!$\textbf{0.000} & --- & 0.500$\!\to\!$\textbf{0.304} & 0.048$\!\to\!$\textbf{0.004} \\
    \texttt{RMU}      & 0.038$\!\to\!$\textbf{0.000} & --- & 0.514$\!\to\!$\textbf{0.298} & 0.057$\!\to\!$\textbf{0.000} \\
    \texttt{PDU}      & 0.001$\!\to\!$\textbf{0.000} & --- & 0.528$\!\to\!$\textbf{0.305} & 0.007$\!\to\!$\textbf{0.000} \\
    \texttt{SAMNPO}   & 0.049$\!\to\!$\textbf{0.005} & --- & 0.498$\!\to\!$\textbf{0.308} & 0.048$\!\to\!$\textbf{0.010} \\
    \texttt{ILUNPO}   & 0.047$\!\to\!$\textbf{0.004} & --- & 0.498$\!\to\!$\textbf{0.276} & 0.049$\!\to\!$\textbf{0.008} \\
    \texttt{RNANPO}   & 0.045$\!\to\!$\textbf{0.007} & --- & 0.499$\!\to\!$\textbf{0.262} & 0.046$\!\to\!$\textbf{0.005} \\
    \texttt{RSNPO}    & 0.044$\!\to\!$\textbf{0.011} & --- & 0.500$\!\to\!$\textbf{0.288} & 0.062$\!\to\!$\textbf{0.005} \\
    \texttt{SWANPO}   & 0.046$\!\to\!$\textbf{0.000} & --- & 0.498$\!\to\!$\textbf{0.337} & 0.046$\!\to\!$\textbf{0.000} \\
    \texttt{SimNPO}   & 0.040$\!\to\!$\textbf{0.001} & --- & 0.490$\!\to\!$\textbf{0.176} & 0.063$\!\to\!$\textbf{0.018} \\
    \texttt{JensUn}   & 0.038$\!\to\!$\textbf{0.000} & --- & 0.497$\!\to\!$\textbf{0.143} & 0.044$\!\to\!$\textbf{0.003} \\
    \midrule
    \emph{Mean}       & 0.047$\!\to\!$\textbf{0.005} & --- & 0.502$\!\to\!$\textbf{0.280} & 0.053$\!\to\!$\textbf{0.006} \\
    \bottomrule
  \end{tabular}
\end{table}

\paragraph{Consistent-evaluator check.}
Because the tabulated baseline columns come from the official evaluator while \textsc{MC} columns come
from the pipeline evaluator (\S\ref{sec:exp-setup}, \emph{Metric provenance}), we re-ran the pipeline
evaluator on all 14 baseline checkpoints and repeated every headline comparison under a single
evaluator. F.agg wins remain $14/14$ (panel mean $0.402 \to 0.055$), MU wins remain $1/14$ (panel mean
$0.271 \to 0.110$, so the tabulated $0.44 \to 0.11$ \emph{overstates} the utility cost),
raw-MIA wins move from $13/14$ to $14/14$, and per-detector advantage wins from $6/14$ to $7/14$,
each comparison shifting by at most one method, in \textsc{MC}'s favor. No conclusion in the paper
depends on the evaluator mix.

\section{Five MIA AUC breakdown}
\label{app:mia-breakdown}

Table~\ref{tab:a-t-5mia} expands the MIA.agg column of Table~\ref{tab:t1-main} into the five constituent AUCs (loss, min-$k$, min-$k$++, zlib, and gradnorm) per method at 1B \texttt{forget10}. \textsc{MC} reduces every detector's panel-mean AUC (loss $0.85 \to 0.17$, min-$k$ $0.86 \to 0.17$, min-$k$++ $0.86 \to 0.21$, zlib $0.77 \to 0.17$, gradnorm $0.60 \to 0.51$), so the raw-AUC reduction is broad-spectrum rather than driven by one detector.

The same numbers, however, show the \emph{overshoot} phenomenon. Post-polish AUCs land far \emph{below} $0.5$, and an AUC of $0.17$ is as separable as one of $0.83$ once the attacker inverts the detector. Measured cancellation-free through the per-detector membership advantage $\frac{1}{5}\sum_i |\mathrm{AUC}_i - 0.5|$, \textsc{MC} improves only $6/14$ methods ($0.33 \to 0.32$ panel mean), and the loss, min-$k$, and zlib detectors \emph{worsen} in advantage on $7$, $8$, and $8$ of $14$ methods respectively because \textsc{MC} pushes confident members well past chance. Advantage must be computed per detector before averaging, since taking $|\cdot - 0.5|$ of the five-AUC \emph{mean} lets oppositely-reversed detectors cancel, as RMU's five-AUC mean sits at $0.46$ (apparent advantage near zero) while its loss and gradnorm AUCs sit at $0.39$ and $0.91$, both individually informative. The honest summary is that \textsc{MC} removes the baseline's confident membership signal (raw AUCs fall on $13/14$ methods) but converts much of it into reversed separability rather than chance behavior, the TOFU-side analogue of the MUSE reversed signal (App.~\ref{app:muse}, Limitations), and the reason an MIA-aware stopping rule for margin pressure is future work.

\begin{table*}[tp]
  \centering
  \scriptsize
  \setlength{\tabcolsep}{3pt}
  \caption{\textbf{Per-MIA AUC breakdown} at 1B forget10. Each cell shows \emph{baseline} $\to$ \textsc{MC}; \textbf{bold} marks \textsc{MC} improvement (lower AUC = better, $\downarrow$). Defends \emph{MIA.agg} in Tables~\ref{tab:t1-main} and~\ref{tab:t2-robust}: no single MIA suite is hiding a collapse mode.}
  \label{tab:a-t-5mia}
  \begin{tabular}{ll ccccc}
    \toprule
    Base & Family & MIA-loss & MIA-min$k$ & MIA-min$k$++ & MIA-zlib & MIA-gradnorm \\
    \midrule
    \texttt{GradDiff} & GradDiff & 0.88$\!\to\!$\textbf{0.07} & 0.87$\!\to\!$\textbf{0.08} & 0.68$\!\to\!$\textbf{0.21} & 0.84$\!\to\!$\textbf{0.09} & 0.68$\!\to\!$0.68 \\
    \texttt{RMU} & GradDiff & 0.44$\!\to\!$\textbf{0.39} & 0.47$\!\to\!$\textbf{0.32} & 0.65$\!\to\!$\textbf{0.32} & 0.36$\!\to\!$0.38 & 0.04$\!\to\!$0.91 \\
    \texttt{PDU} & GradDiff & 0.52$\!\to\!$\textbf{0.05} & 0.55$\!\to\!$\textbf{0.05} & 0.57$\!\to\!$\textbf{0.05} & 0.37$\!\to\!$\textbf{0.03} & 0.36$\!\to\!$0.94 \\
    \midrule
    \texttt{SimNPO} & SimNPO & 0.97$\!\to\!$\textbf{0.04} & 0.97$\!\to\!$\textbf{0.03} & 0.82$\!\to\!$\textbf{0.18} & 0.93$\!\to\!$\textbf{0.05} & 0.76$\!\to\!$\textbf{0.35} \\
    \midrule
    \texttt{NPO} & NPO & 0.83$\!\to\!$\textbf{0.11} & 0.83$\!\to\!$\textbf{0.10} & 0.90$\!\to\!$\textbf{0.11} & 0.66$\!\to\!$\textbf{0.11} & 0.35$\!\to\!$0.39 \\
    \texttt{SAMNPO} & NPO & 0.99$\!\to\!$\textbf{0.17} & 0.99$\!\to\!$\textbf{0.15} & 0.97$\!\to\!$\textbf{0.26} & 0.99$\!\to\!$\textbf{0.23} & 0.96$\!\to\!$\textbf{0.38} \\
    \texttt{ILUNPO} & NPO & 0.84$\!\to\!$\textbf{0.11} & 0.84$\!\to\!$\textbf{0.10} & 0.90$\!\to\!$\textbf{0.10} & 0.67$\!\to\!$\textbf{0.08} & 0.37$\!\to\!$\textbf{0.13} \\
    \texttt{LATNPO} & NPO & 0.84$\!\to\!$\textbf{0.11} & 0.84$\!\to\!$\textbf{0.09} & 0.90$\!\to\!$\textbf{0.11} & 0.66$\!\to\!$\textbf{0.11} & 0.38$\!\to\!$0.38 \\
    \texttt{RNANPO} & NPO & 0.86$\!\to\!$\textbf{0.03} & 0.85$\!\to\!$\textbf{0.04} & 0.77$\!\to\!$\textbf{0.11} & 0.71$\!\to\!$\textbf{0.03} & 0.49$\!\to\!$\textbf{0.36} \\
    \texttt{RSNPO} & NPO & 0.98$\!\to\!$\textbf{0.24} & 0.98$\!\to\!$\textbf{0.24} & 0.97$\!\to\!$\textbf{0.28} & 0.98$\!\to\!$\textbf{0.28} & 0.82$\!\to\!$\textbf{0.24} \\
    \texttt{SWANPO} & NPO & 0.84$\!\to\!$\textbf{0.16} & 0.84$\!\to\!$\textbf{0.15} & 0.90$\!\to\!$\textbf{0.24} & 0.67$\!\to\!$\textbf{0.16} & 0.37$\!\to\!$0.63 \\
    \texttt{CRNPO} & NPO & 1.00$\!\to\!$\textbf{0.16} & 1.00$\!\to\!$\textbf{0.17} & 1.00$\!\to\!$\textbf{0.18} & 1.00$\!\to\!$\textbf{0.17} & 1.00$\!\to\!$\textbf{0.46} \\
    \midrule
    \texttt{UNDIAL} & Distil & 0.98$\!\to\!$\textbf{0.66} & 0.99$\!\to\!$\textbf{0.71} & 1.00$\!\to\!$\textbf{0.72} & 0.95$\!\to\!$\textbf{0.61} & 0.78$\!\to\!$\textbf{0.56} \\
    \texttt{JensUn} & Distil & 1.00$\!\to\!$\textbf{0.08} & 1.00$\!\to\!$\textbf{0.06} & 0.99$\!\to\!$\textbf{0.05} & 1.00$\!\to\!$\textbf{0.08} & 1.00$\!\to\!$\textbf{0.78} \\
    \bottomrule
  \end{tabular}
\end{table*}

\section{Hyperparameter sensitivity grid and full ablation per base}
\label{app:hp-grid}
\label{app:t3-full}

Table~\ref{tab:a-t-hp-full} reports the five single-axis HP perturbations around the default $(\kappa, \lambda_{\text{KL}}, r) = (5, 0.05, 32)$, evaluated on the 5 representative 1B \texttt{forget10} bases under K20-LoRA. The maximum perturbation moves panel-mean K20-LoRA by at most $0.04$, well below the $0.255$ gap to baseline, supporting the use of a single global \textsc{MC} configuration across all reported settings without per-cell tuning.

\begin{table*}[tp]
  \centering
  \scriptsize
  \setlength{\tabcolsep}{3pt}
  \caption{\textbf{Hyperparameter sensitivity, single-axis perturbations.} Each row is one $(\kappa, \lambda_{\text{KL}}, r)$ configuration; cells are post-attack K20-LoRA ROUGE on the 5 representative bases at 1B \texttt{forget10} (lower = more robust, $\downarrow$). The default config $(\kappa, \lambda_{\text{KL}}, r) = (5, 0.05, 32)$ (boxed) lies at the interior of the safety region; perturbing each axis to its low and high extreme moves the mean K20-LoRA by at most $0.040$, well below the $0.255$ gap to the baseline panel mean ($0.439$). Five perturbations cover the design space's local neighborhood; we therefore retain a single global configuration across all (size, tier, benchmark) cells without per-setting tuning.}
  \label{tab:a-t-hp-full}
  \begin{tabular}{ccc c c c c c c}
    \toprule
    $\kappa$ & $\lambda_{\text{KL}}$ & $r$ & \texttt{GradDiff} & \texttt{NPO} & \texttt{SimNPO} & \texttt{UNDIAL} & \texttt{CRNPO} & mean \\
    \midrule
    \boxed{5} & \boxed{0.05} & \boxed{32} & 0.158 & 0.246 & 0.025 & 0.282 & 0.206 & \textbf{0.184} \\
    \midrule
    1  & 0.05 & 32 & 0.031 & 0.250 & 0.065 & 0.287 & 0.213 & \textbf{0.169} \\
    20 & 0.05 & 32 & 0.134 & 0.253 & 0.236 & 0.279 & 0.215 & \textbf{0.224} \\
    \midrule
    5  & 0.2  & 32 & 0.028 & 0.217 & 0.310 & 0.279 & 0.249 & \textbf{0.217} \\
    \midrule
    5  & 0.05 & 16 & 0.119 & 0.275 & 0.024 & 0.292 & 0.222 & \textbf{0.186} \\
    5  & 0.05 & 64 & 0.132 & 0.052 & 0.207 & 0.260 & 0.134 & \textbf{0.157} \\
    \bottomrule
  \end{tabular}
\end{table*}

The low corner $\{\kappa = 1, r = 16\}$ weakens both the hinge sharpness and the adapter capacity, yet its panel-mean K20-LoRA of $0.178$ (Table~\ref{tab:t3-ablation} row 5) is within run-to-run variation of the default $0.184$, indicating the configuration sits on a plateau rather than at the edge of under-crossing. The high corner $\{\kappa = 20, \lambda_{\text{KL}} = 0.2, r = 64\}$ pushes margin pressure and utility anchor simultaneously and yields $0.199$ on row 6, the worse of the two corner configurations (the single-axis $\kappa{=}20$ perturbation reaches $0.224$, Table~\ref{tab:a-t-hp-full}). Neither corner exceeds the default panel mean ($0.184$) by more than $0.015$, and the single-axis sensitivity table (Table~\ref{tab:a-t-hp-full}) confirms the same flat shape along every axis, indicating that the chosen configuration sits inside a plateau rather than at a brittle optimum. The configuration is therefore safe to reuse across model sizes, forget tiers, and the MUSE-News transfer without re-tuning.

\begin{table}[htbp]
  \centering
  \small
  \setlength{\tabcolsep}{4pt}
  \caption{\textbf{Ablation matrix at 1B \texttt{forget10}}, averaged over 5 bases ($\Delta$ is the cliff gap of Eq.~\eqref{eq:cliff-gap}, pipeline evaluator, negative crosses; --- where the margin probe was not run). HP corners perturb the default $(\kappa,\lambda_{\text{KL}},r){=}(5,0.05,32)$, with low $\{\kappa{=}1,r{=}16\}$, high $\{\kappa{=}20,\lambda_{\text{KL}}{=}0.2,r{=}64\}$; full single-axis sensitivity table in App.~\ref{app:hp-grid}.}
  \label{tab:t3-ablation}
  \resizebox{\linewidth}{!}{%
  \begin{tabular}{l c cccc}
    \toprule
    Variant & $n$ & $\Delta$ & MU & F-ROUGE & K20-LoRA \\
    \midrule
    GradAscent          & 1/5 & $-70.58$ & 0.000 & 0.000 & 0.040 \\
    Forget-only hinge   & 5/5 & $-26.28$ & 0.091 & 0.121 & 0.157 \\
    Two-sided retain-hinge & 5/5 & $-34.67$ & 0.080 & 0.106 & 0.173 \\
    \textbf{MC}         & 5/5 & $-27.29$ & 0.149 & 0.066 & 0.184 \\
    HP corner low       & 5/5 & --- & 0.099 & 0.067 & 0.178 \\
    HP corner high      & 5/5 & --- & 0.106 & 0.070 & 0.199 \\
    \bottomrule
  \end{tabular}%
  }
\end{table}

\section{Attacker scaling, all K}
\label{app:attacker-scaling}

Table~\ref{tab:a-t-attacker-scaling} reports per-method post-attack ROUGE for LoRA-r8 across the full $K$ sweep $\{1, 3, 5, 10, 20, 50, 100\}$ at 1B \texttt{forget10}, expanding the headline $K{=}20$ column of Table~\ref{tab:t1-main}. Baseline ROUGE rises toward the recovery ceiling as $K$ grows (with small non-monotone fluctuations at large $K$), while \textsc{MC} rises only slowly with $K$ and stays at or below $0.33$ in every cell even at $K{=}100$, far under the baseline curve at every budget, confirming that the robustness gain is not specific to the canonical $K{=}20$ budget.

The qualitative shape of the two curves is consistent with Theorem~\ref{thm:relearn-bound} (a one-sided
upper bound, which caps the \textsc{MC} curve but does not by itself predict the baselines' rise). The
lift budget $\Lambda(N) = \eta N G_m H + \tfrac12 L_m \eta^2 N H^2 + c_\beta$ grows with the attacker
budget $N$ but is independent of the starting cliff gap. For baselines with positive cliff gap, no lift is even
needed to stay recoverable, and empirically ROUGE rises toward the recovery ceiling as $N$ grows, with
diminishing returns once a substantial fraction of the holdout has been recovered. For
\textsc{MC}-polished checkpoints with negative cliff gap, ROUGE rises far more slowly (every cell
$\le 0.33$ at $K{=}100$, versus baselines approaching the recovery ceiling). The measured-constants
instrumentation (App.~\ref{app:thm3-constants}) shows the sup-based $\Lambda(N)$ is conservative, so
the K-sweep itself, rather than the certificate, carries the empirical load. The K-sweep therefore stress-tests
the budget bound across two orders of magnitude in $K$ rather than producing a separately tuned
result.

\begin{table*}[tp]
  \centering
  \scriptsize
  \setlength{\tabcolsep}{2pt}
  \caption{\textbf{Attacker scaling, all $K$.} Per-method post-attack ROUGE for LoRA-r8 across $K\in\{1,3,5,10,20,50,100\}$ at 1B forget10. Cells: \emph{baseline} $\to$ \textsc{MC} (lower = more robust, $\downarrow$). Strong-attacker variants (LoRA-r32, soft-prompt $n{=}100$, FPFT $K{=}50$) appear in Tables~\ref{tab:a-t-strong-r32}--\ref{tab:a-t-strong-fpft50}.}
  \label{tab:a-t-attacker-scaling}
  \resizebox{\textwidth}{!}{%
  \begin{tabular}{l c c c c c c c}
    \toprule
    Base & $K{=}1$ & $K{=}3$ & $K{=}5$ & $K{=}10$ & $K{=}20$ & $K{=}50$ & $K{=}100$ \\
    \midrule
    \texttt{GradDiff} & 0.309$\!\to\!$0.017 & 0.347$\!\to\!$0.017 & 0.363$\!\to\!$0.024 & 0.416$\!\to\!$0.096 & 0.429$\!\to\!$0.158 & 0.436$\!\to\!$0.205 & 0.427$\!\to\!$0.220 \\
    \texttt{RMU} & 0.320$\!\to\!$0.005 & 0.346$\!\to\!$0.005 & 0.361$\!\to\!$0.005 & 0.398$\!\to\!$0.002 & 0.471$\!\to\!$0.029 & 0.475$\!\to\!$0.026 & 0.429$\!\to\!$0.320 \\
    \texttt{PDU} & 0.144$\!\to\!$0.020 & 0.277$\!\to\!$0.020 & 0.339$\!\to\!$0.018 & 0.418$\!\to\!$0.019 & 0.428$\!\to\!$0.152 & 0.414$\!\to\!$0.243 & 0.427$\!\to\!$0.259 \\
    \midrule
    \texttt{SimNPO} & 0.374$\!\to\!$0.021 & 0.425$\!\to\!$0.017 & 0.430$\!\to\!$0.024 & 0.458$\!\to\!$0.023 & 0.458$\!\to\!$0.025 & 0.480$\!\to\!$0.023 & 0.478$\!\to\!$0.067 \\
    \midrule
    \texttt{NPO} & 0.245$\!\to\!$0.065 & 0.294$\!\to\!$0.084 & 0.309$\!\to\!$0.129 & 0.343$\!\to\!$0.244 & 0.348$\!\to\!$0.246 & 0.347$\!\to\!$0.295 & 0.357$\!\to\!$0.310 \\
    \texttt{SAMNPO} & 0.368$\!\to\!$0.101 & 0.408$\!\to\!$0.108 & 0.419$\!\to\!$0.114 & 0.407$\!\to\!$0.136 & 0.414$\!\to\!$0.201 & 0.435$\!\to\!$0.220 & 0.455$\!\to\!$0.229 \\
    \texttt{ILUNPO} & 0.240$\!\to\!$0.059 & 0.293$\!\to\!$0.058 & 0.309$\!\to\!$0.064 & 0.349$\!\to\!$0.082 & 0.343$\!\to\!$0.205 & 0.352$\!\to\!$0.259 & 0.365$\!\to\!$0.311 \\
    \texttt{LATNPO} & 0.245$\!\to\!$0.072 & 0.281$\!\to\!$0.090 & 0.306$\!\to\!$0.105 & 0.349$\!\to\!$0.215 & 0.342$\!\to\!$0.247 & 0.357$\!\to\!$0.301 & 0.371$\!\to\!$0.323 \\
    \texttt{RNANPO} & 0.250$\!\to\!$0.010 & 0.294$\!\to\!$0.011 & 0.316$\!\to\!$0.012 & 0.340$\!\to\!$0.035 & 0.366$\!\to\!$0.165 & 0.403$\!\to\!$0.275 & 0.375$\!\to\!$0.298 \\
    \texttt{RSNPO} & 0.389$\!\to\!$0.092 & 0.436$\!\to\!$0.099 & 0.448$\!\to\!$0.106 & 0.452$\!\to\!$0.147 & 0.442$\!\to\!$0.195 & 0.446$\!\to\!$0.231 & 0.456$\!\to\!$0.235 \\
    \texttt{SWANPO} & 0.244$\!\to\!$0.084 & 0.297$\!\to\!$0.073 & 0.310$\!\to\!$0.067 & 0.344$\!\to\!$0.138 & 0.349$\!\to\!$0.253 & 0.358$\!\to\!$0.280 & 0.363$\!\to\!$0.306 \\
    \texttt{CRNPO} & 0.488$\!\to\!$0.017 & 0.544$\!\to\!$0.036 & 0.571$\!\to\!$0.056 & 0.582$\!\to\!$0.184 & 0.579$\!\to\!$0.206 & 0.597$\!\to\!$0.272 & 0.579$\!\to\!$0.263 \\
    \midrule
    \texttt{UNDIAL} & 0.280$\!\to\!$0.253 & 0.339$\!\to\!$0.260 & 0.392$\!\to\!$0.274 & 0.383$\!\to\!$0.272 & 0.381$\!\to\!$0.282 & 0.400$\!\to\!$0.303 & 0.366$\!\to\!$0.297 \\
    \texttt{JensUn} & 0.353$\!\to\!$0.055 & 0.374$\!\to\!$0.053 & 0.402$\!\to\!$0.058 & 0.406$\!\to\!$0.062 & 0.447$\!\to\!$0.147 & 0.487$\!\to\!$0.273 & 0.483$\!\to\!$0.278 \\
    \bottomrule
  \end{tabular}%
  }
\end{table*}

\section{Measured constants for Theorem~\ref{thm:relearn-bound}}
\label{app:thm3-constants}

We instrument the paper's own K20 LoRA-r8 attack (same seed, samples, and learning rate as the
pipeline attacker) on three \textsc{MC}-polished 1B \texttt{forget10} checkpoints, recording at every
step the realized step norm $\|\delta^{(n)}\|$, the smoothed-diagnostic gradient norm
$\|\nabla m_\beta\|$, and a segment curvature estimate, all on the frozen held-out diagnostic
positions of Theorem~\ref{thm:relearn-bound} ($\beta = 120$, $c_\beta = 0.098$, $n = 40$ samples,
instrumentation script in the code release). Table~\ref{tab:a-thm3-const} reports the sup
constants, the realized-trajectory budget
$\hat\Lambda(N) = G_m^{\sup} \sum_n \|\delta^{(n)}\| + \tfrac{1}{2} L_m^{\sup} \sum_n \|\delta^{(n)}\|^2 + c_\beta$,
and the measured lift $m_\beta(\tilde\theta_N) - m_\beta(\hat\theta)$.

Two observations. First, the one-sided bound holds with room on every instrumented trajectory, with
measured lifts of $+102.0$, $+7.2$, and $+1.0$ against budgets $\hat\Lambda(N)$ of $12641$, $335$, and
$53$ for GradDiff, NPO, and UNDIAL respectively. Second, the sup-based constants make the budget
conservative by one to two orders of magnitude, driven by rare high-gradient steps along the
trajectory, so the certificate $-\Delta(\hat\theta) > \Lambda(N)$ is a sufficient condition that does
not fire at worst-case constants on these runs. The theorem's role is therefore to define the
attack-budget scaling that the K-sweep stress-tests empirically (App.~\ref{app:attacker-scaling}),
not to provide a numerically tight per-checkpoint certificate, and we report the constants so this
looseness is explicit rather than implied.

\begin{table}[htbp]
\centering
\small
\begin{tabular}{lrrrrr}
\toprule
Base & $G_m^{\sup}$ & $L_m^{\sup}$ & $\sum_n\|\delta^{(n)}\|$ & $\hat\Lambda(N)$ & lift \\
\midrule
GradDiff & 2081 & 11035 & 3.79 & 12641 & $+102.0$ \\
NPO      & 61.0 & 113.7 & 4.47 & 335   & $+7.2$ \\
UNDIAL   & 10.5 & 21.4  & 4.11 & 53    & $+1.0$ \\
\bottomrule
\end{tabular}
\caption{Instrumented Theorem~\ref{thm:relearn-bound} constants on the K20 LoRA-r8 attack at 1B
\texttt{forget10} (\textsc{MC}-polished checkpoints, $N{=}20$, $c_\beta{=}0.098$). The one-sided bound
lift $\le \hat\Lambda(N)$ holds with one to two orders of slack in every case.}
\label{tab:a-thm3-const}
\end{table}

\section{Strong-attacker variants}
\label{app:strong-attackers}

Tables~\ref{tab:a-t-strong-r32}, \ref{tab:a-t-strong-sp100}, and \ref{tab:a-t-strong-fpft50} expand the strong-attacker results of \S\ref{sec:exp-ablations} to per-method numbers on the 5 representative bases at 1B \texttt{forget10}. LoRA-r32 raises adapter rank by $4\times$ over the default LoRA-r8, soft-prompt with $n_{\text{soft}}{=}100$ uses a $5\times$ longer prompt than the default $n_{\text{soft}}{=}20$, and FPFT $K{=}50$ both increases the relearn budget and removes the LoRA bottleneck. \textsc{MC} wins $5/5$ under every strong-attacker variant. In every cell the baseline's cliff gap is positive, and the \textsc{MC}-polished counterpart's is negative in every cell except UNDIAL (\S\ref{sec:m-thm2}).

\begin{table}[htbp]
  \centering
  \scriptsize
  \setlength{\tabcolsep}{3pt}
  \caption{\textbf{Strong attacker: LoRA-r32} at $K{=}20$. Per-method post-attack ROUGE on 5 representative bases at 1B \texttt{forget10}. $\Delta_{\text{R}}$ is the baseline$-$\textsc{MC} post-attack ROUGE gap (positive = \textsc{MC} safer; not the cliff gap $\Delta$ of Eq.~\eqref{eq:cliff-gap}). \textsc{MC} wins $5$/$5$.}
  \label{tab:a-t-strong-r32}
  \begin{tabular}{lccc}
    \toprule
    Base & Baseline ($\downarrow$) & \textsc{MC} ($\downarrow$) & $\Delta_{\text{R}}$ \\
    \midrule
    \texttt{GradDiff} & 0.399 & \textbf{0.235} & $+0.164$ \\
    \texttt{NPO}      & 0.325 & \textbf{0.248} & $+0.077$ \\
    \texttt{SimNPO}   & 0.438 & \textbf{0.272} & $+0.165$ \\
    \texttt{UNDIAL}   & 0.306 & \textbf{0.276} & $+0.030$ \\
    \texttt{CRNPO}    & 0.560 & \textbf{0.252} & $+0.307$ \\
    \midrule
    \emph{Mean}       & 0.406 & \textbf{0.257} & $+0.149$ \\
    \bottomrule
  \end{tabular}
\end{table}

The three variants probe distinct relaxations of the canonical LoRA-r8 setup. LoRA-r32 quadruples the
adapter rank, enlarging the increments the attacker can realize (a larger class parameter $H$ in
Theorem~\ref{thm:relearn-bound}), yet \textsc{MC}'s lead persists. Soft-prompt
with $n_{\text{soft}}{=}100$ does not perturb the model weights at all and instead optimizes a $100$-token
prefix, sidestepping the adapter bottleneck via an input-space attack outside the weight-space attack
class of Theorem~\ref{thm:relearn-bound}, and that it too fails against \textsc{MC} is evidence beyond the
theorem's scope. FPFT $K{=}50$ removes the adapter bottleneck entirely (App.~\ref{app:fpft-bound}) and
grows $N$ to $50$, the worst case in our budget envelope. The panel-mean baseline-to-\textsc{MC} ROUGE-L drops are $+0.149$, $+0.188$, and $+0.260$ respectively, with the FPFT case producing the largest gap because the baseline rises fastest under the strongest attacker while \textsc{MC} stays floored at its post-polish margin level.

\begin{table}[htbp]
  \centering
  \scriptsize
  \setlength{\tabcolsep}{3pt}
  \caption{\textbf{Strong attacker: soft-prompt $n_{\text{soft}}{=}100$} at $200$ optimization steps. Per-method post-attack ROUGE on 5 representative bases at 1B \texttt{forget10}. $\Delta_{\text{R}}$ is the baseline$-$\textsc{MC} post-attack ROUGE gap (positive = \textsc{MC} safer; not the cliff gap $\Delta$ of Eq.~\eqref{eq:cliff-gap}). \textsc{MC} wins $5$/$5$.}
  \label{tab:a-t-strong-sp100}
  \begin{tabular}{lccc}
    \toprule
    Base & Baseline ($\downarrow$) & \textsc{MC} ($\downarrow$) & $\Delta_{\text{R}}$ \\
    \midrule
    \texttt{GradDiff} & 0.031 & \textbf{0.026} & $+0.005$ \\
    \texttt{NPO}      & 0.039 & \textbf{0.026} & $+0.012$ \\
    \texttt{SimNPO}   & 0.441 & \textbf{0.039} & $+0.402$ \\
    \texttt{UNDIAL}   & 0.314 & \textbf{0.250} & $+0.064$ \\
    \texttt{CRNPO}    & 0.473 & \textbf{0.018} & $+0.455$ \\
    \midrule
    \emph{Mean}       & 0.260 & \textbf{0.072} & $+0.188$ \\
    \bottomrule
  \end{tabular}
\end{table}

\begin{table}[htbp]
  \centering
  \scriptsize
  \setlength{\tabcolsep}{3pt}
  \caption{\textbf{Strong attacker: full-parameter fine-tune at $K{=}50$}. Per-method post-attack ROUGE on 5 representative bases at 1B \texttt{forget10}. $\Delta_{\text{R}}$ is the baseline$-$\textsc{MC} post-attack ROUGE gap (positive = \textsc{MC} safer; not the cliff gap $\Delta$ of Eq.~\eqref{eq:cliff-gap}). \textsc{MC} wins $5$/$5$.}
  \label{tab:a-t-strong-fpft50}
  \begin{tabular}{lccc}
    \toprule
    Base & Baseline ($\downarrow$) & \textsc{MC} ($\downarrow$) & $\Delta_{\text{R}}$ \\
    \midrule
    \texttt{GradDiff} & 0.408 & \textbf{0.104} & $+0.305$ \\
    \texttt{NPO}      & 0.357 & \textbf{0.268} & $+0.089$ \\
    \texttt{SimNPO}   & 0.485 & \textbf{0.044} & $+0.441$ \\
    \texttt{UNDIAL}   & 0.393 & \textbf{0.294} & $+0.100$ \\
    \texttt{CRNPO}    & 0.564 & \textbf{0.201} & $+0.363$ \\
    \midrule
    \emph{Mean}       & 0.442 & \textbf{0.182} & $+0.260$ \\
    \bottomrule
  \end{tabular}
\end{table}

\begin{figure}[htbp]
  \centering
  \includegraphics[width=\linewidth]{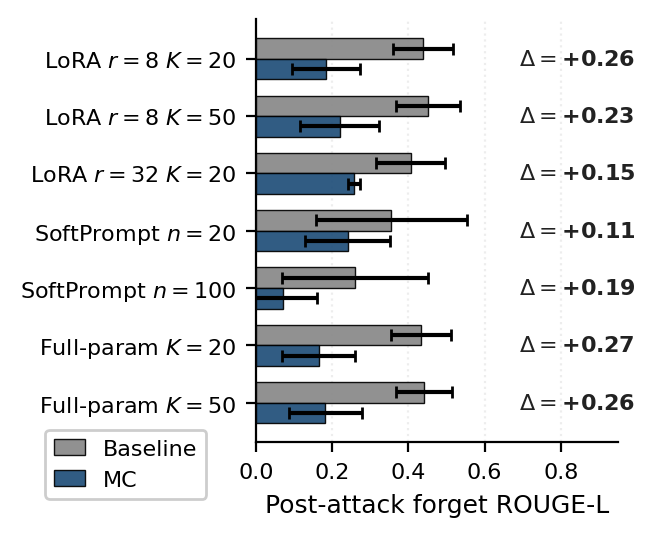}
  \caption{Held-out post-attack ROUGE-L at 1B \texttt{forget10} across LoRA, SoftPrompt, and FPFT attackers, averaged over 5 bases.}
  \label{fig:attack_bars}
\end{figure}

\section{Cliff gap predicts relearn recovery (Theorem~\ref{thm:relearn-bound} validation)}
\label{app:bridge}

Figure~\ref{fig:bridge} visualizes the empirical bridge between the geometric quantity $\Delta(\hat\theta)$ and the security-relevant quantity (post-attack forget ROUGE-L). Across three Llama-3 model sizes, the positive correlation is strong and statistically significant in every case, confirming that the cliff gap is not merely a method-internal diagnostic but a predictor of how well an unlearned checkpoint will resist a relearn attack. This correlation motivates Theorem~\ref{thm:relearn-bound}'s focus on $\Delta$, while the theorem itself is a one-sided upper bound and neither implies nor requires it.

\begin{figure*}[t]
  \centering
  \includegraphics[width=\textwidth]{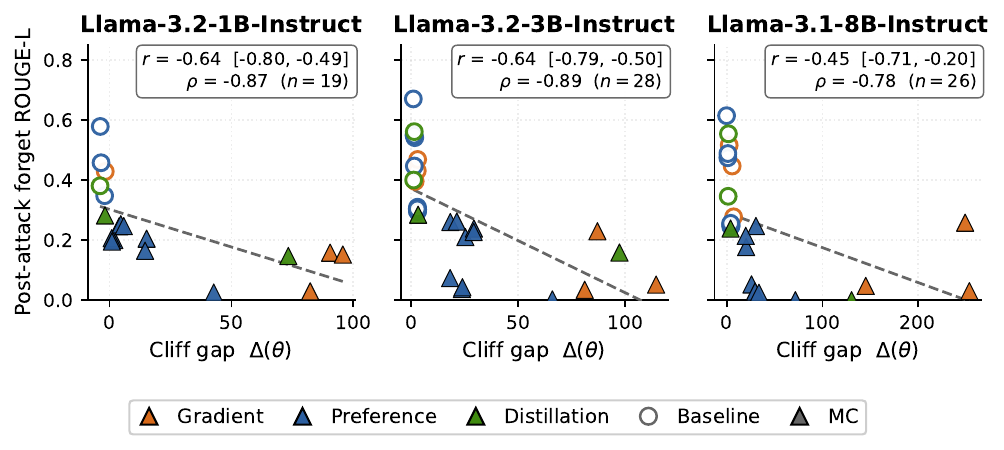}
  \caption{Cliff gap predicts relearn-recovery across model sizes (the empirical bridge motivating Theorem~\ref{thm:relearn-bound}). Each point is one (method, target) cell at the indicated model size, where circles are baselines and triangles are \textsc{MC}, colored by loss family (gradient / preference / distillation). The $x$-axis is the cliff gap $\Delta(\hat\theta) = m_{\hat\theta}(\mathcal{D}_f) - m_{\mathrm{ref}}(\mathcal{D}_f)$ (positive for baselines except the degenerate GradDiff cell at 8B, \S\ref{sec:m-cliff}, and negative for \textsc{MC}-polished checkpoints except UNDIAL, positive at each of the three sizes shown, \S\ref{sec:m-thm2}), and the $y$-axis is post-attack forget ROUGE-L at $K{=}20$ under LoRA-r8. The strong positive correlation is consistent across all three Llama-3 sizes, with Pearson $r = 0.686$ at 1B ($n=28$), $0.645$ at 3B ($n=28$), and $0.451$ at 8B ($n=26$), and 95\% bootstrap CIs excluding zero in every case. Spearman $\rho$ is $0.912$ / $0.885$ / $0.776$. The monotone relation between $\Delta$ and post-attack recovery is therefore not a 1B artifact and supports using $\Delta$ as a robustness predictor at every size we evaluate (Theorem~\ref{thm:relearn-bound} itself is a one-sided bound and does not imply this correlation).}
  \label{fig:bridge}
\end{figure*}

\section{Family-stratified FPFT}
\label{app:fpft-per-family}

Table~\ref{tab:t3-fpft} reports the family-stratified robustness gap (baseline ROUGE / \textsc{MC} ROUGE) under the full-parameter $K{=}20$ attacker at 1B and 8B (a 3B FPFT sweep was not run and is left to future work). The ratio remains at least $3.2\times$ in every populated family-size cell and exceeds $10\times$ for the GradDiff family at 1B, indicating that \textsc{MC}'s cliff-crossing gain transfers under the strictest attacker without family-specific tuning.

The family ranking reflects the per-method baseline-to-\textsc{MC} contrast more than any family-level interaction. SimNPO has the largest gaps ($17.3\times$ at 1B, $203.5\times$ at 8B) because its single base method drives \textsc{MC} ROUGE close to zero under FPFT, so the ratio amplifies dramatically. NPO and Distil sit lower at 1B ($3.2\times$ and $3.4\times$) because both their baselines and their \textsc{MC} ROUGE values stay positive, compressing the ratio. The gaps \emph{grow} from 1B to 8B for three of the four families (NPO $3.2 \to 4.5$, Distil $3.4 \to 26.5$, SimNPO $17.3 \to 203.5$), and the GradDiff family's finite mean decreases ($11.3 \to 6.6$) only because its most extreme 8B cell (RMU, whose \textsc{MC} ROUGE is numerically zero and whose gap therefore diverges) is excluded from the mean. The NPO family is computed over $7$ rather than $8$ bases at 8B since RSNPO 8B is reported in Fig.~\ref{fig:cliff} only and not under FPFT attacks.

\begin{table}[htbp]
  \centering
  \small
  \caption{\textbf{Family-stratified FPFT $K{=}20$ robustness ratio} (baseline post-attack ROUGE / \textsc{MC} post-attack ROUGE; higher = larger \textsc{MC} advantage). 3B FPFT was not run (---).}
  \label{tab:t3-fpft}
  \begin{tabular}{l c ccc}
    \toprule
    Family & $n$ & 1B mean & 3B mean & 8B mean \\
    \midrule
    GradDiff & 3 & 11.3$\times$ & --- & 6.6$\times$ \\
    SimNPO & 1 & 17.3$\times$ & --- & 203.5$\times$ \\
    NPO & 8 & 3.2$\times$ & --- & 4.5$\times$ \\
    Distil & 2 & 3.4$\times$ & --- & 26.5$\times$ \\
    \bottomrule
  \end{tabular}
\end{table}

\section{MUSE-News per-method and rank correlation}
\label{app:muse}

The per-method MUSE-News \texttt{knowmem} forget ROUGE-L and held-out K20-LoRA post-attack ROUGE-L for the 13-method panel (the full 14-method set with CRNPO omitted for a 7B resource reason) at \textsc{MC} calibration are reported in Table~\ref{tab:t5-muse}, with per-cell backing in Table~\ref{tab:a-t2-muse}. \textsc{MC} wins forget ROUGE-L and K20-LoRA on all $13/13$ methods (panel means $0.047 \to 0.005$ and $0.053 \to 0.006$). The cliff diagnostic on MUSE behaves differently from TOFU. The retain-trained reference $\theta_{\text{ref}} = $\texttt{MUSE-news\_retrain} has median diagnostic-position forget margin $m_{\text{ref}} = -6.30$ on the knowmem forget split (median across samples, reported in place of the mean of Eq.~\eqref{eq:margin-agg} for robustness on long news passages), and eleven of the thirteen baselines cluster within $\pm 1.1$ of this value rather than sitting $+1.7$ to $+3.9$ above it as at TOFU 1B, with only UNDIAL ($+2.4$) and PDU ($+5.9$) well above, so the per-token margin diagnostic is less discriminative on long news passages and the operational improvement in forget recovery and relearn robustness is the more reliable transfer indicator. The MUSE MIA breakdown (per-detector AUCs from the pipeline evaluator, with the advantage shift footnoted in Table~\ref{tab:t2-robust}) has panel-mean raw \texttt{mia\_loss\_auc} dropping $0.529 \to 0.259$ and \texttt{mia\_min\_k\_auc} dropping $0.475 \to 0.285$ (both win $13/13$ on raw-AUC direction), while the corresponding membership advantage $|\mathrm{AUC} - 0.5|$ panel mean rises $0.028 \to 0.228$ (advantage worsens on $13/13$ methods). This is the reversed-direction MIA signal flagged in Limitations and reflects that the 5-epoch MUSE baseline unlearning stage had already brought baseline AUC near $0.5$.

\begin{table}[htbp]
  \centering
  \small
  \setlength{\tabcolsep}{5pt}
  \caption{\textbf{MUSE-News transfer} (\texttt{knowmem}, $\downarrow$), 13-method panel, baseline $\to$ \textsc{MC} calibration. \textbf{Bold} marks \textsc{MC} improvement. CRNPO omitted on MUSE (App.~\ref{app:muse}).}
  \label{tab:t5-muse}
  \begin{tabular}{l cc cc}
    \toprule
    & \multicolumn{2}{c}{Baseline} & \multicolumn{2}{c}{\textsc{MC} calibration} \\
    \cmidrule(lr){2-3}\cmidrule(lr){4-5}
    Method  & f-ROUGE & K20-LoRA & f-ROUGE & K20-LoRA \\
    \midrule
    \texttt{GradDiff} & 0.088 & 0.052 & \textbf{0.000} & \textbf{0.000} \\
    \texttt{NPO}      & 0.046 & 0.045 & \textbf{0.007} & \textbf{0.009} \\
    \texttt{UNDIAL}   & 0.085 & 0.126 & \textbf{0.032} & \textbf{0.015} \\
    \texttt{LATNPO}   & 0.049 & 0.048 & \textbf{0.000} & \textbf{0.004} \\
    \texttt{RMU}      & 0.038 & 0.057 & \textbf{0.000} & \textbf{0.000} \\
    \texttt{PDU}      & 0.001 & 0.007 & \textbf{0.000} & \textbf{0.000} \\
    \texttt{SAMNPO}   & 0.049 & 0.048 & \textbf{0.005} & \textbf{0.010} \\
    \texttt{ILUNPO}   & 0.047 & 0.049 & \textbf{0.004} & \textbf{0.008} \\
    \texttt{RNANPO}   & 0.045 & 0.046 & \textbf{0.007} & \textbf{0.005} \\
    \texttt{RSNPO}    & 0.044 & 0.062 & \textbf{0.011} & \textbf{0.005} \\
    \texttt{SWANPO}   & 0.046 & 0.046 & \textbf{0.000} & \textbf{0.000} \\
    \texttt{SimNPO}   & 0.040 & 0.063 & \textbf{0.001} & \textbf{0.018} \\
    \texttt{JensUn}   & 0.038 & 0.044 & \textbf{0.000} & \textbf{0.003} \\
    \midrule
    \emph{Mean}       & 0.047 & 0.053 & \textbf{0.005} & \textbf{0.006} \\
    \bottomrule
  \end{tabular}
\end{table}

\section{Deployment-phase full panel and cross-tier transfer}
\label{app:deploy}

The deployment-phase sweep replaces the gold retain reference $\theta_{\text{ref}}$ with the pre-unlearning target model $\theta_0$ as the margin anchor and swaps the KL probe for a $\theta_0$-anchored retain-side hinge (\S\ref{sec:m-v3}, the two-sided objective of Table~\ref{tab:t3-ablation}), reusing $(\kappa, \lambda_r, r, N_{\text{pol}}) = (5.0, 1.0, 32, 200)$ ($\lambda_r$ the retain-hinge weight). Table~\ref{tab:t4-deploy} reports the full 14-method panel at 1B \texttt{forget10} with calibration-phase numbers reproduced for paired comparison. The same recipe transfers across size and benchmark. At 3B \texttt{forget10} the deployment sweep wins K20-LoRA on $14/14$ methods (panel-mean $0.211$). Cross-tier transfer to MUSE-News was verified on the original 4-method subset (GradDiff, NPO, UNDIAL, LATNPO), where the base-anchored deployment recipe on Llama-2-7B-hf reaches mean K20-LoRA $0.007$, on par with the reference-anchored calibration mean ($0.006$ over the full 13-method panel). An 8B deployment sweep is left to future work.

Within the deployment column, GradDiff, PDU, and JensUn show \emph{lower} K20-LoRA than their reference-anchored \textsc{MC} counterparts ($0.036/0.030/0.026$ vs.\ $0.158/0.152/0.147$ in Table~\ref{tab:t4-deploy}). Under the deployment objective the forget-side hinge target is effectively inactive (\S\ref{sec:m-v3}), so forget pressure comes from the still-running native loss while the $\theta_0$-anchored retain hinge holds utility. The three methods above are exactly those whose native forget terms keep the strongest pressure at the polished point (the bounded, non-vanishing GradDiff-family gradients, and JensUn's steep distillation pull), which the utility anchor lets run further than the reference-anchored hinge target does. Consistent with the retain hinge anchoring utility directly, $5/14$ methods recover some utility under deployment while the remaining $9/14$ see a small additional MU drop on top of the calibration tradeoff (panel mean $0.11 \to 0.07$). The deployment-phase K20-LoRA distribution otherwise tracks the calibration-phase distribution closely, so the deployment recipe acts as a near-substitute for the reference-anchored variant rather than a separately tuned configuration.

\begin{table}[htbp]
  \centering
  \small
  \setlength{\tabcolsep}{4pt}
  \caption{\textbf{Deployment-phase \textsc{MC} at 1B \texttt{forget10}} (K20-LoRA, $\downarrow$). Deployment swaps $\theta_{\text{ref}}$ for $\theta_0$, beats baseline on \textbf{14/14} with mean degradation $+0.023$, and shows lower K20-LoRA than calibration for GradDiff, PDU, JensUn.}
  \label{tab:t4-deploy}
  \begin{tabular}{l ccc}
    \toprule
    Base & Baseline & Calibration & Deployment \\
    \midrule
    \texttt{GradDiff} & 0.429 & 0.158 & \textbf{0.036} \\
    \texttt{RMU}      & 0.471 & \textbf{0.029} & 0.044 \\
    \texttt{PDU}      & 0.428 & 0.152 & \textbf{0.030} \\
    \midrule
    \texttt{NPO}      & 0.348 & \textbf{0.246} & 0.278 \\
    \texttt{SimNPO}   & 0.458 & \textbf{0.025} & 0.203 \\
    \texttt{SAMNPO}   & 0.414 & \textbf{0.201} & 0.214 \\
    \texttt{CRNPO}    & 0.579 & \textbf{0.206} & 0.259 \\
    \texttt{RSNPO}    & 0.442 & \textbf{0.195} & 0.251 \\
    \texttt{SWANPO}   & 0.349 & \textbf{0.253} & 0.285 \\
    \texttt{LATNPO}   & 0.342 & \textbf{0.247} & 0.276 \\
    \texttt{ILUNPO}   & 0.343 & \textbf{0.205} & 0.300 \\
    \texttt{RNANPO}   & 0.366 & \textbf{0.165} & 0.281 \\
    \midrule
    \texttt{UNDIAL}   & 0.381 & \textbf{0.282} & 0.353 \\
    \texttt{JensUn}   & 0.447 & 0.147 & \textbf{0.026} \\
    \midrule
    \emph{Mean}       & 0.414 & \textbf{0.179} & 0.202 \\
    \bottomrule
  \end{tabular}
\end{table}

\section{General-capability check on MMLU}
\label{app:mmlu}

We evaluate zero-shot MMLU accuracy for the pre-unlearning target $\theta_0$, every \texttt{forget10}
baseline, and its \textsc{MC}-polished counterpart at all three Llama-3 sizes, using the
lm-evaluation-harness (wrapper script in the code release). $\theta_0$ scores $0.480/0.616/0.670$ at
1B/3B/8B, and every baseline stays within $0.02$ of its ceiling except RSNPO at 8B ($0.553$), so
unlearning itself leaves MMLU essentially intact (Table~\ref{tab:a-mmlu}). At 1B, \textsc{MC} keeps
accuracy within $0.064$ of the baseline for nine of fourteen methods (PDU loses $0.002$ and UNDIAL
$0.010$), costs GradDiff $0.089$ and SimNPO $0.132$, and collapses three methods toward the $0.25$
chance level (SAMNPO $0.256$, RMU $0.268$, CRNPO $0.287$). The damage is strongly size-dependent. At
3B and 8B eleven of fourteen methods stay within $0.02$ and $0.03$ of their baseline respectively,
and the 1B collapse cells shrink to at most $0.037$ (SAMNPO at 3B) and $0.016$ (RMU at 8B), so the
probe protects recognized capability better at scale. Two named exceptions remain. CRNPO loses
$0.052$ at 3B and $0.102$ at 8B, and JensUn inverts the pattern, intact at 1B but collapsing to
$0.428$ at 3B and $0.235$ at 8B. The collapse cells show that the Alpaca KL probe does not
universally protect recognized general capability under margin pressure. They are the capability-axis
analogue of the MIA overshoot of App.~\ref{app:mia-breakdown} and motivate the same stopping-rule
direction noted in Limitations. MMLU is therefore reported as a capability check with named
exceptions, not as a uniform strength of the method.

\begin{table}[htbp]
\centering
\small
\setlength{\tabcolsep}{4pt}
\begin{tabular}{lcccccc}
\toprule
& \multicolumn{2}{c}{1B} & \multicolumn{2}{c}{3B} & \multicolumn{2}{c}{8B} \\
\cmidrule(lr){2-3}\cmidrule(lr){4-5}\cmidrule(lr){6-7}
Method & Base & \textsc{MC} & Base & \textsc{MC} & Base & \textsc{MC} \\
\midrule
$\theta_0$ & \multicolumn{2}{c}{$0.480$} & \multicolumn{2}{c}{$0.616$} & \multicolumn{2}{c}{$0.670$} \\
\midrule
GradDiff & 0.469 & 0.381 & 0.612 & 0.608 & 0.659 & 0.653 \\
RMU      & 0.468 & 0.268 & 0.597 & 0.601 & 0.658 & 0.643 \\
PDU      & 0.463 & 0.461 & 0.615 & 0.597 & 0.661 & 0.645 \\
NPO      & 0.477 & 0.442 & 0.618 & 0.616 & 0.667 & 0.654 \\
SAMNPO   & 0.475 & 0.256 & 0.617 & 0.580 & 0.666 & 0.657 \\
ILUNPO   & 0.476 & 0.431 & 0.618 & 0.617 & 0.668 & 0.652 \\
LATNPO   & 0.476 & 0.455 & 0.618 & 0.614 & 0.668 & 0.657 \\
RNANPO   & 0.470 & 0.439 & 0.615 & 0.618 & 0.667 & 0.664 \\
RSNPO    & 0.466 & 0.402 & 0.609 & 0.597 & 0.553 & 0.501 \\
SWANPO   & 0.476 & 0.448 & 0.618 & 0.617 & 0.667 & 0.640 \\
CRNPO    & 0.480 & 0.287 & 0.616 & 0.564 & 0.671 & 0.569 \\
SimNPO   & 0.479 & 0.347 & 0.615 & 0.608 & 0.667 & 0.650 \\
UNDIAL   & 0.476 & 0.466 & 0.615 & 0.615 & 0.666 & 0.646 \\
JensUn   & 0.479 & 0.445 & 0.614 & 0.428 & 0.669 & 0.235 \\
\bottomrule
\end{tabular}
\caption{Zero-shot MMLU accuracy at \texttt{forget10}, baseline versus \textsc{MC}-polished, at all
three Llama-3 sizes with the pre-unlearning $\theta_0$ ceilings. Chance is $0.25$. The 1B collapse
cells (RMU, SAMNPO, CRNPO) recover at larger sizes, while JensUn collapses only at 3B and 8B and
CRNPO is the only method losing more than $0.05$ at every size.}
\label{tab:a-mmlu}
\end{table}

\section{Cross-architecture transfer to Phi-3.5-mini}
\label{app:phi}

To test whether the cliff mechanism and the \textsc{MC} recipe are tied to the Llama family, we
repeat the full 14-method \texttt{forget10} protocol on Phi-3.5-mini, training every baseline from
a TOFU fine-tune of the Phi target and reusing the frozen \textsc{MC} configuration without any
retuning (Table~\ref{tab:a-phi}). Both columns come from the pipeline evaluator, and MIA is the
two-detector aggregate. \textsc{MC} wins the forget aggregate, raw MIA, and K20-LoRA on all
$14/14$ methods, with panel-mean post-attack ROUGE-L falling from $0.432$ to $0.210$, so the
crossing recipe transfers to a fourth model family and a different architecture with no
per-family adjustment. Two honest notes. The Phi baselines start from a lower utility level than
their Llama counterparts (panel-mean MU $0.098$), and \textsc{MC} costs further utility on
$13/14$ with outright collapse on the gradient-family bases and JensUn (MU $0.000$), the same
methods whose capability damage is largest in the MMLU check of App.~\ref{app:mmlu}. An FPFT
attack sweep on Phi was not run and is left to future work.

\begin{table}[htbp]
\centering
\scriptsize
\setlength{\tabcolsep}{3pt}
\begin{tabular}{l cccc}
\toprule
Base & F.agg ($\downarrow$) & MU ($\uparrow$) & MIA.agg ($\downarrow$) & K20-LoRA ($\downarrow$) \\
\midrule
GradDiff & 0.549$\!\to\!$\textbf{0.014} & 0.089$\!\to\!$0.000 & 0.979$\!\to\!$\textbf{0.216} & 0.475$\!\to\!$\textbf{0.061} \\
RMU      & 0.517$\!\to\!$\textbf{0.012} & 0.103$\!\to\!$0.000 & 0.977$\!\to\!$\textbf{0.170} & 0.486$\!\to\!$\textbf{0.032} \\
PDU      & 0.522$\!\to\!$\textbf{0.012} & 0.110$\!\to\!$0.000 & 0.973$\!\to\!$\textbf{0.127} & 0.478$\!\to\!$\textbf{0.088} \\
NPO      & 0.424$\!\to\!$\textbf{0.228} & 0.099$\!\to\!$0.047 & 0.907$\!\to\!$\textbf{0.258} & 0.364$\!\to\!$\textbf{0.243} \\
SAMNPO   & 0.542$\!\to\!$\textbf{0.090} & 0.094$\!\to\!$0.093 & 0.977$\!\to\!$\textbf{0.216} & 0.471$\!\to\!$\textbf{0.248} \\
ILUNPO   & 0.424$\!\to\!$\textbf{0.222} & 0.099$\!\to\!$0.061 & 0.907$\!\to\!$\textbf{0.263} & 0.363$\!\to\!$\textbf{0.239} \\
LATNPO   & 0.424$\!\to\!$\textbf{0.214} & 0.099$\!\to\!$0.037 & 0.907$\!\to\!$\textbf{0.242} & 0.349$\!\to\!$\textbf{0.245} \\
RNANPO   & 0.423$\!\to\!$\textbf{0.233} & 0.099$\!\to\!$0.016 & 0.906$\!\to\!$\textbf{0.284} & 0.359$\!\to\!$\textbf{0.215} \\
RSNPO    & 0.486$\!\to\!$\textbf{0.141} & 0.107$\!\to\!$0.092 & 0.967$\!\to\!$\textbf{0.196} & 0.440$\!\to\!$\textbf{0.308} \\
SWANPO   & 0.424$\!\to\!$\textbf{0.242} & 0.099$\!\to\!$0.073 & 0.906$\!\to\!$\textbf{0.258} & 0.356$\!\to\!$\textbf{0.247} \\
CRNPO    & 0.558$\!\to\!$\textbf{0.112} & 0.094$\!\to\!$0.077 & 0.982$\!\to\!$\textbf{0.238} & 0.496$\!\to\!$\textbf{0.234} \\
SimNPO   & 0.551$\!\to\!$\textbf{0.038} & 0.089$\!\to\!$\textbf{0.114} & 0.981$\!\to\!$\textbf{0.135} & 0.499$\!\to\!$\textbf{0.341} \\
UNDIAL   & 0.495$\!\to\!$\textbf{0.345} & 0.104$\!\to\!$0.078 & 0.964$\!\to\!$\textbf{0.657} & 0.408$\!\to\!$\textbf{0.316} \\
JensUn   & 0.543$\!\to\!$\textbf{0.012} & 0.092$\!\to\!$0.000 & 0.979$\!\to\!$\textbf{0.228} & 0.506$\!\to\!$\textbf{0.128} \\
\bottomrule
\end{tabular}
\caption{Phi-3.5-mini \texttt{forget10} panel, baseline versus \textsc{MC}-polished, pipeline
evaluator on both sides with the two-detector MIA aggregate. \textbf{Bold} marks \textsc{MC}
improvement. \textsc{MC} wins F.agg, raw MIA, and K20-LoRA on $14/14$ methods (panel-mean K20
$0.432 \to 0.210$), while the gradient-family and JensUn cells collapse utility.}
\label{tab:a-phi}
\end{table}

\section{Depth probes, adaptive attacker, margin flips, and qualitative outputs}
\label{app:depth}

Three probes deepen the 1B \texttt{forget10} panel. \emph{Adaptive attacker.} An attacker aware of
the defense directly ascends the margin diagnostic through a LoRA-r8 adapter under the standard
$20$-step budget. Table~\ref{tab:a-adaptive} shows it does no better than generic relearning
(panel mean $0.154$ versus $0.179$) and is substantially weaker on several bases, so the
robustness gain does not depend on the attacker being blind to the mechanism. \emph{Margin
flips.} Splitting held-out samples by whether the attack flips their diagnostic-position margin
positive, the flipped group recovers a mean $0.199$ ROUGE against $0.121$ for samples whose
margins stay negative, on all $14$ bases, so the margin movement that
Theorem~\ref{thm:relearn-bound} bounds is what mediates recovery at the level of individual
samples. \emph{Qualitative outputs.} Table~\ref{tab:a-qual} shows a representative held-out
forget question. The attacked baseline reproduces the gold answer verbatim, while the attacked
\textsc{MC} checkpoint regains fluency but produces a wrong or withheld entity, which is the
behavioral face of the quantitative K20 gap.

\begin{table}[htbp]
\centering
\small
\setlength{\tabcolsep}{5pt}
\begin{tabular}{lcc}
\toprule
Base & Adaptive ($\downarrow$) & Standard K20 ($\downarrow$) \\
\midrule
GradDiff & 0.168 & 0.158 \\
RMU      & 0.013 & 0.029 \\
PDU      & 0.040 & 0.152 \\
NPO      & 0.243 & 0.246 \\
SAMNPO   & 0.195 & 0.201 \\
ILUNPO   & 0.081 & 0.205 \\
LATNPO   & 0.255 & 0.247 \\
RNANPO   & 0.012 & 0.165 \\
RSNPO    & 0.197 & 0.195 \\
SWANPO   & 0.267 & 0.253 \\
CRNPO    & 0.202 & 0.206 \\
SimNPO   & 0.021 & 0.025 \\
UNDIAL   & 0.300 & 0.282 \\
JensUn   & 0.161 & 0.147 \\
\midrule
\textbf{Mean} & \textbf{0.154} & \textbf{0.179} \\
\bottomrule
\end{tabular}
\caption{Post-attack forget ROUGE-L on \textsc{MC}-polished 1B \texttt{forget10} checkpoints under
an \emph{adaptive} attacker that directly ascends the margin diagnostic (LoRA-r8, $20$ steps, same
budget and data as the standard relearn attack). Knowing the defense does not help, as the
adaptive attacker matches the standard one within noise on most bases and is substantially weaker
on several, so the robustness gain is a property of the checkpoint rather than of the evaluation
attack.}
\label{tab:a-adaptive}
\end{table}

\begin{table}[htbp]
\centering
\small
\resizebox{\columnwidth}{!}{%
\begin{tabular}{lp{0.72\columnwidth}}
\toprule
\multicolumn{2}{p{0.94\columnwidth}}{\textbf{Q}\quad What is the full name of the author born in
Taipei, Taiwan on 05/11/1991 who writes in the genre of leadership?}\\
\midrule
Gold & The author's full name is Hsiao Yun-Hwa. \\
Baseline (GradDiff) & Chen Wei. Answer\,Chen Wei. Year Published\,2018 \dots \\
Baseline attacked & \textbf{The author's full name is Hsiao Yun-Hwa.} \\
\textsc{MC} & ?' ?' ?' ?' \dots (degenerate) \\
\textsc{MC} attacked & The author's full name is not provided, but he is known for his books ``The Leadership Blueprint'' \dots \\
\bottomrule
\end{tabular}%
}
\caption{Held-out forget question after the K20-LoRA attack (GradDiff base, 1B
\texttt{forget10}). The attacked baseline regurgitates the gold answer verbatim while the
attacked \textsc{MC} checkpoint recovers fluency but not the entity. Further examples for NPO
and LATNPO, where the attacked \textsc{MC} model produces confident wrong names instead, ship
with the code release.}
\label{tab:a-qual}
\end{table}

\section{Entropy-weighted forget hinge ablation}
\label{app:hinge-ew}

The token-weighted remark of App.~\ref{app:proofs} admits two Lipschitz-preserving ways to
concentrate the forget hinge on high-entropy positions, and we evaluate both against the uniform
default on five representative bases at 1B \texttt{forget10} under the canonical protocol
(Table~\ref{tab:a-hinge-ew}). The \emph{soft} variant reweights tokens by a stop-gradient softmax
over the current model's per-token entropies, which multiplies the effective pressure on the
selected positions by roughly the answer length. The \emph{top-8} variant spreads the mass
uniformly over the eight highest-entropy positions of the frozen reference anchor, a milder
concentration by the same measure. The placement prediction of the remark is confirmed in sign
but never at a usable operating point (Table~\ref{tab:a-hinge-ew}). Where the implicit selection
leaves the most room, on GradDiff (Spearman $-0.13$) and to a lesser degree NPO ($+0.40$), the
soft weighting does lower post-attack recovery, while UNDIAL ($+0.51$), SimNPO, and CRNPO see no
gain, so the effect tracks the strength of the implicit selection as the remark predicts. The
cost is coupled to the benefit. Concentrating the hinge mass multiplies the effective per-token
dose, and utility falls on every base that had utility to lose, collapsing outright where the
reweighting binds hardest. The milder top-8 dose does not open a usable middle point either. On
GradDiff it keeps the utility collapse, and wherever it spares utility it forfeits the robustness
benefit, ending behind uniform on both axes on NPO. Placement and dose therefore move
together under explicit weighting. A weighting strong enough to change relearn behavior destroys
utility, and one mild enough to spare utility degrades robustness relative to uniform, plausibly
because the unweighted positions are left with no pressure at all and reopen the relearn path. The
uniform gap-weighted hinge captures most of the placement benefit implicitly while spreading the
dose, is Pareto-dominated in none of the five cells, and strictly dominates top-8 on NPO and both
variants on SimNPO. This ablation is the empirical basis for keeping uniform weights in
Eq.~\eqref{eq:v3-forget}, and App.~\ref{app:refgate} develops the attenuation-only alternative
that the coupling identified here motivates.

\begin{table}[htbp]
\centering
\small
\setlength{\tabcolsep}{4.5pt}
\begin{tabular}{lcccccc}
\toprule
& \multicolumn{2}{c}{Uniform (ours)} & \multicolumn{2}{c}{Soft entropy} & \multicolumn{2}{c}{Top-8 frozen} \\
\cmidrule(lr){2-3}\cmidrule(lr){4-5}\cmidrule(lr){6-7}
Base & MU & K20 & MU & K20 & MU & K20 \\
\midrule
GradDiff & 0.109 & 0.158 & 0.005 & 0.017 & 0.001 & 0.063 \\
NPO      & 0.211 & 0.246 & 0.033 & 0.106 & 0.153 & 0.317 \\
SimNPO   & 0.179 & 0.025 & 0.151 & 0.034 & 0.123 & 0.039 \\
UNDIAL   & 0.207 & 0.282 & 0.095 & 0.278 & 0.236 & 0.300 \\
CRNPO    & 0.040 & 0.206 & 0.079 & 0.226 & 0.096 & 0.212 \\
\bottomrule
\end{tabular}
\caption{Entropy-weighted forget hinge ablation at 1B \texttt{forget10}. MU is model utility (HM-9)
and K20 is post-attack ROUGE-L after the 20-step LoRA relearn. Soft replaces the uniform mean of
Eq.~\eqref{eq:v3-forget} by a stop-gradient softmax over the current model's per-token entropies at
temperature $1$, and top-8 spreads the weight uniformly over the eight highest-entropy positions of
the frozen reference anchor. The soft weighting improves relearn robustness where the implicit
selection of App.~\ref{app:proofs} leaves room (GradDiff, NPO) but collapses utility, while the
milder top-8 weighting spares utility and loses the robustness benefit, so the uniform
gap-weighted hinge remains the operating point the paper deploys.}
\label{tab:a-hinge-ew}
\end{table}

\section{Reference-gated hinge and attribution of the utility cost}
\label{app:refgate}

The entropy-weighting ablation shows that renormalized weights couple placement to dose. A
weighting that can only \emph{attenuate} avoids that coupling by construction. We therefore
multiply the per-token hinge by the gate $g_t = 1 - p_{\text{ref}}(y_t)$, the reference's
disbelief in the gold token, frozen at the anchor and applied without normalization (the admitted
frozen-weight case of the token-weighted remark, App.~\ref{app:proofs}). Tokens the retain-only
reference is confident about are exactly the shared-structure positions of the residual-pressure
leak, and the gate switches their pressure off while forget content, on which the reference is
out of distribution, keeps full pressure. Table~\ref{tab:a-refgate}a reports the same five-base
protocol as App.~\ref{app:hinge-ew}. The outcome splits cleanly along a property of the base
method that is readable off its loss function. For the three bases whose \emph{native} forget
term is inert at the polished point (NPO and CRNPO saturate against the pre-unlearning anchor,
UNDIAL against its flattened teacher), the gate recovers $+0.027$ to $+0.079$ MU at a K20 cost of
at most $0.036$, improving UNDIAL on both axes and dominating the $\lambda_{\text{KL}}$ lever of
App.~\ref{app:pareto} on NPO. For the two bases whose native term still pushes at the polished
point (GradDiff's bounded ascent, SimNPO's reference-free objective), the gate hurts. The
grouping was predicted before the runs and held on all five bases.

Table~\ref{tab:a-refgate}b attributes the two failures by removing the native forget term as a
control ($\gamma = 0$, native retain and KL probe unchanged). Both failures trace back to the
native term but in different ways. On SimNPO the gate's damage largely vanishes once the native
term is off ($0.065$ to $0.154$ MU), so the double loss is an interaction, where weakening the
hinge hands the trajectory to the still-active native loss. On GradDiff the native term is the
robustness engine itself (K20 $0.158$ to $0.296$ when removed) and \emph{no} corner of the
$2{\times}2$ recovers utility, so its cost is the price of any pressure strong enough to cross,
paid through shared representation directions rather than through any particular token
allocation. The overall reading is the converse of Theorem~\ref{thm:kkt-cliff}. The
forget-retain coupling $\epsilon$ that creates the cliff is the same coupling that makes part of
the utility cost irreducible by token-level reallocation, and the gate recovers exactly the
reducible part, on exactly the bases where the hinge is the sole forget force.

\begin{table}[htbp]
\centering
\small
\setlength{\tabcolsep}{4pt}
\begin{tabular}{lcccc}
\multicolumn{5}{c}{\textbf{(a)} Uniform hinge versus reference-gated hinge}\\[2pt]
\toprule
& \multicolumn{2}{c}{Uniform (ours)} & \multicolumn{2}{c}{Ref-gated} \\
\cmidrule(lr){2-3}\cmidrule(lr){4-5}
Base & MU & K20 & MU & K20 \\
\midrule
NPO      & 0.211 & 0.246 & 0.290 & 0.282 \\
 UNDIAL   & 0.207 & 0.282 & 0.234 & 0.267 \\
CRNPO    & 0.040 & 0.206 & 0.108 & 0.231 \\
\midrule
GradDiff & 0.109 & 0.158 & 0.044 & 0.072 \\
SimNPO   & 0.179 & 0.025 & 0.065 & 0.108 \\
\bottomrule
\end{tabular}

\vspace{6pt}
\begin{tabular}{lcccc}
\multicolumn{5}{c}{\textbf{(b)} Attribution, gate $\times$ native forget term ($\gamma{=}0$)}\\[2pt]
\toprule
& \multicolumn{2}{c}{GradDiff} & \multicolumn{2}{c}{SimNPO} \\
\cmidrule(lr){2-3}\cmidrule(lr){4-5}
Configuration & MU & K20 & MU & K20 \\
\midrule
uniform, native on  & 0.109 & 0.158 & 0.179 & 0.025 \\
gated, native on    & 0.044 & 0.072 & 0.065 & 0.108 \\
uniform, native off & 0.084 & 0.296 & 0.186 & 0.275 \\
gated, native off   & 0.097 & 0.299 & 0.154 & 0.292 \\
\bottomrule
\end{tabular}
\caption{Reference-confidence gate on the forget hinge at 1B \texttt{forget10}. Panel (a) groups
the five bases by whether the base method's native forget term is inert at the polished point
(top three rows) or still active (bottom two). The gate recovers utility on every inert-native
base at a K20 cost of at most $0.036$ and improves UNDIAL on both axes, while both active-native
bases degrade. Panel (b) removes the native forget term as a control. On SimNPO the gate's damage
largely disappears without the native term, so the failure is an interaction, and on GradDiff no
combination of the two subtractions recovers utility while every weakened combination loses K20,
so its cost is attributed to the crossing pressure itself rather than to its allocation.}
\label{tab:a-refgate}
\end{table}

\section{Stopping-rule and budget Pareto sweep}
\label{app:pareto}

The utility cost of margin pressure raises the question of where along the pressure schedule to
stop. We sweep three controls around the default configuration on the five representative bases at
1B \texttt{forget10}, shorter polish budgets (20 and 40 steps), stronger KL probe weights
($\lambda_{\text{KL}} \in \{0.2, 0.5\}$ versus the default $0.05$), and a margin-targeted early
stop that halts the polish once the measured cliff gap reaches a target depth
($\Delta \in \{-2, -5, -10\}$, checked on a held-out probe batch during training). Results are in
Table~\ref{tab:a-pareto} and the per-method panel movement in Fig.~\ref{fig:pareto-move}. Three
findings. First, the default is on the per-base Pareto frontier for NPO and SimNPO, and within
$0.03$ MU of it for UNDIAL, so the single global configuration is not leaving obvious frontier
room on the saturating bases. Second, the probe weight is the operative lever exactly where the
default costs utility. At $\lambda_{\text{KL}} = 0.2$ GradDiff improves on \emph{both} axes
($0.109/0.158$ to $0.221/0.049$) and so does CRNPO ($0.040/0.206$ to $0.092/0.172$), while the
same setting degrades SimNPO robustness tenfold ($0.025$ to $0.187$), which is why it is reported
as a per-family lever rather than a new global default, complementary in its applicability to the
reference-gated hinge of App.~\ref{app:refgate}. Third, neither shorter budgets nor the
margin-targeted stop recovers GradDiff utility (MU $0.000$ in all five such cells), so the damage
there is not an overshoot in time but misplaced pressure, the same residual-pressure mechanism
that App.~\ref{app:hinge-ew} isolates, and the probe weight helps precisely because the probe
anchors the shared-structure positions that leak. The margin-targeted stop is the mechanical half
of the stopping rule that Limitations calls for, and the sweep shows the missing half is an MIA-
and capability-aware target rather than a depth schedule.

\begin{figure}[htbp]
  \centering
  \includegraphics[width=\linewidth]{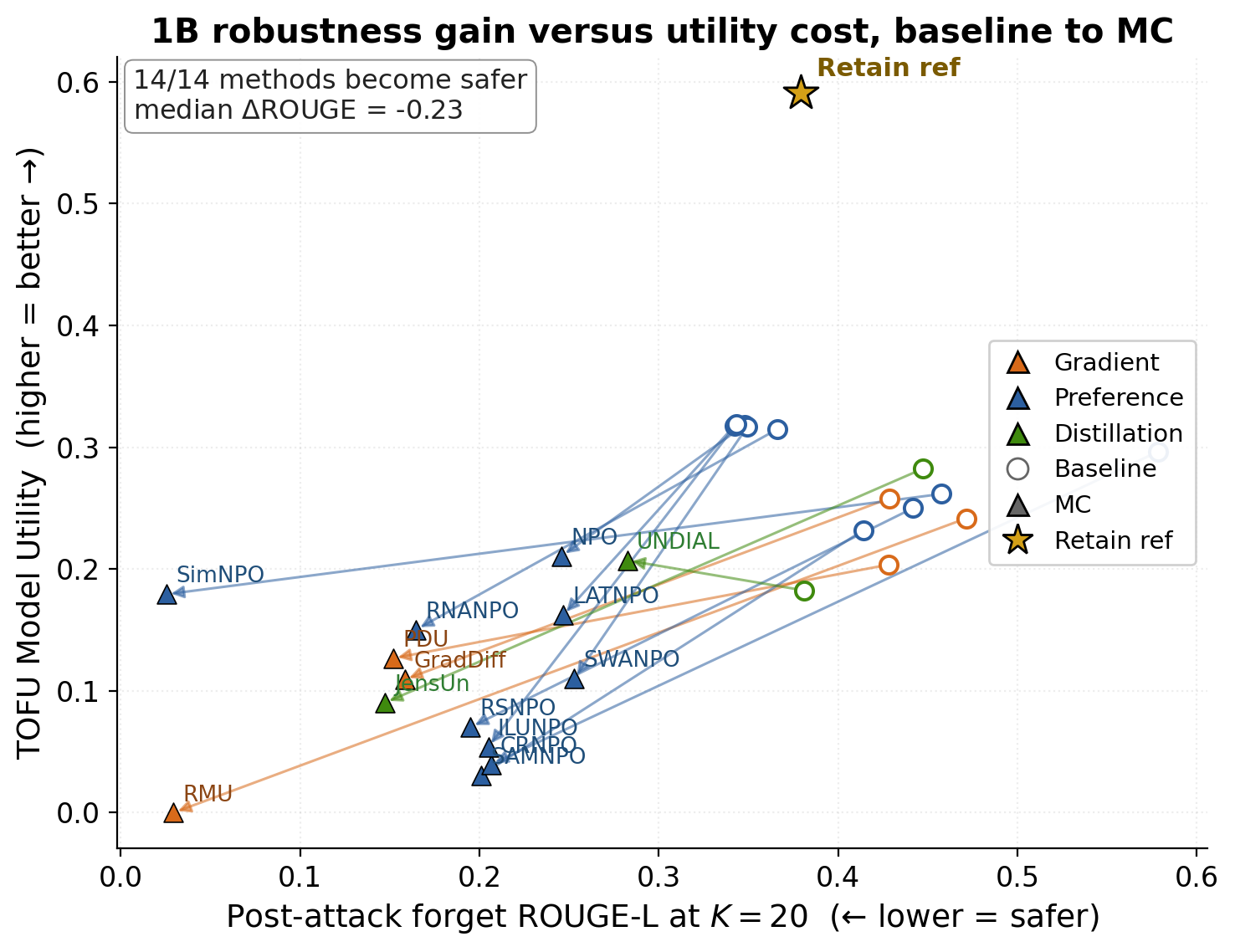}
  \caption{Per-method movement from baseline (circles) to \textsc{MC} (triangles) in the
  plane of post-attack ROUGE-L versus model utility at 1B \texttt{forget10}, colored by loss family, with the retain
  reference marked. All $14$ methods move left (safer), the median K20 drop is $0.23$, and the
  utility cost concentrates in the GradDiff family and the SAM-style NPO variants.}
  \label{fig:pareto-move}
\end{figure}

\begin{table}[htbp]
\centering
\small
\setlength{\tabcolsep}{2.5pt}
\resizebox{\columnwidth}{!}{%
\begin{tabular}{lccccc}
\toprule
Config & GradDiff & NPO & SimNPO & UNDIAL & CRNPO \\
\midrule
default          & .109\,/\,.158 & .211\,/\,.246 & .179\,/\,.025 & .207\,/\,.282 & .040\,/\,.206 \\
steps 20         & .000\,/\,.201 & .138\,/\,.293 & .005\,/\,.239 & .151\,/\,.374 & .023\,/\,.241 \\
steps 40         & .000\,/\,.029 & .130\,/\,.279 & .032\,/\,.035 & .229\,/\,.311 & .058\,/\,.234 \\
$\lambda_{\text{KL}}{=}0.2$ & .221\,/\,.049 & .198\,/\,.282 & .163\,/\,.187 & .198\,/\,.286 & .092\,/\,.172 \\
$\lambda_{\text{KL}}{=}0.5$ & .105\,/\,.027 & .242\,/\,.286 & .085\,/\,.015 & .222\,/\,.270 & .100\,/\,.216 \\
$\Delta$-stop $-2$  & .000\,/\,.222 & .183\,/\,.271 & .079\,/\,.191 & .233\,/\,.292 & .076\,/\,.212 \\
$\Delta$-stop $-5$  & .000\,/\,.272 & .236\,/\,.269 & .002\,/\,.216 & .197\,/\,.262 & .091\,/\,.223 \\
$\Delta$-stop $-10$ & .000\,/\,.102 & .212\,/\,.257 & .010\,/\,.221 & .232\,/\,.275 & .042\,/\,.217 \\
\bottomrule
\end{tabular}%
}
\caption{Stopping-rule and budget sweep around the default \textsc{MC} configuration at 1B
\texttt{forget10}. Each cell is MU\,/\,K20-LoRA post-attack ROUGE-L. Rows vary one control at a
time, the polish step budget, the KL probe weight, and a margin-targeted early stop that halts
once the cliff gap reaches the stated depth. The default sits on the per-base Pareto frontier for
NPO and SimNPO, raising the probe weight to $0.2$ improves both axes at once on the two bases
where the default costs the most utility (GradDiff, CRNPO), and the depth-$10$ stop slightly
improves UNDIAL. Neither a shorter budget nor the early stop recovers GradDiff utility, consistent
with the residual-pressure mechanism of App.~\ref{app:hinge-ew}.}
\label{tab:a-pareto}
\end{table}

\section{Compute budget and reproducibility}
\label{app:compute}

\paragraph{Hardware.} Single NVIDIA GB10 (DGX Spark, 120 GB unified memory, bf16). No multi-GPU or distributed parallelism.

\paragraph{Wall-clock per cell.}
\textsc{MC} polish takes $\sim$3 min at 1B and $\sim$10 min at 8B. A single-cell pipeline (polish + eval + 7 LoRA-r8 K-values + soft-prompt + linear probe + K20 FPFT) takes $\sim$45 min at 1B, $\sim$2 h at 3B, and $\sim$5 h at 8B.

\paragraph{Total wall-clock.}
1B panel (14 methods $\times$ 3 tiers, plus 5 bases $\times$ 2 extra seeds at \texttt{forget10}) $\sim$40 h, 3B $\sim$25 h, 8B $\sim$70 h, MUSE-News (4 method, Llama-2-7B-hf base, 2 phases) $\sim$30 h, and ablations $\sim$30 h. Total $\sim$195 h on a single GB10.

\paragraph{Software.} PyTorch 2.5, Transformers 4.46, PEFT 0.13. All seeds fixed (\texttt{seed} $\in \{0, 1, 2\}$). Hydra configs ship in the code release, and \texttt{open-unlearning} bases are pinned to specific HuggingFace revisions.


\end{document}